\theoremstyle{plain}
\newtheorem{theorem}{Theorem}[section]
\newtheorem{lemma}[theorem]{Lemma}
\newtheorem{corollary}[theorem]{Corollary}
\newtheorem{definition}[theorem]{Definition}
\theoremstyle{remark}
\newtheorem{remark}[theorem]{Remark}
\title{Bayesian-guided Label Mapping for Visual Reprogramming}
\author{%
  Chengyi Cai$^{1}$ \quad Zesheng Ye$^{1}$ \quad Lei Feng$^{2}$ \quad Jianzhong Qi$^{1}$ \quad Feng Liu$^{1}$\thanks{Correspondence to Feng Liu (fengliu.ml@gmail.com)} \\
  $^1$The University of Melbourne \quad$^2$Singapore University of Technology and Design\\
  \texttt{\{chengyi.cai1,zesheng.ye,jianzhong.qi\}@unimelb.edu.au} \\
  \texttt{feng\_lei@sutd.edu.sg} \quad
  \texttt{fengliu.ml@gmail.com}
  % examples of more authors
  % \And
  % Coauthor \\
  % Affiliation \\
  % Address \\
  % \texttt{email} \\
  % \AND
  % Coauthor \\
  % Affiliation \\
  % Address \\
  % \texttt{email} \\
  % \And
  % Coauthor \\
  % Affiliation \\
  % Address \\
  % \texttt{email} \\
  % \And
  % Coauthor \\
  % Affiliation \\
  % Address \\
  % \texttt{email} \\
}
\begin{document}

\maketitle

\begin{abstract}

\textit{Visual reprogramming} (VR) leverages the intrinsic capabilities of pretrained vision models by adapting their input or output interfaces to solve downstream tasks whose labels (i.e., downstream labels) might be totally different from the labels associated with the pretrained models (i.e., pretrained labels). 
When adapting the output interface, label mapping methods transform the pretrained labels to downstream labels by establishing a gradient-free one-to-one correspondence between the two sets of labels.
However, in this paper, we reveal that one-to-one mappings may overlook the complex relationship between pretrained and downstream labels. Motivated by this observation, we propose a \textit{\textbf{B}ayesian-guided \textbf{L}abel \textbf{M}apping} (BLM) method. 
BLM constructs an iteratively-updated probabilistic label mapping matrix, with each element quantifying a pairwise relationship between pretrained and downstream labels.
The assignment of values to the constructed matrix is guided by Bayesian conditional probability, considering the joint distribution of the downstream labels and the labels predicted by the pretrained model on downstream samples. Experiments conducted on both pretrained vision models (e.g., ResNeXt) and vision-language models (e.g., CLIP) demonstrate the superior performance of BLM over existing label mapping methods. The success of BLM also offers a probabilistic lens through which to understand and analyze the effectiveness of VR.
Our code is available at \url{https://github.com/tmlr-group/BayesianLM}.

\end{abstract}

\section{Introduction}
Repurposing pretrained models from data-rich domains~\citep{chen2024understanding, kossen2024three, xu2024towards} has emerged as an effective strategy to address downstream tasks without re-training a task-specific model.
For visual tasks, \textit{visual reprogramming} (VR) \citep{cai2024sample, chen2023understanding, tsao2024autovp, wang2022watermarking}--also called adversarial reprogramming \citep{elsayed2018adversarial,neekhara2022cross, tsai2020transfer}--repurposes a pretrained model for downstream tasks without changing the model. In particular, VR (full task setup detailed in Appendix \ref{app:prob}) modifies the model's input interface by adding trainable noise patterns to the images of downstream tasks. 
Since pretrained and downstream tasks typically have distinct label spaces, a \textit{label mapping} (LM) function is needed to map outputs of the pretrained models to downstream labels. 
Often, existing VR methods adopt a gradient-free one-to-one LM \citep{chen2023understanding, elsayed2018adversarial,tsai2020transfer}, avoiding the computational cost of training fully-connected output layers through backpropagation.

\begin{figure}[t]
    \centering
    \includegraphics[width=0.9\textwidth]{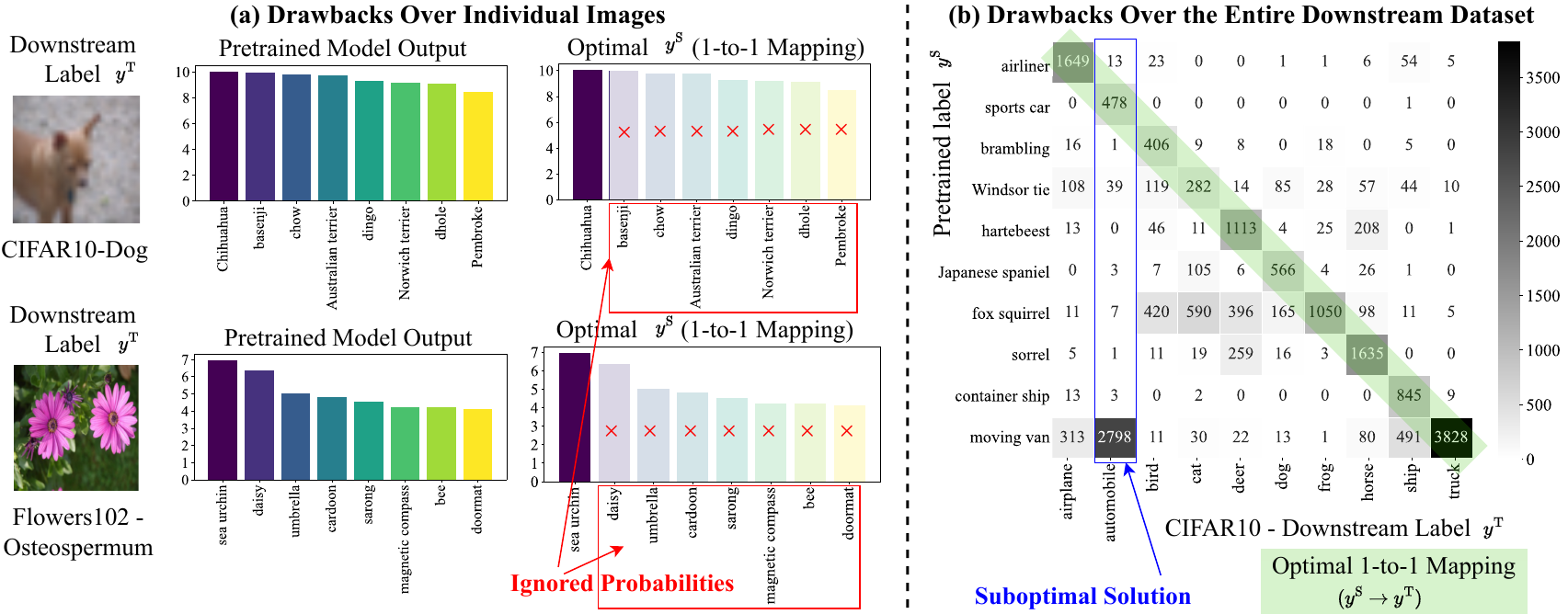}
    \caption{Drawbacks of one-to-one LM from the perspectives of (a) individual images and (b) the entire dataset. An ImageNet-pretrained classifier is reused in downstream tasks. In (a), images `Dog' and `Osteospermum' from downstream tasks are mapped into only one pretrained label, respectively, ignoring other probabilities. In (b), the distribution of [predicted pretrained label $y^{\rm S}$, ground-truth downstream label $y^{\rm T}$] pairs reveals the existence of suboptimal solutions, where `Automobile' cannot be paired with the optimal pretrained label `Moving Van', which has already been mapped to `Truck'.}
    \vspace{-0.7cm}
    \label{fig:motivation}
\end{figure}

However, we find that a one-to-one LM overlooks the complex many-to-many relationship between pretrained and downstream labels, which may limit the performance of VR. In Figure \ref{fig:motivation}, we repurpose a model pretrained on ImageNet \citep{russakovsky2015imagenet} for downstream classification tasks using a one-to-one LM strategy \citep{chen2023understanding}, and present statistical results. The two subfigures Figure \ref{fig:motivation}a and Figure \ref{fig:motivation}b, illustrate drawbacks from the perspectives of individual images and the entire dataset, respectively.
Figure \ref{fig:motivation}a shows the distribution of logits (i.e., model output before the softmax layer) for the most likely predicted pretrained labels of two images from downstream tasks: a `Dog' image from CIFAR10 \citep{cifar10} and an `Osteospermum' image from Flowers102 \citep{flowers102}.
For the `Dog' image, multiple pretrained labels like `Chihuahua', `Basenji'--\textit{subclasses} of dogs--receive high logits. Similarly, for the `Osteospermum' image, pretrained labels such as `Sea Urchin', `Daisy', which \textit{share similar features}, also score high.
Despite these connections, the one-to-one LM retains only the label with the highest logit, suggesting the \textit{probabilities of other related labels are ignored}.
Figure \ref{fig:motivation}b shows the frequency distribution of the predicted pretrained labels and the ground-truth downstream labels of downstream samples, with the diagonal representing the results derived from one-to-one LM. The `Automobile' class from CIFAR10, for example, can no longer be paired with the optimal pretrained label `Moving Van', which has already been greedily mapped to the label `Truck', implying \textit{suboptimal label assignments}.

The above observation motivates us to go beyond these binary mappings.
In Section \ref{sec:theory}, we replace the one-to-one LM function with a probabilistic LM matrix. Each matrix element is a real number that quantifies the relationship between a pretrained label and a downstream label, updated iteratively during VR optimization.
This allows predictions for each downstream sample to consider diverse contributions from all pretrained labels, enabling a flexible many-to-many mapping strategy.

Specifically, we present \textit{Bayesian-guided label mapping} (BLM) in Section \ref{sec:method}, which assigns values to elements in the probabilistic LM matrix based on Bayesian conditional probabilities, derived from the joint distribution of the predicted pretrained labels on downstream tasks and the ground-truth downstream labels. We further extend BLM to BLM+, which aggregates {\it top-$K$ predicted probabilities} instead of using a single predicted label when estimating the joint distribution, accounting for uncertainty in the predictions.
We also provide a theoretical analysis that justifies the potential of probabilistic many-to-many LM to outperform deterministic one-to-one LM.

To show the effectiveness of BLM, experiments are conducted on 12 widely used datasets, with BLM and BLM+ being applied to different input VR methods--padding and watermarking--on pretrained ResNet and ResNeXt (see Section \ref{sec:experiments}). The ablation study and parameter analysis are also included, along with visualization results and discussions of why VR is effective. BLM and BLM+ are also applied to vision-language models (see Appendix \ref{app:vl}) to demonstrate their general applicability.

In summary, both theoretical analysis and empirical findings (Tables \ref{Tab:padding}-\ref{tab:watermark}) provide compelling evidence that BLM and BLM+, grounded in Bayesian principles, facilitate VR to leverage pretrained knowledge for diverse downstream tasks. Beyond performance improvement, BLM and BLM+ offer insights into understanding the effectiveness of VR (Figures \ref{fig:result}-\ref{fig:iteration}): revealing the relations between pretrained and downstream label spaces may guide future studies into more interpretable VR methods.

\section{Related Works}
\textbf{Model Reprogramming.}
Among cutting-edge transfer learning methods (see Appendix~\ref{app:transfer}), model reprogramming introduces an efficient learning framework for adapting models pretrained on large-scale data to downstream tasks constrained by limited resources \citep{chen2024model}. By changing the input or output interfaces (i.e., input or output space) purposefully, while preserving the integrity of the pretrained model, knowledge can be reused on new tasks, sidestepping exhaustive finetuning of the model.

Many recent studies focus on repurposing diverse pretrained models for downstream tasks, including pretrained vision models \citep{bahng2022exploring,chen2023understanding,neekhara2022cross,tsai2020transfer,tsao2024autovp,wang2022watermarking} such as ResNet \citep{resnet} and ViT \citep{ViT}, language models \citep{hambardzumyan2021warp,vinod2020reprogramming} such as BERT \citep{kenton2019bert}, acoustic \citep{hung2023low,yang2023english,yang2021voice2series} and graph models \citep{jing2023deep}. Such repurposing encompasses several types: cross-modal (e.g., from voice to time-series \citep{yang2021voice2series}, or vision to text \citep{neekhara2022cross}), different tasks within the same modality (e.g., from image classification to out-of-distribution detection \citep{wang2022watermarking}), and different domains within the same task (e.g., from ImageNet to medical images \citep{tsai2020transfer}).

\textbf{Prompting and Input VR.}
Prompting incorporates meticulously designed prompts (additional parameters) into pretrained models with specific architectures to utilize pretrained models in downstream tasks. Leveraging ViT, VPT \citep{jia2022visual} integrates prompts alongside image embeddings, while EEVPT \citep{han20232} further enhances VPT by embedding parameters within self-attention layers. TransHP \citep{wang2023transhp} additionally learns prompt tokens to encode coarse image categories. In vision-language models such as CLIP \citep{radford2021learning}, besides text-prompting methods such as CoOP \citep{zhou2022learning} and CoCoOP \citep{zhou2022conditional}, models like MaPLe \citep{khattak2023maple} also learn layer-specific mapping functions that bridge vision and text.

Slightly different from prompt tuning, input VR offers a model-agnostic approach by introducing trainable noise to images in the input space before feeding those images into pretrained models. This process does not impact the visual effect of the images. Two prevalent techniques are padding-based VR and watermarking-based VR. Padding-based models \citep{chen2023understanding,elsayed2018adversarial,tsai2020transfer,tsao2024autovp} preserve the integrity of images while introducing trainable noise patterns to the outer frames around images, whereas watermarking-based models \citep{bahng2022exploring,cai2024sample,oh2023blackvip,wang2022watermarking} train noise patterns that overlay the images.

\textbf{Output Mapping for VR.}
% Deep-Learning-based
Because pretrained labels and downstream labels are often different, relying solely on input VR may be insufficient for downstream tasks. 
To bridge this gap, output mapping methods are introduced to facilitate alignment between different label spaces. Mainstream approaches include deep learning-based and statistical inference-based (i.e., gradient-free) LM methods. Deep learning-based methods insert a learnable fully connected layer to connect pretrained and downstream labels \citep{kloberdanz2021improved,tsao2024autovp}. However, for tasks with large label spaces, the additional model layers would result in extra training costs, potentially canceling the efficiency advantages of VR.

As for gradient-free LM methods, \textit{random label mapping} (RLM) \citep{elsayed2018adversarial} establishes mappings between an equal number of randomly selected pretrained labels and downstream labels, masking out other unused ones. \textit{Frequent label mapping} (FLM) \citep{tsai2020transfer} selects optimal one-to-one mappings using a greedy approach based on the number of pairs between pretrained and downstream labels. \textit{Iterative label mapping} (ILM) \citep{chen2023understanding} extends FLM by updating mappings at each epoch, refining the output label mapping as input VR patterns evolve. As depicted in Figure \ref{fig:motivation}, these one-to-one mappings \textit{overlook potential probabilities} and lead to \textit{suboptimal solutions}. We propose BLM to address these issues.

\vspace{-0.15cm}
\section{Problem Formulation}
\label{sec:theory}

\textbf{Problem Setup}.
Consider a pretrained task with input and output variables $X^{\rm S}$ and $Y^{\rm S}$, jointly defined over $\mathcal{X}^{\rm S} \times \mathcal{Y}^{\rm S}$, where $\mathcal{X}^{\rm S} \subseteq \mathbb{R}^{d_{\rm S}}$ has the input dimensionality $d_{\rm S}$ and $\mathcal{Y}^{\rm S}=\{1, \dots, k_{\rm S}\}$.
We have a pretrained classifier $f_{\rm pre}: \mathcal{X}^{\rm S} \mapsto \mathbb{R}^{k_{\rm S}}$ producing a logits vector $f_{\rm pre}(x^{\rm S}) \in \mathbb{R}^{k_{\rm S}}$ for each $x^{\rm S} \in \mathcal{X}^{\rm S}$.
For a downstream task with input and output variables $X^{\rm T}$ and $Y^{\rm T}$ defined over $\mathcal{X}^{\rm T} \times \mathcal{Y}^{\rm T}$, where $\mathcal{X}^{\rm T} \subseteq \mathbb{R}^{d_{\rm T}}$ has the input dimensionality $d_{\rm T}$ and $\mathcal{Y}^{\rm T}=\{1, \dots, k_{\rm T}\}$, VR seeks to adapt $f_{\rm pre}$ to the downstream task without modifying its parameters. 
To achieve this, VR introduces two functions: 1) input VR function $f_{\rm in}(\cdot | \theta): \mathcal{X^{\rm T}} \mapsto {\mathcal{X}}^{\rm S}$ with learnable parameters $\theta$ that converts downstream inputs for compatibility with $f_{\rm pre}$; and 2) output LM function $f_{\rm out}^{\omega}(\cdot): \mathbb{R}^{k_{\rm S}} \mapsto \mathbb{R}^{k_{\rm T}}$ that aligns the output logits of $f_{\rm pre}$ with the downstream label space by a transformation $\omega$.
% eventually maximizing the conditional probability $p(Y^{\rm T} | X^{\rm T})$.
Concretely, given a training dataset $\mathcal{D}^{\rm T} = \{ (x_i^{\rm T}, y_i^{\rm T}) \}_{i=1}^{n}$ with $n$ training samples drawn from $\mathcal{X}^{\rm T} \times \mathcal{Y}^{\rm T}$ for the downstream task, the training objective of VR can be formulated as:
\begin{align}\label{eq:vp_obj}
    \mathop{\min}\limits_{\theta\in\Theta} \frac{1}{n}\sum_{i=1}^n \ell(y^{\rm T}_i, (f_{\rm out}^{\omega} \circ f_{\rm pre} \circ f_{\rm in})(x_i^{\rm T}; \theta)),
\end{align}
where $\ell$ is a loss function, and $f_{\rm out}^{\omega} \circ f_{\rm pre} \circ f_{\rm in}$ denotes the composition of input VR, pretrained model and output LM.
In this study, we focus on {\it gradient-free} LM, where $f_{\rm out}^{\omega}$ does not introduce additional trainable parameters but strategically leverages $f_{\rm in}$ and $f_{\rm pre}$ to determine $\omega$.

\textbf{Modeling Existing LM}.
As mentioned, $f_{\rm out}^{\omega}$ serves to find a mapping between each $y^{\rm S} \in \mathcal{Y}^{\rm S}$ and $y^{\rm T} \in \mathcal{Y}^{\rm T}$.
This can be achieved by constructing an output label transformation $\omega$ such that for each downstream sample $x_i^{\rm T}$, its label $\hat{y}_{i}^{\rm T}$ is predicted by $\arg \max \text{softmax}(\tilde{y}_{i}^{\rm T})$, with:
\begin{equation}
\label{eq:lm_true}
    \tilde{y}_{i}^{\rm T} \equiv 
    \begin{bmatrix}
        \tilde{y}_{i}^{1} \\ \vdots \\\tilde{y}_{i}^{k_{\rm T}}
    \end{bmatrix} = f(x_i^{\rm T})^{\top} \cdot \omega = 
    \begin{bmatrix}
        f(x_i^{\rm T})_{1} & \dots & f(x_i^{\rm T})_{k_{\rm S}}
    \end{bmatrix}
    \begin{bmatrix}
        \omega_{1, 1} & \dots & \omega_{1, k_{\rm T}} \\
        \vdots & \ddots & \vdots \\
        \omega_{k_{\rm S}, 1} & \dots & \omega_{k_{\rm S}, k_{\rm T}}
    \end{bmatrix},
\end{equation}
where $f(x_i^{\rm T})$ is shorthand for $(f_{\rm pre} \circ f_{\rm in})(x_i^{\rm T}; \theta)$. $\omega$ can be updated iteratively \citep{chen2023understanding} with input VR.

A deterministic one-to-one relation between $\mathcal{Y}^{\rm S}$ and $\mathcal{Y}^{\rm T}$ implies only a {\it single} ``correct'' $y^{\rm S} \in \mathcal{Y}^{\rm S}$ exists for each $y^{\rm T} \in \mathcal{Y}^{\rm T}$. 
Formally, $\omega$ in Eq.~\eqref{eq:lm_true} is a binary matrix, where just a {\it single} element $\omega_{j, k}$ is set to 1 in each column of $\omega$ (i.e., $\omega \in \{0,1\}^{k_{\rm S} \times k_{\rm T}}$ satisfying $\sum_{j=1}^{k_{\rm S}} \omega_{j,\cdot} = 1$).

\textbf{Our Probabilistic LM.}
Considering aforementioned drawbacks of one-to-one mappings, we propose a probabilistic LM for VR, assigning real values to all elements in $\omega$ (i.e., $\omega \in [0,1]^{k_{\rm S} \times k_{\rm T}}$ satisfying $\sum_{j=1}^{k_{\rm S}} \omega_{j,\cdot} = 1$). 
Each element $\omega_{y^{\rm S}, y^{\rm T}}$ quantifies the relationship between $y^{\rm S} \in {\mathcal{Y}^{\rm S}}$ and $y^{\rm T}\in {\mathcal{Y}^{\rm T}}$.
This acknowledges contributions from all pretrained labels for the prediction of downstream samples. The flexible many-to-many LM implies the inherent complexity in label correspondence.
In Section \ref{sec:method}, we investigate how to assign values to our probabilistic LM based on Bayes’ theorem.

\section{Bayesian-guided Probabilistic Label Mapping (BLM)}
\label{sec:method}
\subsection{Method Demonstration}
\textbf{Interpreting $p(Y^{\rm T} | X^{\rm T})$}.
The objective of VR is to maximize $p(Y^{\rm T} | X^{\rm T})$ defined over the downstream task space.
By using the law of total probability, we can express $p(Y^{\rm T} | X^{\rm T})$ as:
\begin{equation}
\label{eq:cp_true}
    p(Y^{\rm T} | X^{\rm T}) = \sum\nolimits_{y^{\rm S} \in \mathcal{Y}^{\rm S}} p(Y^{\rm S} = y^{\rm S} | X^{\rm T}) \, p(Y^{\rm T} | Y^{\rm S} = y^{\rm S}, X^{\rm T}).
\end{equation}
Mirroring the structure of Eq.~\eqref{eq:lm_true}, Eq.~\eqref{eq:cp_true} enables us to estimate $p(Y^{\rm T} | X^{\rm T})$ with the {\it i.i.d} observations $\mathcal{D}^{\rm T} = \{ (x_i^{\rm T}, y_i^{\rm T}) \}_{i=1}^{n}$ of the downstream task,
\begin{equation}
\label{eq:cp_emp}
\hat{p}(Y^{\rm T} | X^{\rm T} ) = \frac{1}{n} \sum_{i=1}^{n} \left( \sum_{y^{\rm S} \in \mathcal{Y}^{\rm S}} \underbrace{p (Y^{\rm S} = y^{\rm S} | X^{\rm T} = x_i^{\rm T})}_{ \text{\ding{192} input VR:} (f_{\rm pre} \circ f_{\rm in})(x^{\rm T}_i; \theta) }  \underbrace{p (Y^{\rm T} = y_i^{\rm T} | Y^{\rm S} = y^{\rm S}, X^{\rm T} = x_i^{\rm T})}_{ \text{\ding{193} output LM:} f^{\omega, y^{\rm S}}_{\rm out}} \right),
\end{equation}
where \ding{192} denotes the predicted probability of pretrained label $y^{\rm S}$ for input $x_i^T$, obtained from $f_{\rm pre} \circ f_{\rm in}$.
Essentially, \ding{192} can be viewed as the standard input VR and is orthogonal to the LM methods employed; 
\ding{193} represents the probability that the true downstream label $y_i^{\rm T}$ is mapped from the predicted $y^{\rm S}$ and input $x_i^{\rm T}$, which amounts to estimating the output label transformation $\omega \in [0,1]^{k_{\rm S} \times  k_{\rm T}}$.
Since \ding{192} is independent of output LM, the focus now shifts to estimating \ding{193}.

\textbf{Estimating $\omega_{y^{\rm S}, y^{\rm T}}$ Using Conditional Probability}.
Since $\omega_{y^{\rm S}, y^{\rm T}}$ is used to quantify the contributions from pretrained label $y^{\rm S}$ to downstream label $y^{\rm T}$, we can associate it with the conditional probability:
\begin{equation}
        p(Y^{\rm T}= y^{\rm T} | Y^{\rm S} = y^{\rm S}, X^{\rm T})  = \frac{p(Y^{\rm T}= y^{\rm T}, Y^{\rm S} = y^{\rm S} | X^{\rm T})}{ p(Y^{\rm S} = y^{\rm S} | X^{\rm T})}. \label{eq:cp_pre_expand}
\end{equation}
By applying $f_{\rm pre} \circ f_{\rm in}$ to $\mathcal{D}^{\rm T}$, we can empirically estimate the \textit{joint distribution} of $p(Y^{\rm T} = y^{\rm T}, Y^{\rm S} = y^{\rm S} | X^{\rm T})$, then obtain $p(Y^{\rm S} = y^{\rm S} | X^{\rm T})=\sum \nolimits_{y^{\rm T} \in \mathcal{Y}^{\rm T}}p(Y^{\rm T} = y^{\rm T}, Y^{\rm S} = y^{\rm S} | X^{\rm T})$, and substitute them into Eq.~\eqref{eq:cp_pre_expand}. 
Two strategies, BLM and BLM+, are presented for these estimations in this paper. 
To help understanding, we include a simple example to illustrate the estimation of $p(Y^{\rm T} = y^{\rm T}, Y^{\rm S} = y^{\rm S} | X^{\rm T})$ and $p(Y^{\rm S} = y^{\rm S} | X^{\rm T})$ in Appendix~\ref{app:example}.

\textbf{BLM}. Let $f(x_i^{\rm T}) \equiv (f_{\rm pre} \circ f_{\rm in})(x_i^{\rm T}; \theta)$ denote the predicted logits obtained from the pretrained model for a given input $x_i^{\rm T}$. We define $\hat{y}^{\rm S}_{i} = \arg \max_{y^{\prime} \in \mathcal{Y}^{\rm S}} f(x_i^{\rm T})_{y^{\prime}}$ to be the predicted pretrained label for $x_i^{\rm T}$ and $\mathds{1} \{ \cdot \}$ to be the indicator function.
Starting with the joint distribution $p(Y^{\rm T} = y^{\rm T}, Y^{\rm S} = y^{\rm S} | X^{\rm T})$, we could intuitively count the frequency of $(\hat{y}^{\rm S}_{i} = y^{\rm S} \land y_i^{\rm T} = y^{\rm T})$ to estimate:
\begin{equation}
\label{eq:blm_approx_2}
\small
    \hat{p}_{\rm BLM}(Y^{\rm T} = y^{\rm T}, Y^{\rm S} = y^{\rm S} | X^{\rm T}) = \frac{\sum_{i=1}^{n} \mathds{1} \{ y_i^{\rm T} = y^{\rm T} \} \cdot \mathds{1} \{ \hat{y}^{\rm S}_{i} = y^{\rm S} \} }{n}.
\end{equation} 
For $p(Y^{\rm S} = y^{\rm S} | X^{\rm T})$, in addition to summing up Eq. \eqref{eq:blm_approx_2} for $y^{\rm T} \in \mathcal{Y}^{\rm T}$, we add Laplace smoothing coefficient $\lambda$ to ensure the denominator of Eq. \eqref{eq:cp_pre_expand} being non-zero, with $k_{\rm S}$ being the size of $\mathcal{Y}^{\rm S}$:
\begin{equation}
\small
\label{eq:blm_approx_1}
    \hat{p}_{\rm BLM}(Y^{\rm S} = y^{\rm S} | X^{\rm T}) = 
    \frac{\sum_{y^{\rm T} \in \mathcal{Y}^{\rm T}} \sum_{i=1}^{n} \mathds{1} \{ y_i^{\rm T} = y^{\rm T} \} \cdot \mathds{1} \{ \hat{y}^{\rm S}_{i} = y^{\rm S} \} + \lambda}{n + k_{\rm S} \cdot \lambda }
    = \frac{ \sum_{i=1}^{n} \mathds{1} \{ \hat{y}^{\rm S}_{i} = y^{\rm S} \} + \lambda }{n + k_{\rm S} \cdot \lambda }.
\end{equation}
Substituting Eq.~\eqref{eq:blm_approx_1} and Eq.~\eqref{eq:blm_approx_2} back to Eq.~\eqref{eq:cp_pre_expand} yields the estimation of $\hat{\omega}_{y^{\rm S}, y^{\rm T}}$ to be $\hat{p}_{\rm BLM}(Y^{\rm T} = y^{\rm T}| Y^{\rm S} = y^{\rm S}, X^{\rm T})$. After column-wise sum normalization of $\hat{\omega}_{y^{\rm S}, y^{\rm T}}$ to satisfy $\sum_{j=1}^{k_{\rm S}} \omega_{j,\cdot} = 1$ (as formulated in Section \ref{sec:theory}), we obtain the final probabilistic LM, denoted as $\omega_{\rm BLM}$.

\textbf{BLM+}.
Recall that BLM estimates $p(Y^{\rm T} = y^{\rm T},Y^{\rm S} = y^{\rm S} | X^{\rm T})$ by frequency-counting based on a single {\it most likely} predicted label.
However, this strategy disregards other high-ranking predictions that could offer valuable information.
Thus, we introduce BLM+, an extension of BLM that considers top-$K$ {\it predicted probabilities} of the pretrained model for the estimation of $p(Y^{\rm T} = y^{\rm T},Y^{\rm S} = y^{\rm S} | X^{\rm T})$. 
Rather than relying solely on the tally, BLM+ aggregates {\it probabilities} for samples where $y^{\rm S}$ ranks among the top-$K$ predictions.
In this way, BLM+ acknowledges the uncertainty in $f(x_i^{\rm T})$ and exploits other potential predictions, providing more robust estimations.

Let $\mathcal{Y}_{K, i}^{\rm S} \equiv \{ y^{\prime} | \arg \max_{y_{1}, \dots y_{K}} f(x_i^{\rm T})_{y^{\prime}} \} $ denote the set of the top-$K$ predicted pretrained labels for input $x_i^{\rm T}$, and $\hat{p}(y^{\rm S} | x_i^{\rm T}) \equiv ({\rm softmax} \circ f)(x_i^{\rm T})_{y^{\rm S}}$ denote the predicted probability for any $y^{\rm S} \in \mathcal{Y}^{\rm S}$ given $x_i^{\rm T}$. Then, within the BLM+ strategy, the joint density is approximated\footnote{Note that this approximation is not normalized, and thus, is not strictly equivalent to the true probability.} as:
\begin{equation}
\small
\label{eq:BLM+_approx_2}
        \hat{p}_{\rm BLM_{+}}(Y^{\rm T} = y^{\rm T}, Y^{\rm S} = y^{\rm S} | X^{\rm T}) = \frac{\sum_{i=1}^{n} \mathds{1} \{ y_i^{\rm T} = y^{\rm T} \} \cdot \hat{p}(y^{\rm S} | x_i^{\rm T})  \cdot \mathds{1} \{ y^{\rm S} \in \mathcal{Y}^{\rm S}_{K, i}\} }{n}.
\end{equation} 
Similar to BLM, with the Laplace smoothing coefficient being $\lambda$ and the size of $\mathcal{Y}^{\rm S}$ being $k_{\rm 
S}$, $p(Y^{\rm S} = y^{\rm S} | X^{\rm T})$ can be expressed by applying BLM+ as:
\begin{equation}
\small
\label{eq:BLM+_approx_1}
        \hat{p}_{\rm BLM_{+}}(Y^{\rm S} = y^{\rm S} | X^{\rm T}) = \frac{ \sum_{i=1}^{n} \hat{p}(y^{\rm S} | x_i^{\rm T}) \cdot \mathds{1} \{ y^{\rm S} \in \mathcal{Y}^{\rm S}_{K, i} \} + \lambda }{n + k^{\rm S} \cdot \lambda }.
\end{equation}
Combining Eq.~\eqref{eq:BLM+_approx_1} and Eq.~\eqref{eq:BLM+_approx_2} with 
Eq.~\eqref{eq:cp_pre_expand}, and going through all $y^{\rm T} \in \mathcal{Y}^{\rm T}$ and $y^{\rm S} \in \mathcal{Y}^{\rm S}$, we obtain the full BLM+ estimation as $\omega_{\rm BLM_{+}}$ after column-wise sum normalization of $\hat{\omega}_{y^{\rm S}, y^{\rm T}}$, similar to BLM. In practice, we set $K = \lfloor \alpha \cdot k_{\rm T} \rfloor$, with ratio $\alpha$ being a hyper-parameter that decides $K$ based on the size of downstream label space $k_{\rm T}$.

\textbf{Pipeline and Learning Strategy.}
The learning of BLM and BLM+ allows for seamless integration into existing VR pipelines. It is model-agnostic (e.g., pretrained ResNet or ResNeXt) and compatible with all input VR methods (e.g., watermarking or padding). Figure \ref{fig:framework} illustrates the learning strategy in detail. Besides, the learning pipeline of BLM is shown in Algorithm~\ref{alg:main_blm}, while that of BLM+ is shown in Algorithm~\ref{alg:main_blmp}. The completed pseudocode for all LM methods (RLM, FLM, ILM, BLM, BLM+) and a more detailed discussion of involved matrix operations are in Appendix \ref{app:algorithm}.

\begin{figure}[!t]
    \centering
    \includegraphics[width=0.95\textwidth]{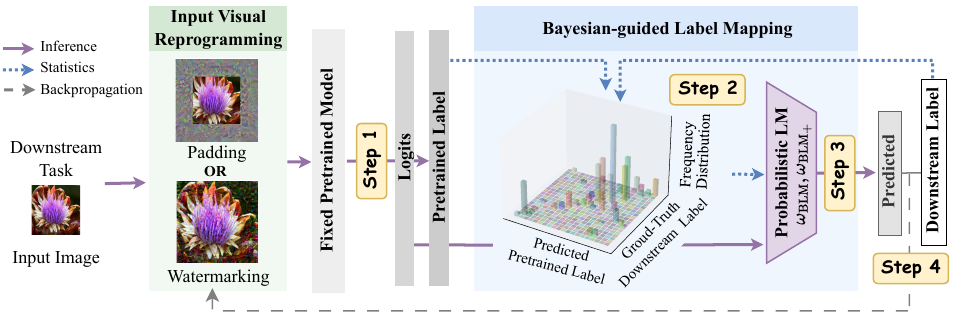}
    \caption{Learning strategy of BLM and BLM+. First, input images, incorporated with VR watermarking or padding patterns, are fed into a fixed pretrained model to obtain logits and predicted labels.
    Then, the true labels (of $y^{\rm T}$) and predicted labels (of $y^{\rm S}$) are used to estimate $\omega_{\rm BLM}$ or $\omega_{\rm BLM_{\rm +}}$.
    Next, using $\omega_{\rm BLM}$ or $\omega_{\rm BLM_{\rm +}}$ that reweights output logits of pretrained models for the downstream labels, the predicted results can be derived. Finally, backpropagation is performed to update the input VR.}
    \label{fig:framework}
    \vspace{-0.3cm}
\end{figure}

\begin{minipage}[!t]{0.49\textwidth}
    \begin{flushright}
        \begin{algorithm}[H] \footnotesize
            \caption{Training Pipeline of BLM}
            \begin{algorithmic}[1]
\STATE {\bfseries Input:} Pretrained label space $\mathcal{Y}^{\rm S}$ with $k_{\rm S}$ labels, downstream label space $\mathcal{Y}^{\rm T}$ with $k_{\rm T}$ labels, downstream training set $\{(x^{\rm T}_i, y^{\rm T}_i)\}_{i=1}^{n}$, pretrained model $f_{\rm pre}(\cdot)$, iterations $E$, learning rate $a$, hyper-parameter $\lambda$
\STATE {\bfseries Output:} Probabilistic LM $\omega_{\rm BLM} \in [0,1]^{k_{\rm S} \times k_{\rm T}}$
\STATE Initialize $\omega_{\rm BLM}\leftarrow \{0\}^{k_{\rm S} \times k_{\rm T}}$, set $\theta \leftarrow \mathbf{0}$
\FOR{$e=1...E$}
\STATE \textcolor{blue}{\# Step 1: Get Pretrained Model Outputs}
\STATE {\scriptsize $f(x^{\rm T}_i;\theta)=f_{\rm pre}(f_{\rm in}(x_i^{\rm T};\theta))$ for $i=1...n$}
\STATE \textcolor{blue}{\# Step 2: Compute (or Update) the LM Matrix}
\STATE {\scriptsize $\hat{y}^{\rm S}_{i} \leftarrow \text{argmax}_{y \in \mathcal{Y}^{\rm S}} f(x^{\rm T}_i;\theta)_{y}$ { for $i=1...n$}}
\STATE \textbf{if} e=1 \textbf{then} Compute $\omega_{\rm BLM}$ using Eq.~(\ref{eq:cp_pre_expand},\ref{eq:blm_approx_2},\ref{eq:blm_approx_1})

\STATE \textbf{else} Update $\omega_{\rm BLM}$ using Eq.~(\ref{eq:cp_pre_expand},\ref{eq:blm_approx_2},\ref{eq:blm_approx_1})

\STATE \textcolor{blue}{\# Step 3: Predict Downstream Labels}
\STATE {\scriptsize $\hat{y}^{\rm T}_{i} \leftarrow \text{argmax}_{y} f_{\rm out}^{\omega} (f(x^{\rm T}_i;\theta))_{y}$ for $i=1...n$}

\STATE \textcolor{blue}{\# Step 4: Update VR Patterns}
\STATE {\scriptsize $\theta\leftarrow \theta - a\bigtriangledown_{\theta} \sum_{i=1}^n \ell(y^{\rm T}_i, f_{\rm out}^{\omega} (f(x^{\rm T}_i;\theta)))$}
\ENDFOR
\RETURN $\omega_{\rm BLM}$
\end{algorithmic}
\label{alg:main_blm}
\end{algorithm}
\end{flushright}
\end{minipage}
\hfill
\begin{minipage}[!t]{0.5\textwidth}
    \begin{flushright}
        \begin{algorithm}[H] \footnotesize
            \caption{Training Pipeline of BLM+}
            \begin{algorithmic}[1]
\STATE {\bfseries Input:} Pretrained label space $\mathcal{Y}^{\rm S}$ with $k_{\rm S}$ labels, downstream label space $\mathcal{Y}^{\rm T}$ with $k_{\rm T}$ labels, downstream training set $\{(x^{\rm T}_i, y^{\rm T}_i)\}_{i=1}^{n}$, pretrained model $f_{\rm pre}(\cdot)$, iterations $E$, learning rate $a$, $\lambda$, $K$
\STATE {\bfseries Output:} Probabilistic LM $\omega_{\rm BLM+} \in [0,1]^{k_{\rm S} \times k_{\rm T}}$
\STATE Initialize $\omega_{\rm BLM+}\leftarrow \{0\}^{k_{\rm S} \times k_{\rm T}}$, set $\theta \leftarrow \mathbf{0}$
\FOR{$e=1...E$}
\STATE \textcolor{blue}{\# Step 1: Get Pretrained Model Outputs}
\STATE {\scriptsize $f(x^{\rm T}_i;\theta)=f_{\rm pre}(f_{\rm in}(x_i^{\rm T};\theta))$ for $i=1...n$}
\STATE \textcolor{blue}{\# Step 2: Compute (or Update) the LM Matrix}
\STATE {\scriptsize $\mathcal{Y}^{\rm S}_{K,i} \leftarrow \{y^{\prime}|\text{argmax}_{y_1, ..., y_K} f(x_i^{\rm T};\theta)_{y^{\prime}}\}$ for $i=1...n$}

\STATE {\scriptsize $\hat{p}(y | x_i^{\rm T}) \leftarrow \text{softmax}(f(x_i^{\rm T};\theta))_{y}$ for $y \in \mathcal{Y}^{\rm S},i=1...n$}

\STATE \textbf{if} e=1 \textbf{then} Compute $\omega_{\rm BLM+}$ using Eq.~(\ref{eq:cp_pre_expand},\ref{eq:BLM+_approx_2},\ref{eq:BLM+_approx_1})

\STATE \textbf{else} Update $\omega_{\rm BLM+}$ using Eq.~(\ref{eq:cp_pre_expand},\ref{eq:BLM+_approx_2},\ref{eq:BLM+_approx_1})

\STATE \textcolor{blue}{\# Step 3: Predict Downstream Labels}
\STATE {\scriptsize $\hat{y}^{\rm T}_{i} \leftarrow {\text{argmax}}_{y} f_{\rm out}^{\omega} (f(x^{\rm T}_i;\theta))_{y}$ for $i=1...n$}

\STATE \textcolor{blue}{\# Step 4: Update VR Patterns}
\STATE {\scriptsize $\theta\leftarrow \theta - a\bigtriangledown_{\theta} \sum_{i=1}^n \ell(y^{\rm T}_i, f_{\rm out}^{\omega} (f(x^{\rm T}_i;\theta)))$}
\ENDFOR
\RETURN $\omega_{\rm BLM+}$
\end{algorithmic}
\label{alg:main_blmp}
\end{algorithm}
\end{flushright}
\end{minipage}
\vspace{0.3cm}

The iterative process of learning $\omega_{\rm BLM}, \omega_{\rm BLM+}$ comprises these four steps:
1) Input images, with VR patterns, are fed into the fixed pretrained model to obtain output logits and predicted pretrained labels.
2) BLM and BLM+ replace previous LM (e.g., RLM, FLM or ILM) to estimate $\omega$.
3) The initial logits are reweighted using $\omega_{\rm BLM}$ or $\omega_{\rm BLM_{+}}$, yielding refined predictions for downstream labels.
4) Loss functions (e.g., cross-entropy) and backpropagation are employed to update the input VR.

\subsection{Theoretical Analysis}
Furthermore, we include a justification of why probabilistic many-to-many LM (e.g., BLM and BLM+) should be favored over deterministic one-to-one LM (e.g., RLM, FLM and ILM).
Define the label spaces $\mathcal{Y}^{\rm S} = \{ 0, 1\}$ and $\mathcal{Y}^{\rm T} = \{ 0, 1\}$ as binary sets\footnote{This analysis focuses on the binary setting for simplicity.}.
Consider the set of potential LM functions $\mathcal{F}_{\rm lm} = \{ f_{\rm lm}: \mathcal{Y}^{\rm S} \to \mathcal{Y}^{\rm T} \}$, including each function $f_{\rm lm}(y^{\rm S}) \in \{ y^{\rm T}, 1 - y^{\rm T} \}$.
For any $f_{\rm lm} \in \mathcal{F}_{\rm lm}$, the expected accuracy of $f_{\rm lm}$ regarding the entire downstream label space is defined as\footnote{Input $x$ is intentionally omitted as all LM methods operate on the same inputs.}:
\begin{equation}\label{eq:exp_acc}
\small
    \mathrm{Acc}(f_{\rm lm}) = \mathbb{E}_{y^{\rm T} \in \mathcal{Y}^{\rm T}} \left[ \sum_{y^{\rm S} \in \mathcal{Y}^{\rm S}} p(y^{\rm S}) \cdot p \left(f_{\rm lm} (y^{\rm S}) = y^{\rm T} | y^{\rm S} \right) \right],
\end{equation}
where $p(y^{\rm S})$ is the marginal distribution of the pretrained labels and $p \left(f_{\rm lm} (y^{\rm S}) = y^{\rm T} | y^{\rm S} \right)$ is the conditional probability that $f_{\rm lm}$ correctly predicts a downstream label $y^{\rm T}$ from a pretrained label $y^{\rm S}$.
Let $f_{\rm plm}$ and $f_{\rm dlm}$ denote the probabilistic LM (Definition~\ref{def:plm}) and deterministic LM (Definition~\ref{def:dlm}), respectively. We finally prove that ${\rm Acc}(f_{\rm plm}) \geq {\rm Acc}(f_{\rm dlm})$ (Corollary \ref{corollary}) in Appendix \ref{app:theory}, which further verifies the effectiveness of our methods in the view of theoretical understanding.  

\begin{table}[t]
% \begin{threeparttable}
\captionsetup{position=top}
\caption{Performance comparison of gradient-free output LM methods (mean \% ± std \%). Ours are \colorbox{gray!15}{highlighted} and the highest accuracy is in \textbf{bold} (with deep learning-based LM in {\color{gray!50}{gray}} for reference)}
\small
\resizebox{\textwidth}{!}{
\begin{tabular}{c|ccc>{\columncolor{gray!15}}c>{\columncolor{gray!15}}c|c|cc>{\columncolor{gray!15}}c>{\columncolor{gray!15}}c|c} 
\toprule
      & \multicolumn{6}{c|}{ResNet-18   (ImageNet-1K)}                                   & \multicolumn{5}{c}{ResNeXt-101-32x8d   (Instagram)} \\
\midrule
            Padding & \multicolumn{5}{c|}{Gradient-free}                          & \color{gray!50}{Deep}& \multicolumn{4}{c|}{Gradient-free}   & \color{gray!50}{Deep}   \\
\midrule
    Methods         & RLM      & FLM      & ILM               & BLM               & BLM+             & - &  FLM      & ILM               & BLM               & BLM+   & -    \\
\midrule
Flowers102   & 11.0\fontsize{5pt}{5pt}\selectfont{±0.5} & 20.0\fontsize{5pt}{5pt}\selectfont{±0.3}   & 27.9\fontsize{5pt}{5pt}\selectfont{±0.7}          & 44.4\fontsize{5pt}{5pt}\selectfont{±1.1}          & \textbf{50.1}\fontsize{5pt}{5pt}\selectfont{±0.6} & \color{gray!50} {76.7}\fontsize{5pt}{5pt}\selectfont{±0.2} & 22.5\fontsize{5pt}{5pt}\selectfont{±0.5} & 27.9\fontsize{5pt}{5pt}\selectfont{±0.3}          & \textbf{31.5}\fontsize{5pt}{5pt}\selectfont{±0.4} & 30.1\fontsize{5pt}{5pt}\selectfont{±0.7} & \color{gray!50} {85.2}\fontsize{5pt}{5pt}\selectfont{±1.3}\\
DTD          & 16.3\fontsize{5pt}{5pt}\selectfont{±0.7} & 32.4\fontsize{5pt}{5pt}\selectfont{±0.5} & 35.3\fontsize{5pt}{5pt}\selectfont{±0.9}          & 42.0\fontsize{5pt}{5pt}\selectfont{±0.5}          & \textbf{43.9}\fontsize{5pt}{5pt}\selectfont{±0.4} & \color{gray!50} {49.1}\fontsize{5pt}{5pt}\selectfont{±0.3} &  40.3\fontsize{5pt}{5pt}\selectfont{±0.5} & 41.4\fontsize{5pt}{5pt}\selectfont{±0.7}          & 47.8\fontsize{5pt}{5pt}\selectfont{±0.4}          & \textbf{49.4}\fontsize{5pt}{5pt}\selectfont{±0.4} & \color{gray!50} {64.0}\fontsize{5pt}{5pt}\selectfont{±0.2}\\
UCF101       & 6.6\fontsize{5pt}{5pt}\selectfont{±0.4}  & 18.9\fontsize{5pt}{5pt}\selectfont{±0.5} & 23.9\fontsize{5pt}{5pt}\selectfont{±0.5}          & 30.9\fontsize{5pt}{5pt}\selectfont{±1.1}          & \textbf{32.0}\fontsize{5pt}{5pt}\selectfont{±0.4} & \color{gray!50} {46.0}\fontsize{5pt}{5pt}\selectfont{±0.6} & 41.9\fontsize{5pt}{5pt}\selectfont{±0.6} & 43.1\fontsize{5pt}{5pt}\selectfont{±0.8}          & 48.3\fontsize{5pt}{5pt}\selectfont{±0.1}          & \textbf{50.1}\fontsize{5pt}{5pt}\selectfont{±0.6} & \color{gray!50} {68.3}\fontsize{5pt}{5pt}\selectfont{±0.1}\\
Food101      & 3.8\fontsize{5pt}{5pt}\selectfont{±0.3}  & 12.8\fontsize{5pt}{5pt}\selectfont{±0.1} & 14.8\fontsize{5pt}{5pt}\selectfont{±0.2}          & 23.2\fontsize{5pt}{5pt}\selectfont{±0.1}          & \textbf{25.1}\fontsize{5pt}{5pt}\selectfont{±0.3}  & \color{gray!50} {34.1}\fontsize{5pt}{5pt}\selectfont{±0.1} & 20.5\fontsize{5pt}{5pt}\selectfont{±0.5} & 23.0\fontsize{5pt}{5pt}\selectfont{±0.4}          & 29.6\fontsize{5pt}{5pt}\selectfont{±0.6}          & \textbf{31.4}\fontsize{5pt}{5pt}\selectfont{±0.2} & \color{gray!50} {58.7}\fontsize{5pt}{5pt}\selectfont{±0.3}\\
GTSRB        & 46.1\fontsize{5pt}{5pt}\selectfont{±1.3} & 45.5\fontsize{5pt}{5pt}\selectfont{±1.0} & 52.0\fontsize{5pt}{5pt}\selectfont{±1.2}          & \textbf{54.8}\fontsize{5pt}{5pt}\selectfont{±0.7} & 54.3\fontsize{5pt}{5pt}\selectfont{±0.7} & \color{gray!50} {63.1}\fontsize{5pt}{5pt}\selectfont{±0.5} & 56.2\fontsize{5pt}{5pt}\selectfont{±0.6} & 59.9\fontsize{5pt}{5pt}\selectfont{±1.0}          & 62.9\fontsize{5pt}{5pt}\selectfont{±0.5}          & \textbf{63.0}\fontsize{5pt}{5pt}\selectfont{±0.8} & \color{gray!50} {74.4}\fontsize{5pt}{5pt}\selectfont{±0.5} \\
EuroSAT      & 82.4\fontsize{5pt}{5pt}\selectfont{±0.4} & 83.8\fontsize{5pt}{5pt}\selectfont{±0.2} & 85.2\fontsize{5pt}{5pt}\selectfont{±0.6}          & \textbf{86.7}\fontsize{5pt}{5pt}\selectfont{±0.2} & \textbf{86.7}\fontsize{5pt}{5pt}\selectfont{±0.1} & \color{gray!50} {92.4}\fontsize{5pt}{5pt}\selectfont{±0.1} & 87.8\fontsize{5pt}{5pt}\selectfont{±0.4} & 86.2\fontsize{5pt}{5pt}\selectfont{±0.8}          & 87.6\fontsize{5pt}{5pt}\selectfont{±0.3}          & \textbf{88.3}\fontsize{5pt}{5pt}\selectfont{±0.3} & \color{gray!50} {93.2}\fontsize{5pt}{5pt}\selectfont{±0.1} \\
OxfordPets   & 9.3\fontsize{5pt}{5pt}\selectfont{±0.4}  & 62.9\fontsize{5pt}{5pt}\selectfont{±0.1} & 65.4\fontsize{5pt}{5pt}\selectfont{±0.7}          & 69.8\fontsize{5pt}{5pt}\selectfont{±0.3}          & \textbf{70.6}\fontsize{5pt}{5pt}\selectfont{±0.2}  & \color{gray!50} {73.0}\fontsize{5pt}{5pt}\selectfont{±0.3} & 76.8\fontsize{5pt}{5pt}\selectfont{±0.6} & 78.9\fontsize{5pt}{5pt}\selectfont{±0.8}          & 82.4\fontsize{5pt}{5pt}\selectfont{±0.4}          & \textbf{83.0}\fontsize{5pt}{5pt}\selectfont{±0.6} & \color{gray!50} {91.8}\fontsize{5pt}{5pt}\selectfont{±0.1}\\
StanfordCars & 0.9\fontsize{5pt}{5pt}\selectfont{±0.1}  & 2.7\fontsize{5pt}{5pt}\selectfont{±0.1}  & 4.5\fontsize{5pt}{5pt}\selectfont{±0.1}           & 5.4\fontsize{5pt}{5pt}\selectfont{±0.1}           & \textbf{7.7}\fontsize{5pt}{5pt}\selectfont{±0.1} & \color{gray!50} {14.3}\fontsize{5pt}{5pt}\selectfont{±0.1} & 4.6\fontsize{5pt}{5pt}\selectfont{±0.1}  & 7.0\fontsize{5pt}{5pt}\selectfont{±0.2}           & 8.3\fontsize{5pt}{5pt}\selectfont{±0.1}           & \textbf{9.3}\fontsize{5pt}{5pt}\selectfont{±0.3} & \color{gray!50} {50.5}\fontsize{5pt}{5pt}\selectfont{±0.5} \\
SUN397       & 1.0\fontsize{5pt}{5pt}\selectfont{±0.1}  & 10.4\fontsize{5pt}{5pt}\selectfont{±0.1} & 13.0\fontsize{5pt}{5pt}\selectfont{±0.2}          & 16.2\fontsize{5pt}{5pt}\selectfont{±0.1}          & \textbf{18.7}\fontsize{5pt}{5pt}\selectfont{±0.3} & \color{gray!50} {26.3}\fontsize{5pt}{5pt}\selectfont{±0.6} & 21.6\fontsize{5pt}{5pt}\selectfont{±0.3} & 23.7\fontsize{5pt}{5pt}\selectfont{±0.2}          & 30.1\fontsize{5pt}{5pt}\selectfont{±0.1}          & \textbf{32.0}\fontsize{5pt}{5pt}\selectfont{±0.3} & \color{gray!50} {51.5}\fontsize{5pt}{5pt}\selectfont{±0.8}\\
CIFAR10      & 63\fontsize{5pt}{5pt}\selectfont{±0.1}   & 65.7\fontsize{5pt}{5pt}\selectfont{±0.6} & 65.5\fontsize{5pt}{5pt}\selectfont{±0.1}          & 66.7\fontsize{5pt}{5pt}\selectfont{±0.2}          & \textbf{66.8}\fontsize{5pt}{5pt}\selectfont{±0.2} & \color{gray!50} {72.1}\fontsize{5pt}{5pt}\selectfont{±0.8} & 80.3\fontsize{5pt}{5pt}\selectfont{±0.3} & 81.7\fontsize{5pt}{5pt}\selectfont{±0.3}          & \textbf{82.2}\fontsize{5pt}{5pt}\selectfont{±0.3} & \textbf{82.2}\fontsize{5pt}{5pt}\selectfont{±0.1} & \color{gray!50} {83.4}\fontsize{5pt}{5pt}\selectfont{±0.1}\\
CIFAR100     & 12.9\fontsize{5pt}{5pt}\selectfont{±0.1} & 18.1\fontsize{5pt}{5pt}\selectfont{±0.2} & 24.8\fontsize{5pt}{5pt}\selectfont{±0.1}          & 29.6\fontsize{5pt}{5pt}\selectfont{±0.6}          & \textbf{30.6}\fontsize{5pt}{5pt}\selectfont{±0.4} & \color{gray!50} {46.7}\fontsize{5pt}{5pt}\selectfont{±0.2} & 39.7\fontsize{5pt}{5pt}\selectfont{±0.2} & 45.9\fontsize{5pt}{5pt}\selectfont{±0.2}          & \textbf{47.8}\fontsize{5pt}{5pt}\selectfont{±0.3} & \textbf{47.8}\fontsize{5pt}{5pt}\selectfont{±0.3} & \color{gray!50} {56.2}\fontsize{5pt}{5pt}\selectfont{±0.4}\\
SVHN         & 73.5\fontsize{5pt}{5pt}\selectfont{±0.3} & 73.1\fontsize{5pt}{5pt}\selectfont{±0.2} & \textbf{75.2}\fontsize{5pt}{5pt}\selectfont{±0.2} & 74.5\fontsize{5pt}{5pt}\selectfont{±0.7}          & 74.2\fontsize{5pt}{5pt}\selectfont{±0.3} & \color{gray!50} {82.1}\fontsize{5pt}{5pt}\selectfont{±0.2}   & 79.0\fontsize{5pt}{5pt}\selectfont{±0.5}   & \textbf{81.4}\fontsize{5pt}{5pt}\selectfont{±0.1} & 79.8\fontsize{5pt}{5pt}\selectfont{±0.3}          & 79.3\fontsize{5pt}{5pt}\selectfont{±0.4} & \color{gray!50} {85.7}\fontsize{5pt}{5pt}\selectfont{±0.2}\\
\midrule
Average & 27.2	& 37.2	& 40.6	& 45.3	&\textbf{46.7}	& \color{gray!50} {56.3} & 	47.6	& 50.0	& 53.2	& \textbf{53.8} & \color{gray!50} {71.9}
\\
\bottomrule
\end{tabular}}
\label{Tab:padding}
\vspace{-0.3cm}
\end{table}

\section{Experiments}
\label{sec:experiments}
\textbf{Tasks and Baselines.}
Following ILM \citep{chen2023understanding}, we employ ResNet-18 \citep{resnet} pretrained on ImageNet-1K \citep{russakovsky2015imagenet} and ResNeXt pretrained on Instagram \citep{resnext} to test the performance of VR. The results are evaluated on twelve downstream datasets: Flowers102 \citep{flowers102}, DTD \citep{DTD}, UCF101 \citep{UCF101}, Food101 \citep{Food101}, GTSRB \citep{GTSRB}, EuroSAT \citep{EuroSAT}, OxfordPets \citep{Oxfordpets}, StanfordCars \citep{stanfordcars}, SUN397 \citep{sun397}, CIFAR10/100 \citep{cifar10} and SVHN \citep{SVHN}. Previous gradient-free LM methods RLM \citep{elsayed2018adversarial}, FLM \citep{tsai2020transfer} and ILM \citep{chen2023understanding} are used as the baselines. The results of deep learning-based LM will also be included for reference, where LM is treated as a single-layer linear neural network connected to the output of the pretrained model for training alongside VR. More dataset and implementation details are in Appendix \ref{app:implement}. Regarding hyper-parameters of BLM, $\lambda$ is set as 1, and the top-$K$ ratio $\alpha$ is 0.15 (analyzed in Appendix \ref{app:param}).

\textbf{Results for Padding-based VR.} Padding-based input VR adds trainable noise to the outer frames of centered images. Table \ref{Tab:padding} shows the performance of BLM and BLM+ applied with padding-based input VR. BLM and BLM+ yield the highest accuracy across all datasets except for SVHN. On ResNet-18, compared to the SOTA (i.e., ILM), BLM achieves an average improvement of 4.7\% across the 12 datasets, whereas BLM+ achieves a 6.1\% enhancement. On ResNeXt-101, BLM and BLM+ achieve accuracy improvements of 3.2\% and 3.8\% on average, respectively. The elevation in accuracy is particularly pronounced in tasks with a higher number of classes (e.g., UCF101, CIFAR100). On SVHN, ILM performs slightly better, which could be attributed to the minimal inter-class variation and the smaller number of classes (which is 10) in SVHN, resulting in similar mapping values for different downstream labels and thus reducing our method’s advantage (discussed in Appendix \ref{app:limit}). However, compared to current gradient-free LM methods, the deep learning-based LM may still have an advantage in the performance of downstream tasks due to the learning capacity of the linear layer neural network. Our proposed BLM and BLM+ aim to bridge the gap between gradient-free LM and deep learning-based LM. Additionally, BLM and BLM+ have been observed to possess greater interpretability (see Appendix~\ref{app:vis_lm} for more experiments) and fewer parameters (see Appendix~\ref{app:cost} for details) compared to deep learning-based LM.

\begin{wraptable}{l}{0.51\linewidth}
\vspace{-0.3cm}
\small
\caption{Performance comparison of gradient-free LM methods for watermarking-based VR on ResNet-18 (mean \% ± std \%). Ours are \colorbox{gray!15}{highlighted} and the highest accuracy is in \textbf{bold} ( with deep learning-based LM in {\color{gray!50}{gray}} for reference)}
\resizebox{0.5\textwidth}{!}{
\begin{tabular}{c|c>{\columncolor{gray!15}}c>{\columncolor{gray!15}}c|c}
\toprule
   Watermarking          & \multicolumn{3}{c|}{Gradient-free} & \color{gray!50}{Deep}           \\
\midrule
   Methods          & ILM      & BLM               & BLM+  & -           \\
\midrule
Flowers102   & 23.2\fontsize{5pt}{5pt}\selectfont{±0.5} & 39.2\fontsize{5pt}{5pt}\selectfont{±0.6}          & \textbf{44.1}\fontsize{5pt}{5pt}\selectfont{±0.9} & \color{gray!50} {82.4}\fontsize{5pt}{5pt}\selectfont{±0.4} \\
DTD          & 29.0\fontsize{5pt}{5pt}\selectfont{±0.7} & 40.1\fontsize{5pt}{5pt}\selectfont{±0.2}         & \textbf{43.0}\fontsize{5pt}{5pt}\selectfont{±0.2} & \color{gray!50} {48.9}\fontsize{5pt}{5pt}\selectfont{±0.5}\\
UCF101       & 24.4\fontsize{5pt}{5pt}\selectfont{±0.9} & 32.9\fontsize{5pt}{5pt}\selectfont{±0.8}          & \textbf{35.4}\fontsize{5pt}{5pt}\selectfont{±0.5} & \color{gray!50} {53.1}\fontsize{5pt}{5pt}\selectfont{±0.2}\\
Food101      & 13.2\fontsize{5pt}{5pt}\selectfont{±0.1} & 21.5\fontsize{5pt}{5pt}\selectfont{±0.4}          & \textbf{22.9}\fontsize{5pt}{5pt}\selectfont{±0.1} & \color{gray!50} {30.4}\fontsize{5pt}{5pt}\selectfont{±0.9}\\
GTSRB        & 76.8\fontsize{5pt}{5pt}\selectfont{±0.9} & 82.1\fontsize{5pt}{5pt}\selectfont{±0.7}          & \textbf{82.0}\fontsize{5pt}{5pt}\selectfont{±0.8} & \color{gray!50} {89.5}\fontsize{5pt}{5pt}\selectfont{±0.3}\\
EuroSAT      & 84.3\fontsize{5pt}{5pt}\selectfont{±0.5} & 84.4\fontsize{5pt}{5pt}\selectfont{±0.5}          & \textbf{84.8}\fontsize{5pt}{5pt}\selectfont{±0.2} & \color{gray!50} {89.2}\fontsize{5pt}{5pt}\selectfont{±0.2}\\
OxfordPets   & 70.0\fontsize{5pt}{5pt}\selectfont{±0.6} & 72.4\fontsize{5pt}{5pt}\selectfont{±0.6}          & \textbf{73.3}\fontsize{5pt}{5pt}\selectfont{±0.1} & \color{gray!50} {77.6}\fontsize{5pt}{5pt}\selectfont{±0.8}\\
StanfordCars & 3.4\fontsize{5pt}{5pt}\selectfont{±0.1}  & 5.5\fontsize{5pt}{5pt}\selectfont{±0.1}           & \textbf{7.4}\fontsize{5pt}{5pt}\selectfont{±0.1} & \color{gray!50} {30.7}\fontsize{5pt}{5pt}\selectfont{±0.3} \\
SUN397       & 13.4\fontsize{5pt}{5pt}\selectfont{±0.2} & 18.4\fontsize{5pt}{5pt}\selectfont{±0.1}          & \textbf{19.4}\fontsize{5pt}{5pt}\selectfont{±0.2} & \color{gray!50} {32.9}\fontsize{5pt}{5pt}\selectfont{±0.3}\\
CIFAR10      & 68.9\fontsize{5pt}{5pt}\selectfont{±0.4} & 74.9\fontsize{5pt}{5pt}\selectfont{±0.2}          & \textbf{75.7}\fontsize{5pt}{5pt}\selectfont{±0.1} & \color{gray!50} {71.7}\fontsize{5pt}{5pt}\selectfont{±0.6}\\
CIFAR100     & 33.8\fontsize{5pt}{5pt}\selectfont{±0.2} & 41.2\fontsize{5pt}{5pt}\selectfont{±0.3}          & \textbf{41.6}\fontsize{5pt}{5pt}\selectfont{±0.3} & \color{gray!50} {39.9}\fontsize{5pt}{5pt}\selectfont{±0.5}\\
SVHN         & 78.3\fontsize{5pt}{5pt}\selectfont{±0.3} & \textbf{79.2}\fontsize{5pt}{5pt}\selectfont{±0.1} & 78.8\fontsize{5pt}{5pt}\selectfont{±0.2}  & \color{gray!50} {83.7}\fontsize{5pt}{5pt}\selectfont{±0.2}\\
\midrule
Average & 43.2 & 49.3 & \textbf{50.7} & \color{gray!50} {60.8} \\
\bottomrule
\end{tabular}}
\label{tab:watermark}
\vspace{-0.4cm}
\end{wraptable}

\textbf{Results for Watermarking-based VR.} BLM and BLM+ can be applied to different input VR methods. For the watermarking-based VR method, which overlays trainable noise patterns on resized images, the results of BLM and BLM+ with ResNet-18 as the pretrained model are shown in Table \ref{tab:watermark}. Since ILM is the best-performing baseline, we only include its results here for comparison. Our BLM and BLM+ methods again outperform ILM, achieving an average gain in accuracy of 6.1\% and 7.5\%, respectively. Therefore, in the case of watermarking-based VR, BLM and BLM+ also close the gap between current gradient-free and deep learning-based LM. Results in Tabel~\ref{tab:watermark} underscore the applicability of our output LM methods with different input VR.

\textbf{Results for Vision-Language Models.}
The application of our BLM and BLM+ on vision-language models (i.e., CLIP), along with the performance, and visualization results are discussed in Appendix \ref{app:vl}. BLM and BLM+ achieve the average accuracy of 79.1\% and 79.3\% across 12 datasets, respectively, and outperform the baseline methods on 11 datasets.

\textbf{Ablation Study.}
Table \ref{Tab:ablation} presents the ablation study results of BLM and BLM+. For BLM, we list the results of replacing probabilistic LM with a one-to-one LM, denoted as `-Bayes', and the results of calculating $\omega_{\rm BLM}$ only once in the first epoch without subsequent iterations, denoted as `-Iter'. For BLM+, the removal of aggregating probabilities results in BLM; hence, we report the results of aggregating all probabilities instead of top-$K$ predicted probabilities, denoted as `-Top-K'. Like that for BLM, `-Iter' shows the results of calculating $\omega_{\rm BLM+}$ only once without subsequent iterations. Besides, when both `-Top-K' and `-Bayes' are applied to BLM+, BLM+ degenerates into the same results as `-Bayes' of BLM, which is displayed in the previous column of Table \ref{Tab:ablation}.

\begin{wraptable}{l}{0.66\linewidth}
\small
\vspace{-0.2cm}
\caption{Ablation study results of BLM and BLM+, using ResNet-18 as the pretrained model (showing the mean accuracies (\%), with ours \colorbox{gray!15}{highlighted} and the best in \textbf{bold})}
\vspace{-0.2cm}
\begin{tabular}{c|cc>{\columncolor{gray!15}}c|cc>{\columncolor{gray!15}}c}
\toprule
Method       & \multicolumn{3}{c|}{BLM}                       & \multicolumn{3}{c}{BLM+}                     \\
\midrule
             & - Bayes     & - Iter & Ours          & - Top-K      & - Iter & Ours          \\
\midrule
Flowers102   & 27.9          & 30.8          & \textbf{44.4} & 48.2          & 43.6          & \textbf{50.1} \\
DTD          & 35.3          & 38.0          & \textbf{42.0} & 42.4          & 42.0          & \textbf{43.9} \\
UCF101       & 23.9          & 28.8          & \textbf{30.9} & 30.9          & \textbf{33.2} & 32.0          \\
Food101      & 14.8          & 18.4          & \textbf{23.2} & 23.6          & 22.8          & \textbf{25.1} \\
GTSRB        & 52.0          & 44.6          & \textbf{54.8} & 50.5          & 44.8          & \textbf{54.3} \\
EuroSAT      & 85.2          & 85.0          & \textbf{86.7} & 85.1          & 85.4          & \textbf{86.7} \\
OxfordPets   & 65.4          & 68.1          & \textbf{69.8} & 59.9          & 70.6          & \textbf{70.6} \\
StanfordCars & 4.5           & 2.8           & \textbf{5.4}  & 6.5           & 6.2           & \textbf{7.7}  \\
SUN397       & 13.0          & 14.5          & \textbf{16.2} & 17.3          & 16.4          & \textbf{18.7} \\
CIFAR10      & 65.5          & 63.9          & \textbf{66.7} & 65.2          & 63.5          & \textbf{66.8} \\
CIFAR100     & 24.8          & 23.2          & \textbf{29.6} & \textbf{30.8} & 26.2          & 30.6          \\
SVHN         & \textbf{75.2} & 64.6          & 74.5          & 70.0          & 62.6          & \textbf{74.2} \\
\bottomrule
\end{tabular}
\label{Tab:ablation}
\end{wraptable}

For BLM, employing `Bayes' to compute $\omega$ improves accuracy across most datasets, with slightly smaller gains observed for datasets with fewer classes (EuroSAT, CIFAR10, and SVHN). For BLM+, applying `Top-K' assists in filtering out redundant information, yielding positive impacts across most datasets. In particular, on the OxfordPets dataset, to classify cat and dog breeds, using top-$K$ predicted probability effectively filters out numerous irrelevant categories in the pretrained label space, which leads to significant improvements. Furthermore, for both BLM and BLM+, iterative updates are crucial as the initial input VR may deviate considerably from the final iteration. The greater the disparity between the domains of downstream and pretrained tasks (GTSRB, SVHN), the more pronounced the impact of the input VR, thereby emphasizing the necessity of iteration updates.

\begin{figure}[t]
    \centering
    \includegraphics[width=\textwidth]{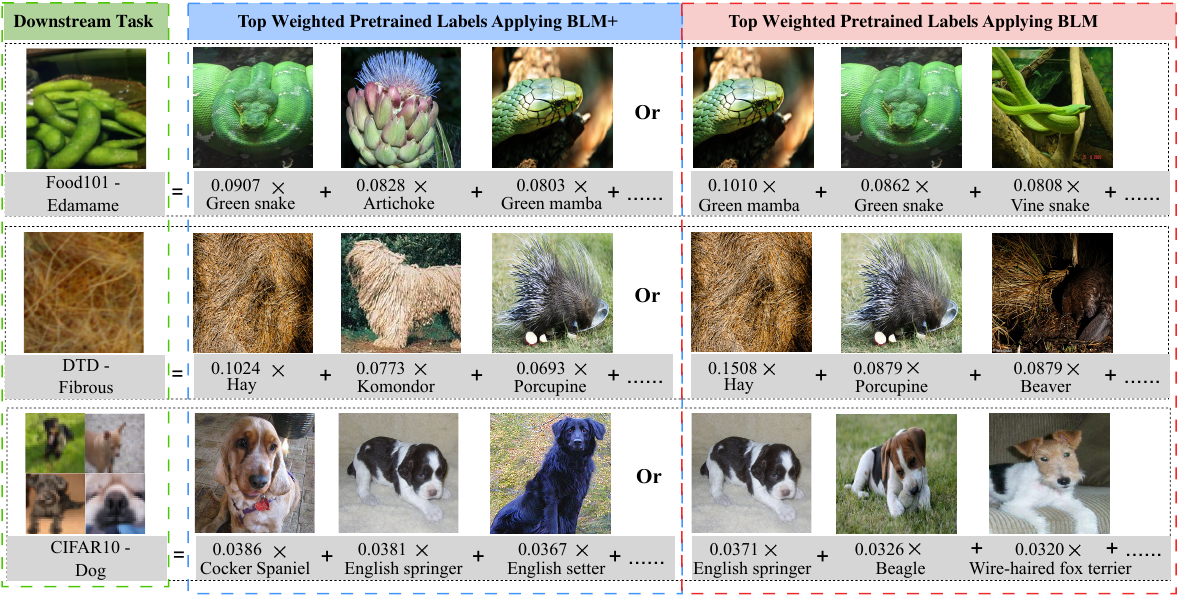}
    \caption{Visualization results of top weighted pretrained labels $y^{\rm S}$ and weights $\omega_{y^{\rm S},y^{\rm T}}$ for some $y^{\rm T}$ applying BLM and BLM+. Downstream labels `Edamame', `Fibrous', and `Dog' are shown as examples. ResNet-18 pretrained on ImageNet is used. More results are in Appendix \ref{app:vis}.}
    \label{fig:result}
    \vspace{-0.5cm}
\end{figure}
\begin{figure}[t]
    \centering
    \includegraphics[width=\textwidth]{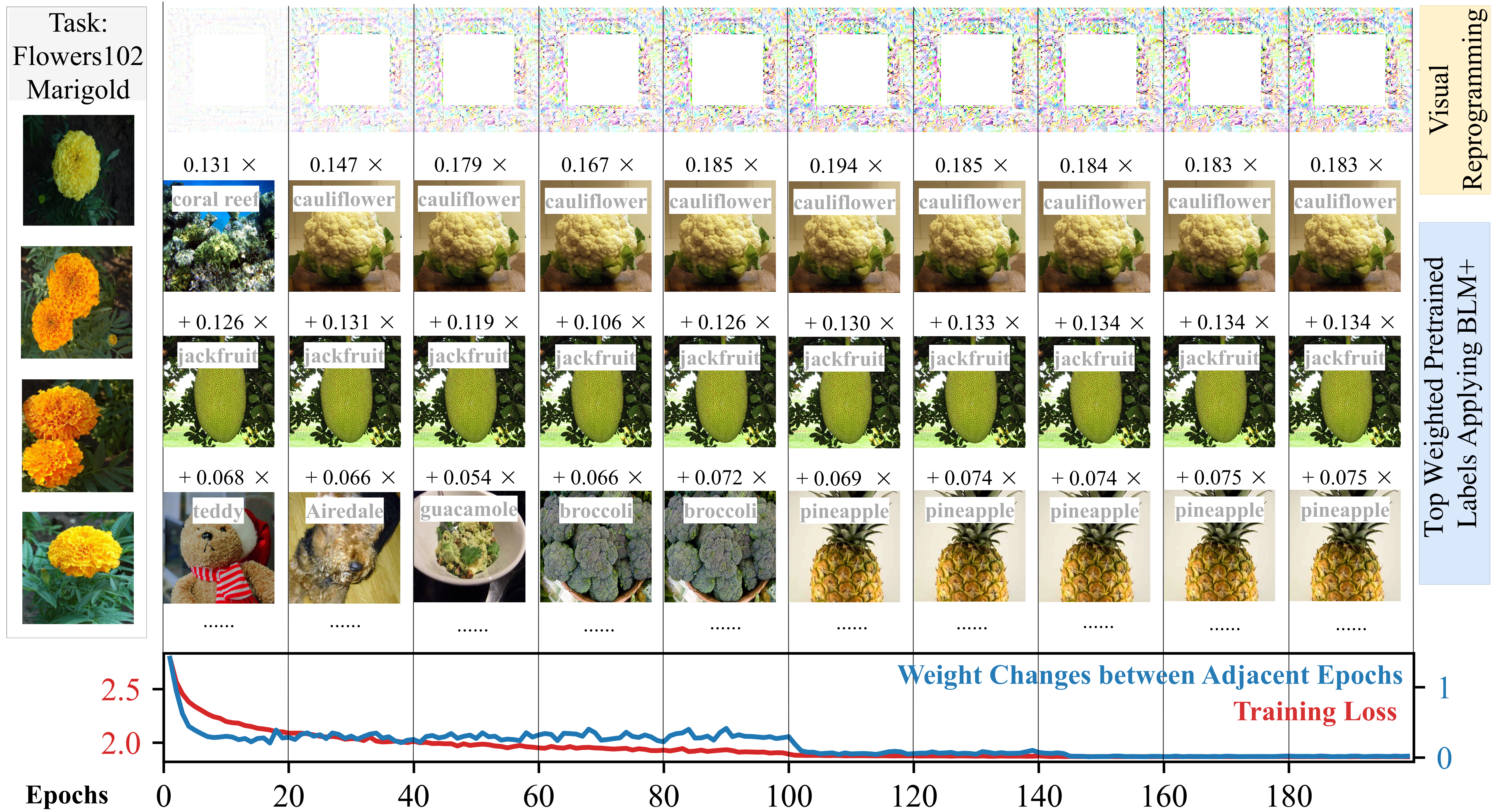}
    \caption{Visualization of input VR and top-weighted pretrained labels applying BLM+. Training loss and weight changes (Euclidean norm) of probabilistic LM  $\omega_{\rm BLM+}$ per iteration are plotted below. Pretrained ResNet-18 is used, and the downstream label `Marigold' is selected as an example.}
    \label{fig:iteration}
    \vspace{-0.5cm}
\end{figure}

\textbf{Visualization Results.} 
The probabilistic LM obtained by BLM or BLM+ can elucidate the connection between pretrained and downstream label spaces. Figure \ref{fig:result} shows the visualization results for three labels in downstream tasks, taking ResNet-18 pretrained on ImageNet-1K as an example. Each column of $\omega$ computed using BLM or BLM+ is a vector with length $k_{\rm S}=1000$, representing the weights assigned to the 1,000 outputs--one for each $y^{\rm S}$--of the pretrained model corresponding to a downstream label $y^{\rm T}$. The top-weighted labels (i.e., $y^{\rm S}$ where $\omega_{y^{\rm S},y^{\rm T}}$ is larger) for `Edamame' correspond to organisms such as snakes and artichokes, which share similarities in color and shape. Similarly, the predominant labels associated with `Fibrous' from the texture dataset include rough-textured items like `Hay' and `Komondor'. `Dog' encompasses various sub-breed canines. These findings suggest that BLM and BLM+ establish an optimal probabilistic LM between label spaces, and handle similarity or inclusion relationship, addressing the drawbacks in Figure \ref{fig:motivation}.

\textbf{Discussion of Why VR Is Effective.}
From a visualization perspective, Figure \ref{fig:iteration} shows the top-weighted pretrained labels and input VR patterns $\theta$ at different iteration stages using BLM+. The training loss for each iteration and changes in $\omega$, measured by the Euclidean norm, are also plotted. During the update of $\omega$ and $\theta$, the pretrained labels with top $\omega_{y^{\rm S},y^{\rm T}}$ for $y^{\rm T}$ being `Marigold' transition gradually from dissimilar labels such as `Reef' and `Teddy' to `Cauliflower' and `Pineapple' which share more similarities in color, shape and texture. Meanwhile, the training loss diminishes gradually, and $\omega$ converges, demonstrating the effectiveness of VR and BLM+.

\begin{figure}[t]
    \begin{minipage}{0.6\textwidth}
        \centering
        \includegraphics[width=\textwidth]{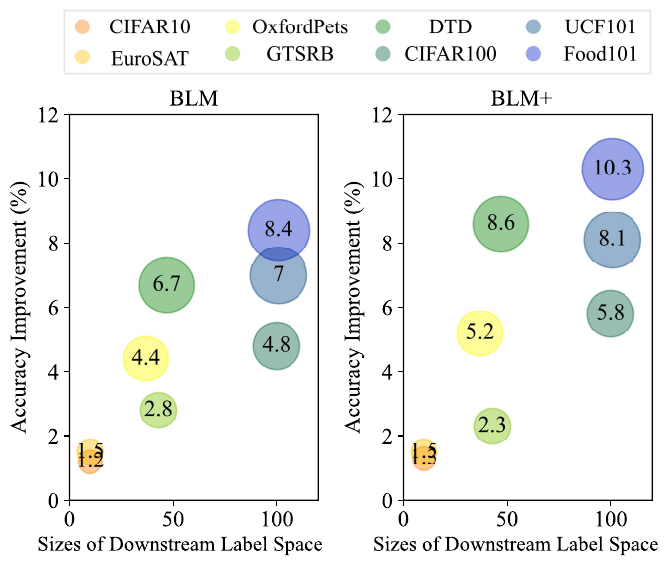}
        \vspace{0.05cm}
        \caption{Accuracy improvement (\%) of BLM and BLM+ compared with ILM given different sizes ($k_{\rm T}$) of the downstream label space, using pretrained ResNet-18.}
        \label{fig:class_num}
    \end{minipage}%
    \hspace{5pt}
    \begin{minipage}{0.37\textwidth}
        \centering
        \includegraphics[width=\textwidth]{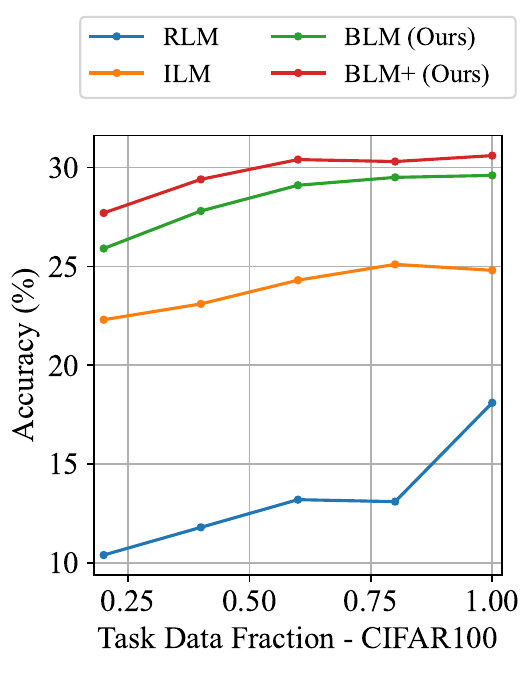}
        \vspace{0.05cm}
        \caption{Accuracy (\%) of methods when varying training dataset sizes $n$ for downstream task CIFAR100, using pretrained ResNet-18.}
        \label{fig:fraction}
    \end{minipage}%
% \vspace{-0.6cm}
\end{figure}

\textbf{Impact of Label Space Sizes $k_{\rm T}$.} Figure \ref{fig:class_num} shows the relationship between different sizes of the downstream label space and the accuracy improvement achieved by BLM and BLM+. Tasks with larger label spaces report more pronounced performance improvements. While simpler tasks with smaller label spaces might not fully showcase the power of our approach, the strength of BLM and BLM+ lies in unraveling the complex many-to-many relationship that often arises in tasks with more numerous classes. In such scenarios, our probabilistic LM methods demonstrate their full potential.

\textbf{Impact of Training Dataset Sizes $n$.}
Figure \ref{fig:fraction} illustrates the impact of varying training dataset sizes for the downstream task on different LM methods. Regarding CIFAR100 as the downstream task, compared with RLM and ILM, BLM and BLM+ yield higher accuracy consistently. 
With approximately a 40\% fraction of the downstream training data, BLM or BLM+ can achieve similar accuracy compared with training on the entire dataset.

\textbf{Other Experiments.} The parameter experiments and performance analysis regarding the impact of Laplace smoothing coefficient $\lambda$ and top-$K$ ratio $\alpha$ for BLM and BLM+ are detailed in Appendix \ref{app:param}. The visualization and analysis of LM matrices derived from gradient-free and deep learning-based methods can be found in Appendix~\ref{app:vis_lm}. Training cost analysis is discussed in Appendix~\ref{app:cost}. Additional visualization results of LM methods applied to pretrained vision models are presented in Appendix~\ref{app:vis}. Lastly, the application of BLM and BLM+ for vision-language models is explored in Appendix~\ref{app:vl}.

\section{Conclusion}
We focus on output LM methods for VR and reveal the drawbacks in current gradient-free LM methods, which use one-to-one mappings that overly simplify the relationship between the pretrained and downstream label spaces. To address this issue, we propose BLM, which calculates probabilistic LM matrices guided by Bayes’ theorem. Additionally, we aggregate the probability of top-$K$ predicted pretrained labels instead of counting a single label during the estimation of probabilistic LM matrices, yielding an improved method BLM+. Both theoretical analysis and experimental results validate the effectiveness of BLM and BLM+ while offering insights into understanding the effectiveness of VR through a probabilistic lens.
\section*{Acknowledgement}
CYC, ZSY, and FL are supported by the Australian Research Council (ARC) with grant number DE240101089, and FL is also supported by ARC with grant number DP230101540 and the NSF\&CSIRO Responsible AI program with grant number 2303037. 
JZQ is supported by ARC with grant number DP240101006. 
This research is also supported by The University of Melbourne’s Research Computing Services and the Petascale Campus Initiative. We sincerely appreciate the time and dedication of the reviewers in carefully reviewing our manuscript.

\bibliographystyle{plain}
\bibliography{main.bib}

\clearpage
\appendix

\section{The Problem Setting of VR}
\label{app:prob}
\begin{figure}[ht]
    \centering
    \includegraphics[width=\textwidth]{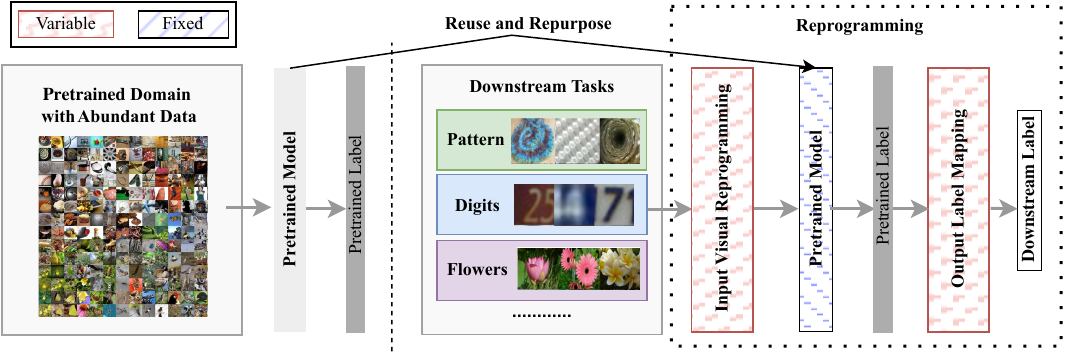}
    \caption{The problem setting of Visual Reprogramming. The left part shows the pretrained model and corresponding dataset, while the right part shows downstream tasks. The pretrained model is fixed, whereas the input VR and output LM modules are variable.}
    \label{fig:problem_setting}
\end{figure}

The task of VR reuses fixed pretrained models for downstream tasks. As illustrated in Figure~\ref{fig:problem_setting}, an input VR module operates before pretrained models, directly altering the input space of downstream tasks. Concurrently, an output LM function acts after pretrained models, taking the predicted pretrained labels as input and outputting those for downstream tasks. Hence, VR achieves the reusability of pretrained models for downstream tasks without adapting the model parameters, primarily through modifications to the input and output spaces.

\section{Recent Work in Transfer Learning}
\label{app:transfer}

Visual reprogramming is one type of methods that aim to obtain models on downstream tasks with the help of pretrained models. This process is similar to the aim of transfer learning which is used to leverage knowledge from a data-rich domain \citep{fang2020open} or a pretrained model \citep{wang2019transfer} to address tasks on other domains. The former is known as \emph{domain adaptation}, and the latter is known as finetuning. 

\textbf{Finetuning.} Given a pretrained model, finetuning uses trainable parameters to accommodate new task-specific information of the downstream tasks. As pretrained models grow in size, recent progresses in transfer learning have prioritized \emph{parameter-efficient finetuning}~(PEFT)~\citep{hu2021lora} to support resource-friendly adaptation. Regarding PEFT, the prevailing methods can be categorized as follows. The most widely adopted approach is selective finetuning \citep{russakovsky2015imagenet,zaken2021bitfit}, which adjusts a subset of parameters from the pretrained model while keeping the remaining components fixed, thereby reducing the total number of trainable parameters for downstream tasks. Other methods may involve adding adapters \citep{chen2024conv,lei2023conditional,pan2022st}, which introduce extra trainable layers or parameters to the pretrained model and finetune only these adapters during training.
Moreover, low-rank adaptation methods \citep{hu2021lora,zanella2024low,zhu2024melo}  have also been proposed for pretrained Vision Transformers. They apply low-rank decomposition to the parameters during training, achieving remarkable performance on downstream tasks with a significantly reduced number of parameters. Additionally, Prompt Tuning methods \citep{han20232,jia2022visual,wang2023transhp}, similarly directed at pretrained Vision Transformers, integrate trainable parameters parallel to the features at the input and intermediate layers.
The primary distinction of these methods from VR \citep{bahng2022exploring,cai2024sample,chen2023understanding,tsai2020transfer,tsao2024autovp,zhang2024exploring} lies in their necessity to be designed according to different pretrained model architectures and may also involve modifying the model weights. In contrast, VR is model-agnostic and does not require alterations to pretrained model parameters.

\textbf{Domain Adaptation.} \emph{Domain adaptation}~(DA) aims to bridge distributional gap by aligning feature spaces of the source task to the target domain with different data distributions \citep{fang2022semi, luo2020progressive, wang2023cal}. 
Often, DA is achieved by learning invariant representations or transforming parameters to manage domain-specific shifts of the source and target data.
CDAN~\citep{long2018conditional} addresses this by introducing a conditional discriminator for class label-conditioned feature adaptation, while UDA~\citep{long2016unsupervised} leverages residual layers to capture both domain-specific and domain-shared representations.
More recently, source-free DA~\citep{liang2020we, yi2023source, zhang2022divide}, which seeks adaptation without access to the source data, has gained popularity due to growing concerns over data privacy and storage limitations, as well as the need for adaptation in scenarios where source data is inaccessible~\citep{chi2021tohan, dong2023diversity}.

\section{A Simple Probability Estimation Example}
\label{app:example}
\begin{figure}[ht]
    \centering
    \includegraphics[width=\linewidth]{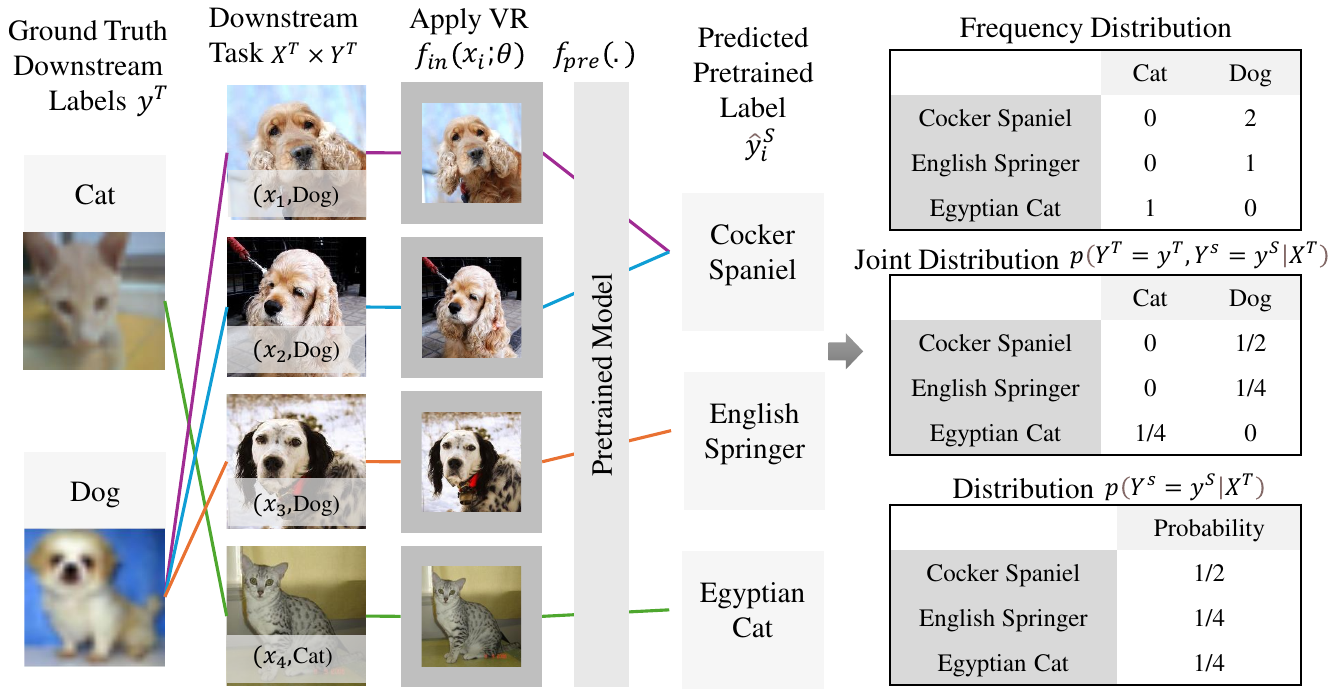}
    \caption{A simple example to help understand how to estimate $p(Y^{\rm T} = y^{\rm T}, Y^{\rm S} = y^{\rm S} \mid X^{\rm T})$ and $p(Y^{\rm S} = y^{\rm S} \mid X^{\rm T})$.}
    \label{fig:example}
\end{figure}

We aim to estimate $p(Y^{\rm T} = y^{\rm T}, Y^{\rm S} = y^{\rm S} \mid X^{\rm T})$ and $p(Y^{\rm S} = y^{\rm S} \mid X^{\rm T})$ for BLM and BLM+ in this paper. Here, we employ a simple example (without Laplace smoothing) to help understand how to estimate these two probabilities.

The conditional probability $p(Y^{\rm T} = y^{\rm T}, Y^{\rm S} = y^{\rm S} \mid X^{\rm T})$ represents the joint distribution of $Y^{\rm T}$ and $Y^{\rm S}$, given the input reprogramming $f_{\rm in}(\cdot;\theta)$, the pretrained model $f_{\rm pre}(\cdot)$, and the variable $X^{\rm T}$ of the downstream task. Similarly, $p(Y^{\rm S} = y^{\rm S} \mid X^{\rm T})$ represents the distribution of $Y^{\rm S}$ under these conditions.

We consider the following example shown in Figure~\ref{fig:example}. It is assumed that $\mathcal{Y}^{\rm T} = \{\tt Cat, \tt Dog\}$ and $\mathcal{Y}^{\rm S} = \{\tt CockerSpaniel, \tt EnglishSpringer, \tt EgyptianCat\}$. The Downstream samples are
\begin{equation} \notag
     \{(x_1, {\tt Dog}), (x_2, {\tt Dog}), (x_3, {\tt Dog}), (x_4, {\tt Cat})\} \in \mathcal{X}^{\rm T} \times \mathcal{Y}^{\rm T}.
\end{equation}
If the reprogrammed predictions calculated by $f_{\rm pre}(f_{\rm in}(x_i ; \theta))$ are 
\begin{equation} \notag
    \{x_1: {\tt CockerSpaniel}, x_2: {\tt CockerSpaniel}, x_3: {\tt EnglishSpringer}, x_4: {\tt EgyptianCat}\},
\end{equation} 
then the joint distribution $p(Y^{\rm T} = y^{\rm T}, Y^{\rm S} = y^{\rm S} \mid X^{\rm T})$ can be estimated as a matrix with the following nonzero values:
\begin{align*}
    & p(Y^{\rm T} = \tt Dog, Y^{\rm S} = \tt CockerSpaniel \mid X^{\rm T}) = \frac{1}{2}, \\
    & p(Y^{\rm T} = \tt Dog, Y^{\rm S} = \tt EnglishSpringer \mid X^{\rm T}) = \frac{1}{4}, \\
    & p(Y^{\rm T} = \tt Cat, Y^{\rm S} = \tt EgyptianCat \mid X^{\rm T}) = \frac{1}{4},
\end{align*}
as is shown in Figure~\ref{fig:example}. Similarly, $p(Y^{\rm S} = y^{\rm S} \mid X^{\rm T})$ can also be estimated.

\section{Detailed Procedures of Output LM Methods}
\label{app:algorithm}

This section provides a detailed exposition of gradient-free LM methods. Such methods, derived from data distributions, obviate the need for gradients in the output mapping phase. The pseudocode is presented below. Similar to Section \ref{sec:theory}, $\omega$ represents the one-to-one LM or probabilistic LM.

\subsection{Random Label Mapping (RLM)}
\begin{algorithm}
\caption{Random Label Mapping for VR}
\label{alg:rlm}
\begin{algorithmic}[1] 
\STATE {\bfseries Input:} Pretrained label space $\mathcal{Y}^{\rm S}$ with $k_{\rm S}$ labels, downstream label space $\mathcal{Y}^{\rm T}$ with $k_{\rm T}$ labels
\STATE {\bfseries Output:} One-to-one LM $\omega \in \{0,1\}^{k_{\rm S} \times k_{\rm T}}$
\STATE Initialize $\omega \leftarrow \{0\}^{k_{\rm S} \times k_{\rm T}}$, temp set $T\leftarrow\{\}$ to store matched pretrained labels
\STATE \text{\# Computing output mapping $\omega$}
\FOR{$y^{\rm T} \in \mathcal{Y}^{\rm T}$}
\STATE Randomly select $y^{\rm S} \in \mathcal{Y}^{\rm S}-T$
\STATE $\omega_{y^{\rm S},y^{\rm T}}\leftarrow 1$
\STATE $T \leftarrow T \cup \{y^{\rm S}\}$
\ENDFOR
\RETURN $\omega$
\end{algorithmic}
\end{algorithm}

The process of \textit{random label mapping} (RLM) is outlined in Algorithm \ref{alg:rlm}. , where the computation of $\omega$ does not involve the downstream training set. The algorithm establishes a random one-to-one mapping between the pretrained and the downstream labels, ensuring that each $y^{\rm T}$ corresponds to a unique $y^{\rm S}$. RLM is computed once before learning the input VR $f(\cdot;\theta)$.

\subsection{Frequent Label Mapping (FLM)}

\begin{algorithm}
\caption{Computing Frequency Distribution Matrix of [predicted pretrained label, ground-truth downstream label]}
\label{alg:dm}
\begin{algorithmic}[1] 
\STATE {\bfseries Input:} Downstream training set $\{(x^{\rm T}_i, y^{\rm T}_i)\}_{i=1}^{n}$, given input VR $f_{\rm in}(\cdot;\theta)$ and pretrained model $f_{\rm pre}(\cdot)$ with the $j$th dimension being $f_{\rm pre}(\cdot)_j$
\STATE {\bfseries Output:} Frequency distribution matrix $d \in \mathbb{Z}^{k_{\rm S} \times k_{\rm T}}$
\STATE Initialize $d \leftarrow \{0\}^{k_{\rm S} \times k_{\rm T}}$
\STATE \text{\# Computing frequency distribution matrix $d$}
\FOR{$i=1...n$}
\STATE $\hat{y_i}^{\rm S} \leftarrow \arg \max_j (f_{\rm pre}(f_{\rm in}(x^{\rm T}_i;\theta))_j)$
\STATE $d_{\hat{y_i}^{\rm S},{y_i}^{\rm T}}\leftarrow d_{\hat{y_i}^{\rm S},{y_i}^{\rm T}}+1$
\ENDFOR
\RETURN $d$
\end{algorithmic}
\end{algorithm}

\begin{algorithm}
\caption{Frequent Label Mapping for VR}
\label{alg:flm}
\begin{algorithmic}[1] 
\STATE {\bfseries Input:} Pretrained label space $\mathcal{Y}^{\rm S}$ with $k_{\rm S}$ labels, downstream label space $\mathcal{Y}^{\rm T}$ with $k_{\rm T}$ labels, downstream training set $\{(x^{\rm T}_i, y^{\rm T}_i)\}_{i=1}^{n}$, given pretrained model $f_{\rm pre}(\cdot)$
\STATE {\bfseries Output:} One-to-one LM $\omega \in \{0,1\}^{k_{\rm S} \times k_{\rm T}}$
\STATE Initialize $\omega \leftarrow \{0\}^{k_{\rm S} \times k_{\rm T}}$, temp set $T\leftarrow\{\}$ to store matched pretrained labels, initialize $f_{\rm in}(\cdot;\theta)$ ($\theta \leftarrow \mathbf{0}$)
\STATE \text{\# Computing frequency distribution matrix $d$}
\STATE Use Algorithm \ref{alg:dm} to obtain $d$
\STATE \text{\# Computing output mapping $\omega$}
\WHILE{size of $T$ is not $k_{\rm T}$}
\STATE Find the maximum $d_{y^{\rm S}, y^{\rm T}}$ in $d$
\STATE $\omega_{y^{\rm S},y^{\rm T}}\leftarrow 1$
\STATE $d_{y^{\rm S},t}\leftarrow 0$ for $t=1,2,...,k_{\rm T}$
\STATE $d_{s,y^{\rm T}}\leftarrow 0$ for $s=1,2,...,k_{\rm S}$
\STATE $T \leftarrow T \cup \{y^{\rm S}\}$
\ENDWHILE
\RETURN $\omega$
\end{algorithmic}
\end{algorithm}

The procedure of \textit{frequent label mapping} (FLM) is outlined in Algorithm \ref{alg:flm}. Initially, it utilizes the pretrained model to obtain predicted pretrained labels for samples of the downstream task. Subsequently, it computes a joint distribution matrix between the predicted pretrained labels and the ground-truth downstream labels. Finally, employing a greedy algorithm, it iteratively identifies the maximum value in the rows and columns corresponding to unmatched label pairs in the matrix to determine the one-to-one mappings. FLM is also computed prior to the training of $f_{\rm in}(\cdot;\theta)$.

\subsection{Iterative Label Mapping (ILM)}

\begin{algorithm}
\caption{Iterative Label Mapping for VR}
\label{alg:ilm}
\begin{algorithmic}[1] 
\STATE {\bfseries Input:} Pretrained label space $\mathcal{Y}^{\rm S}$ with $k_{\rm S}$ labels, downstream label space $\mathcal{Y}^{\rm T}$ with $k_{\rm T}$ labels, downstream training set $\{(x^{\rm T}_i, y^{\rm T}_i)\}_{i=1}^{n}$, given pretrained model $f_{\rm pre}(\cdot)$, total iteration number $E$, learning rate $a$
\STATE {\bfseries Output:} One-to-one LM matrix $\omega \in \{0,1\}^{k_{\rm S} \times k_{\rm T}}$
\STATE Initialize $\omega \leftarrow \{0\}^{k_{\rm S} \times k_{\rm T}}$, temp set $T\leftarrow\{\}$ to store matched pretrained labels, initialize $f_{\rm in}(\cdot;\theta)$ ($\theta \leftarrow \mathbf{0}$)
\FOR{$e=1...E$}
\STATE \text{\# Computing frequency distribution matrix $d$}
\STATE Use Algorithm \ref{alg:dm} to obtain $d$
\STATE \text{\# Computing output mapping $\omega$}
\WHILE{size of $T$ is not $k_{\rm T}$}
\STATE Find the maximum $d_{y^{\rm S}, y^{\rm T}}$ in $d$ 
\STATE $\omega_{y^{\rm S},y^{\rm T}}\leftarrow 1$
\STATE $d_{y^{\rm S},t}\leftarrow 0$ for $t=1,2,...,k_{\rm T}$
\STATE $d_{s,y^{\rm T}}\leftarrow 0$ for $s=1,2,...,k_{\rm S}$
\STATE $T \leftarrow T \cup \{y^{\rm S}\}$
\ENDWHILE
\STATE \text{\# Training $f_{\rm in}(\cdot;\theta)$}
\STATE $\theta \leftarrow \theta - a\cdot \nabla_{\theta}\sum_{i=1}^n\ell(y^{\rm T}_i, f^{\omega}_{\rm out}(f_{\rm pre}(f_{\rm in}(x^{\rm T}_i;\theta))))$
\ENDFOR
\RETURN $\omega$
\end{algorithmic}
\end{algorithm}

As an enhanced version of FLM, \textit{iterative label mapping} (ILM) employs interleaved updates with $\theta$ at each epoch, as outlined in Algorithm \ref{alg:ilm}. Such interleaved updates take into consideration the variations in the output space induced by updates to the input VR during the training process, thereby ensuring that the output LM will be matched with the updated VR.

\subsection{Bayesian-guided Label Mapping (BLM)}
The detailed procedure of \textit{Bayesian-guided label mapping} (BLM) proposed in this paper is shown in Algorithm \ref{alg:blm}. Compared to ILM, BLM replaces the previous one-to-one mapping $\omega \in \{0, 1\}^{{k_{\rm S}} \times k_{\rm T}}$ with probabilistic LM $\omega \in [0,1]^{{k_{\rm S}} \times k_{\rm T}}$, both satisfying $\sum_{j=1}^{k_{\rm S}} \omega_{j,\cdot} = 1$ as stated in Section \ref{sec:theory}. Meanwhile, the process of matrix computation for BLM is based on the Bayes’ theorem (detailed in Section \ref{sec:method}) to reflect the complex relationship among label spaces, rather than determining the optimal match through the oversimplified greedy algorithm.

\begin{algorithm}
\caption{Bayesian-guided Label Mapping for VR}
\label{alg:blm}
\begin{algorithmic}[1] 
\STATE {\bfseries Input:} Pretrained label space $\mathcal{Y}^{\rm S}$ with $k_{\rm S}$ labels, downstream label space $\mathcal{Y}^{\rm T}$ with $k_{\rm T}$ labels, downstream training set $\{(x^{\rm T}_i, y^{\rm T}_i)\}_{i=1}^{n}$, given pretrained model $f_{\rm pre}(\cdot)$, total iteration number $E$, learning rate $a$, Laplace smoothing $\lambda$
\STATE {\bfseries Output:} Probabilistic LM $\omega \in [0,1]^{k_{\rm S} \times k_{\rm T}}$
\STATE Initialize $\omega \leftarrow \{0\}^{k_{\rm S} \times k_{\rm T}}$,  initialize $f_{\rm in}(\cdot;\theta)$ ($\theta \leftarrow \mathbf{0}$), temp matrix $P=[P_1, ..., P_{k_{\rm S}}]^{\top}\in \mathbb{R}^{k_{\rm S}}$
\FOR{$e=1...E$}
\STATE \text{\# Computing frequency distribution matrix $d$}
\STATE Use Algorithm \ref{alg:dm} to obtain $d$
\STATE \text{\# Computing output mapping $\omega$}
\STATE $P_{y^{\rm S}}\leftarrow\sum_{t=1}^{k_{\rm T}}d_{y^{\rm S},t}+\lambda$ for $y^{\rm S}=1...k_{\rm S}$
\STATE $\omega_{y^{\rm S}, y^{\rm T}} \leftarrow d_{y^{\rm S}, y^{\rm T}} / P_{y^{\rm S}}$ for $y^{\rm S}=1...k_{\rm S}, y^{\rm T}=1...k_{\rm T}$
\STATE \text{\# Column normalization of $\omega$}
\STATE $\omega_{y^{\rm S}, y^{\rm T}} \leftarrow \omega_{y^{\rm S}, y^{\rm T}} / \sum_{s=1}^{k_{\rm S}}\omega_{s, y^{\rm T}}$ for $y^{\rm S}=1...k_{\rm S}, y^{\rm T}=1...k_{\rm T}$
\STATE \text{\# Training $f_{\rm in}(\cdot;\theta)$}
\STATE $\theta \leftarrow \theta - a\cdot \nabla_{\theta}\sum_{i=1}^n\ell(y^{\rm T}_i, f^{\omega}_{\rm out}(f_{\rm pre}(f_{\rm in}(x^{\rm T}_i;\theta))))$
\ENDFOR
\RETURN $\omega$
\end{algorithmic}
\end{algorithm}

\subsection{Improved Bayesian-guided Label Mapping (BLM+)}

\begin{algorithm}
\caption{Computing Probability Aggregation Matrix by Top-$K$ Predicted Probabilities}
\label{alg:fdm}
\begin{algorithmic}[1] 
\STATE {\bfseries Input:} Downstream training set $\{(x^{\rm T}_i, y^{\rm T}_i)\}_{i=1}^{n}$, given input VR $f_{\rm in}(\cdot;\theta)$ and pretrained model $f_{\rm pre}(\cdot)$ with the $j$th dimension being $f_{\rm pre}(\cdot)_j$, Laplace smoothing $\lambda$, top-$K$ value $k$
\STATE {\bfseries Output:} Probability aggregation matrix $d' \in \mathbb{R}^{k_{\rm S} \times k_{\rm T}}$
\STATE Initialize $d' \leftarrow \{0\}^{k_{\rm S} \times k_{\rm T}}$, temp matrix $Q =[Q_1, ..., Q_{k}]^{\top} \in \mathbb{R}^{k}$, $K=[K_1, ..., K_{k}]^{\top} \in \mathbb{N^{+}}^{k}$
\STATE \text{\# Computing aggregation distribution matrix $d'$}
\FOR{$i=1...n$}
\STATE $Q \leftarrow \text{TopK}_{j}({\rm softmax}(f_{\rm pre}(f_{\rm in}(x^{\rm T}_i;\theta))_j), k)$\text{     } \# top-$K$
\STATE $K \leftarrow \text{TopKIndices}_{j}(f_{\rm pre}(f_{\rm in}(x^{\rm T}_i;\theta))_j, k)$
\STATE $d'_{K_s,{y_i}^{\rm T}}\leftarrow d'_{K_s,{y_i}^{\rm T}}+Q_s$ for $s=1...k$ \text{     } \# Probability Aggregation
\ENDFOR
\RETURN $d'$
\end{algorithmic}
\end{algorithm}

As mentioned in Section \ref{sec:method}, BLM+ extends BLM by incorporating the aggregation of top-$K$ predicted probabilities, shown in Algorithm \ref{alg:blmpp}. 

This divergence manifests in the computation process of the joint distribution matrix between predicted pretrained labels and ground-truth downstream labels. Previous methods (i.e., RLM, ILM, BLM) computed a non-negative integer matrix $d \in \mathbb{Z}^{k_{\rm S}\times k_{\rm T}}$ based on the frequency of occurrence of samples (Algorithm \ref{alg:dm}). 

In BLM+, the calculation entails replacing the deterministic frequencies with predicted probabilities from the top $K$ pretrained labels to estimate the joint distribution matrix $d \in \mathbb{R}^{k_{\rm S}\times k_{\rm T}}$, as is shown in Algorithm \ref{alg:fdm}. In the procedure, the probability aggregation substitutes the binary frequency distribution $\{0, 1\}$ with a probability distribution within the range of $[0, 1]$, while the top-$K$ technique serves to retain the most probable $k$ predicted labels rather than selecting only one (i.e., BLM) or all labels (denoted as `-Top-K' in ablation studies in Section \ref{sec:experiments}).

\begin{algorithm}[ht]
\caption{Improved Bayesian-guided Label Mapping for VR}
\label{alg:blmpp}
\begin{algorithmic}[1] 
\STATE {\bfseries Input:} Pretrained label space $\mathcal{Y}^{\rm S}$ with $k_{\rm S}$ labels, downstream label space $\mathcal{Y}^{\rm T}$ with $k_{\rm T}$ labels, downstream training set $\{(x^{\rm T}_i, y^{\rm T}_i)\}_{i=1}^{n}$, given pretrained model $f_{\rm pre}(\cdot)$ with the $j$th dimension being $f_{\rm pre}(\cdot)_j$, total iteration number $E$, learning rate $a$, Laplace smoothing $\lambda$, top-$K$ value $k$
\STATE {\bfseries Output:} Probabilistic LM $\omega \in [0,1]^{k_{\rm S} \times k_{\rm T}}$
\STATE Initialize $\omega \leftarrow \{0\}^{k_{\rm S} \times k_{\rm T}}$,  initialize $f_{\rm in}(\cdot;\theta)$ ($\theta \leftarrow \mathbf{0}$), temp matrix $P=[P_1, ..., P_{k_{\rm S}}]^{\top}\in \mathbb{R}^{k_{\rm S}}$
\FOR{$e=1...E$}
\STATE \text{\# Computing probability aggregation matrix $d'$}
\STATE Use Algorithm \ref{alg:fdm} to obtain $d'$
\STATE \text{\# Computing output mapping $\omega$}
\STATE $P_{y^{\rm S}}\leftarrow\sum_{t=1}^{k_{\rm T}}d_{y^{\rm S},t}+\lambda$ for $y^{\rm S}=1...k_{\rm S}$
\STATE $\omega_{y^{\rm S}, y^{\rm T}} \leftarrow d_{y^{\rm S}, y^{\rm T}} / P_{y^{\rm S}}$ for $y^{\rm S}=1...k_{\rm S}, y^{\rm T}=1...k_{\rm T}$
\STATE \text{\# Column normalization of $\omega$}
\STATE $\omega_{y^{\rm S}, y^{\rm T}} \leftarrow \omega_{y^{\rm S}, y^{\rm T}} / \sum_{s=1}^{k_{\rm S}}\omega_{s, y^{\rm T}}$ for $y^{\rm S}=1...k_{\rm S}, y^{\rm T}=1...k_{\rm T}$
\STATE \text{\# Training $f_{\rm in}(\cdot;\theta)$}
\STATE $\theta \leftarrow \theta - a\cdot \nabla_{\theta}\sum_{i=1}^n\ell(y^{\rm T}_i, f^{\omega}_{\rm out}(f_{\rm pre}(f_{\rm in}(x^{\rm T}_i;\theta))))$
\ENDFOR
\RETURN $\omega$
\end{algorithmic}
\end{algorithm}
\subsection{A Quick Version of ILM, BLM, and BLM+}
\label{app:quick}
The baseline method FLM calculates the mapping $\omega$ once and keeps it fixed, while ILM and our methods update $\omega$ at each step. However, updating $\omega$ does not require running the model twice to obtain current predictions for each epoch. Instead, predictions from the most recent epoch can be reused. Therefore, only in the first epoch is it necessary to run the pretrained model an additional time to initialize the weights of LM, which is the same as FLM. In subsequent epochs, these methods do not require any extra runs. More details can be found in the quick version of our released code.

% ======
\section{Detailed Theoretical Analysis}
\label{app:theory}

\subsection{Justification and Analysis}\label{app: justification}
In this section, we investigate why probabilistic LM should be favored over deterministic one-to-one mapping.
This analysis assumes the existence of true correspondences between labels in the pretrained and downstream domains.
We establish that, under certain conditions, probabilistic LM~(Definition.~\ref{def:plm}) outperforms deterministic LM~(Definition~\ref{def:dlm}) in estimating the distribution of true label correspondences, quantified by the expected accuracy of the LM function~(Eq.~\eqref{eq:exp_acc}).

This analysis focuses on the comparisons of LM. 
Given that the pretrained model $f_{\rm pre}$, input $x$, and input transformations $f_{\rm in}$ are the same across different LM methods, we will omit these notations below unless explicitly needed.
We begin by introducing key definitions.

\begin{definition}[\textit{probabilistic label mapping} (PLM)]\label{def:plm}
    Let $\mathcal{F}_{\rm plm} \subset \mathcal{F}_{\rm lm}$ be a set of mapping functions such that for all $f_{\rm plm} \in \mathcal{F}_{\rm plm}$, we have
    \begin{equation}
        p(f_{\rm plm}(y^{\rm S}) = y^{\rm T} | y^{\rm S}) = \omega_{y^{\rm S}, y^{\rm T}}, \; \textrm{s.t.} \sum_{y^{\rm S} \in \mathcal{Y}^{\rm S}} \omega_{y^{\rm S}, y^{\rm T}} = 1, \forall y^{\rm T} \in \mathcal{Y}^{\rm T}.
    \end{equation}
    Here, $p(f_{\rm plm}(y^{\rm S}) = y^{\rm T} | y^{\rm S})$ is the conditional probability that a pretrained label $y^{\rm S}$ is mapped to a downstream label $y^{\rm T}$.
\end{definition}

\begin{definition}[\textit{deterministic label mapping} (DLM)]\label{def:dlm}
    Let $\mathcal{F}_{\rm dlm} \subset \mathcal{F}_{\rm lm}$ be a set of mapping functions, defined by $f_{\rm dlm}(y^{\rm S}) = g(y^{\rm S})$ for all $y^{\rm S} \in \mathcal{Y}^{\rm S}$, where $g(y^{\rm S})$ specifies a deterministic rule, either $g(y^{\rm S}) = y^{\rm S}$ for identity mapping; or $g(y^{\rm S}) = 1 - y^{\rm S}$ for flip mapping, respectively.
    Then, deterministic label mapping is defined as: $ \forall f_{\rm dlm} \in \mathcal{F}_{\rm dlm}$,
    \begin{equation}
        p(f_{\rm dlm}(y^{\rm S}) = y^{\rm T} | y^{\rm S}) = \delta_{y^{\rm S}, g(y^{\rm S})}, \; \textrm{with} \; \delta_{y^{\rm S}, g(y^{\rm S})} =
        \left\{
            \begin{matrix}
                \begin{aligned}
                    & 1 && \textrm{if } g(y^{\rm S}) = y^{\rm T} \\
                    & 0 && \textrm{otherwise}
                \end{aligned}
            \end{matrix}
        \right.,
    \end{equation}
    where $\delta$ is the Kronecker delta function, ensuring $y^{\rm T}$ is uniquely mapped from a pretrained label $y^{\rm S}$.
\end{definition}

Then, we demonstrate the conditions where ${\rm Acc}(f_{\rm plm}) \geq {\rm Acc}(f_{\rm dlm})$.
Since DLM is defined by $g$, following either identity mapping or flip mapping exclusively, each case will be discussed separately.

\begin{lemma}\label{lemma:identity}
    Given a collection of paired labels $\{ (y^{\rm S}, y^{\rm T}) \}_{i=1}^{n}$. If the aggregate conditional probabilities $p(y^{\rm S} = 1 | y^{\rm T} = 0) \geq p(y^{\rm S} = 0 | y^{\rm T} = 0)$ and $p(y^{\rm S} = 0 | y^{\rm T} = 1) \geq p(y^{\rm S} = 1 | y^{\rm T} = 1)$ hold true, and considering $f_{\rm dlm}$ is defined by identity mapping as outlined in Definition~\ref{def:dlm}, then it follows that ${\rm Acc}(f_{\rm plm}) \geq {\rm Acc}(f_{\rm dlm})$.
\end{lemma}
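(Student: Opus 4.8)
The plan is to exploit the binary structure to collapse Eq.~\eqref{eq:exp_acc} into a two-parameter expression and then compare the two mappings term by term. First I would write out ${\rm Acc}(f_{\rm dlm})$ for the identity rule explicitly: since $g(y^{\rm S})=y^{\rm S}$, the Kronecker delta $\delta_{y^{\rm S},g(y^{\rm S})}$ in Definition~\ref{def:dlm} keeps only the ``agreeing'' pairs, so the accuracy reduces to the weight placed on $(y^{\rm S}=0,y^{\rm T}=0)$ and $(y^{\rm S}=1,y^{\rm T}=1)$. Next I would instantiate $f_{\rm plm}$ with the Bayes-consistent assignment from Section~\ref{sec:method}: in the binary case $\omega$ has exactly two free entries (one per column, because each column sums to $1$), and the natural admissible choice is $\omega_{y^{\rm S},y^{\rm T}}=p(y^{\rm S}\mid y^{\rm T})$, which satisfies $\sum_{y^{\rm S}\in\mathcal{Y}^{\rm S}}\omega_{y^{\rm S},y^{\rm T}}=1$ automatically and ties the mapping directly to the conditionals that appear in the hypotheses.

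With these two substitutions I would form the difference $D\equiv{\rm Acc}(f_{\rm plm})-{\rm Acc}(f_{\rm dlm})$, rewrite the weights in Eq.~\eqref{eq:exp_acc} via Bayes' rule in terms of the conditionals $p(y^{\rm S}\mid y^{\rm T})$, and use the column constraint to eliminate one entry per column. Grouping the remaining terms by the value of $y^{\rm T}$ splits $D$ into two independent, nonnegatively weighted contributions: one governed by $u\equiv p(y^{\rm S}=0\mid y^{\rm T}=0)$ and one by $v\equiv p(y^{\rm S}=1\mid y^{\rm T}=1)$. Each contribution then reduces to a single-variable quadratic; concretely the $y^{\rm T}=0$ term takes the form $u^2+(1-u)^2-u=(2u-1)(u-1)$ up to the prior factor $p(y^{\rm T}=0)$, and symmetrically the $y^{\rm T}=1$ term becomes $(2v-1)(v-1)$ up to $p(y^{\rm T}=1)$.

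The final step is a sign check. The factor $(u-1)$ is always $\le 0$, so the quadratic is nonnegative exactly when $2u-1\le 0$, i.e.\ $u\le\tfrac12$, which is precisely the hypothesis $p(y^{\rm S}=1\mid y^{\rm T}=0)\ge p(y^{\rm S}=0\mid y^{\rm T}=0)$; the symmetric hypothesis $p(y^{\rm S}=0\mid y^{\rm T}=1)\ge p(y^{\rm S}=1\mid y^{\rm T}=1)$ forces $v\le\tfrac12$ and handles the $v$-term. Summing the two nonnegative contributions gives $D\ge 0$, i.e.\ ${\rm Acc}(f_{\rm plm})\ge{\rm Acc}(f_{\rm dlm})$. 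I expect the main obstacle to be bookkeeping rather than conceptual: the delicate part is consistently translating between the weights in Eq.~\eqref{eq:exp_acc} and the conditional probabilities in the hypotheses through Bayes' rule, and carrying the column-normalization of the Bayesian $\omega$ correctly so the per-term quadratic emerges cleanly. A secondary check is confirming that the chosen $\omega$ is admissible (entries in $[0,1]$ with columns summing to one) so that indeed $f_{\rm plm}\in\mathcal{F}_{\rm plm}$.
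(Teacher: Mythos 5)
Your proposal follows the same overall route as the paper's proof: expand Eq.~\eqref{eq:exp_acc} by conditioning on $y^{\rm T}$, reduce ${\rm Acc}(f_{\rm dlm})$ for the identity rule to the two diagonal terms $p(y^{\rm T})\,p(y^{\rm S}=y^{\rm T}\mid y^{\rm T})$, compare the PLM and DLM contributions separately for $y^{\rm T}=0$ and $y^{\rm T}=1$, and close each comparison with a sign check driven by the two hypotheses. Your algebra is also right: with $u=p(y^{\rm S}=0\mid y^{\rm T}=0)$ and your choice $\omega_{0,0}=u$, $\omega_{1,0}=1-u$, the $y^{\rm T}=0$ gap is $u^2+(1-u)^2-u=(2u-1)(u-1)\ge 0$ exactly when $u\le\tfrac{1}{2}$, which is the first hypothesis; the $v$-term is symmetric.

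The one place where you diverge from the paper, and fall short of the stated lemma, is the instantiation $\omega_{y^{\rm S},y^{\rm T}}=p(y^{\rm S}\mid y^{\rm T})$. Definition~\ref{def:plm} defines $\mathcal{F}_{\rm plm}$ as the family of \emph{all} mappings with column-normalized $\omega\in[0,1]^{2\times 2}$, and the lemma asserts ${\rm Acc}(f_{\rm plm})\ge{\rm Acc}(f_{\rm dlm})$ for $f_{\rm plm}$ from that family, not for one distinguished member. The paper's proof accordingly keeps $\omega$ free and shows that the $y^{\rm T}=0$ comparison reduces to
\begin{equation*}
(\omega_{0,0}-\omega_{1,0})\,u+\omega_{1,0}-u=(\omega_{0,0}-1)(2u-1)\ge 0,
\end{equation*}
which holds for \emph{every} $\omega_{0,0}\in[0,1]$ once $u\le\tfrac{1}{2}$; your quadratic is the special case $\omega_{0,0}=u$. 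As proposed, your argument therefore proves the conclusion only for the Bayes-consistent choice of $\omega$, a genuinely weaker statement than the lemma. The repair is immediate: redo your per-term computation with $\omega_{0,0}$ (resp.\ $\omega_{1,1}$) left as a free parameter in $[0,1]$ and observe that the same sign argument goes through, at which point your final admissibility check for the specific $\omega$ also becomes unnecessary.
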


Lemma~\ref{lemma:identity} (proof in Appendix~\ref{app:proof}) implies that PLM achieves at least as high expected accuracy as DLM defined by identity mapping, under the following conditions: for downstream samples with $y^{\rm T}=0$, the inequality is satisfied when they are more likely to correspond to pretrained samples with $y^{\rm S}=1$ than those with $y^{\rm S}=0$; for downstream samples with $y^{\rm T}=1$, the inequality is satisfied when the corresponding pretrained samples are more likely to have $y^{\rm S}=0$ than $y^{\rm S}=1$.

\textbf{Uncertainty in Label Inter-Dependencies}.
Essentially, the conditions above reflect potential complex patterns of label correspondence that arise when inter-dependencies between the labels exist across domains.
While this ``label mismatch'' problem has been discussed in binary settings, it can be generalized to multi-class settings without loss of generality.
Unlike DLM, which merely relies on a static mapping rule and hence may fail when true label correspondence conflicts with this predefined rule, PLM captures the conditional probabilities of $y^{\rm T}$ given $y^{\rm S}$.
By harnessing the inherent uncertainty encoded in the probabilistic form of $\omega$, PLM is expected to achieve more robust label mapping predictions.

Next, we compare PLM with DLM using the flip mapping rule.

\begin{lemma}\label{lemma:flip}
    Given a collection of paired labels $\{ (y^{\rm S}, y^{\rm T}) \}_{i=1}^{n}$. If the aggregate conditional probabilities $p(y^{\rm S} = 0 | y^{\rm T} = 0) \leq p(y^{\rm S} = 1 | y^{\rm T} = 0)$ and $p(y^{\rm S} = 0 | y^{\rm T} = 1) \leq p(y^{\rm S} = 1 | y^{\rm T} = 1)$, and $f_{\rm dlm}$ is defined by flip mapping as outlined in Definition~\ref{def:dlm}, then ${\rm Acc}(f_{\rm plm}) \geq {\rm Acc}(f_{\rm dlm})$.
\end{lemma}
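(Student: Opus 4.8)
The plan is to prove Lemma~\ref{lemma:flip} by a direct, case-by-case computation in the binary setting, treating it as the companion of Lemma~\ref{lemma:identity} in which the flip rule $g(y^{\rm S})=1-y^{\rm S}$ replaces the identity rule. First I would specialise the expected-accuracy functional of Eq.~\eqref{eq:exp_acc} to $\mathcal{Y}^{\rm S}=\mathcal{Y}^{\rm T}=\{0,1\}$ and write both $\mathrm{Acc}(f_{\rm plm})$ and $\mathrm{Acc}(f_{\rm dlm})$ as explicit sums over the four pairs $(y^{\rm S},y^{\rm T})$. For the flip rule in Definition~\ref{def:dlm} the Kronecker delta $\delta_{y^{\rm S},g(y^{\rm S})}$ is supported exactly on the two off-diagonal pairs $(0,1)$ and $(1,0)$, so $\mathrm{Acc}(f_{\rm dlm})$ collapses to $p(y^{\rm S}=1)\,p(y^{\rm T}=0)+p(y^{\rm S}=0)\,p(y^{\rm T}=1)$, while $\mathrm{Acc}(f_{\rm plm})$ retains all four terms weighted by the entries $\omega_{y^{\rm S},y^{\rm T}}$ of Definition~\ref{def:plm}.

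Next I would form the difference $\Delta=\mathrm{Acc}(f_{\rm plm})-\mathrm{Acc}(f_{\rm dlm})$ and group its terms column-by-column, i.e.\ by $y^{\rm T}$. Using the column-stochastic constraint $\sum_{y^{\rm S}}\omega_{y^{\rm S},y^{\rm T}}=1$ to eliminate one entry per column, each column's contribution should rearrange into a marginal factor multiplying the signed gap between an $\omega$-weighted average of $p(y^{\rm S})$ and the relevant marginal. I would then insert the Bayesian values of $\omega$ from Section~\ref{sec:method} (the conditional probabilities, normalised column-wise) and rewrite everything in terms of the aggregate conditionals $p(y^{\rm S} | y^{\rm T})$ that appear in the hypotheses, using Bayes' rule together with $p(y^{\rm S})=\sum_{y^{\rm T}}p(y^{\rm S},y^{\rm T})$ and $p(y^{\rm T})=\sum_{y^{\rm S}}p(y^{\rm S},y^{\rm T})$. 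The two assumptions $p(y^{\rm S}=0 | y^{\rm T}=0)\le p(y^{\rm S}=1 | y^{\rm T}=0)$ and $p(y^{\rm S}=0 | y^{\rm T}=1)\le p(y^{\rm S}=1 | y^{\rm T}=1)$ are precisely what fix the orientation of these signed gaps, and in particular they jointly force $p(y^{\rm S}=0)\le p(y^{\rm S}=1)$.

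I expect the main obstacle to be the concluding sign argument. After the grouping, $\Delta$ splits into one column contribution that is non-negative and one that is non-positive, so $\Delta\ge 0$ does not follow term-by-term: the gain on one downstream label must be shown to dominate the loss on the other. Carrying this out needs both hypotheses used \emph{simultaneously}, in combination with the column-normalisation of $\omega$, rather than either condition in isolation; the binary structure (each conditional pair summing to one) is what lets the loss term be bounded by the gain term after clearing the normalising denominators, reducing $\Delta\ge 0$ to a single polynomial inequality in the joint probabilities that I would then check against the hypotheses. A secondary subtlety worth flagging is that the naive relabelling $y^{\rm S}\mapsto 1-y^{\rm S}$, which turns flip into identity, maps the second hypothesis onto the second hypothesis of Lemma~\ref{lemma:identity} but reverses the first; hence Lemma~\ref{lemma:flip} is a genuinely distinct regime that cannot be obtained from Lemma~\ref{lemma:identity} by symmetry alone, which is exactly why the direct computation is the safer route.
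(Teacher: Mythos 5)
Your plan stalls at exactly the step you flag as the main obstacle, and that step cannot be completed from the stated hypotheses. If the deterministic benchmark is taken to be the raw flip accuracy $p(y^{\rm T}=0)\,p(y^{\rm S}=1\mid y^{\rm T}=0)+p(y^{\rm T}=1)\,p(y^{\rm S}=0\mid y^{\rm T}=1)$, then, as you correctly observe, the two columns of $\Delta$ carry opposite signs and a cross-column balancing argument is required. But the hypotheses of the lemma constrain only the conditionals $p(y^{\rm S}\mid y^{\rm T})$, not the marginal $p(y^{\rm T})$ or the entries of $\omega$, and the balance depends on both. Concretely, take $p(y^{\rm S}=1\mid y^{\rm T}=0)$ and $p(y^{\rm S}=1\mid y^{\rm T}=1)$ both near $1$ (so both hypotheses hold) and $p(y^{\rm T}=0)$ near $1$: flip is then nearly perfect on the dominant column, any column-stochastic $\omega$ with $\omega_{1,0}<1$ loses there, and the rare $y^{\rm T}=1$ column cannot recoup the loss. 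So the ``single polynomial inequality'' you intend to check at the end is simply not implied by the hypotheses; no amount of algebra closes this gap. Two secondary issues: specialising $\omega$ to the Bayesian values of Section~\ref{sec:method} proves a narrower statement than the lemma, which (like Definition~\ref{def:plm} and the paper's argument) quantifies over generic column-stochastic $\omega$; and your product-of-marginals expression $p(y^{\rm S}=1)p(y^{\rm T}=0)+p(y^{\rm S}=0)p(y^{\rm T}=1)$ for the flip accuracy is not the form the paper works with, which folds $p(y^{\rm S})$ into $p(y^{\rm S}\mid y^{\rm T})$ when expanding the expectation in Eq.~\eqref{eq:exp_acc}.

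The paper's own proof is purely column-wise and never performs cross-column balancing. It writes $\mathrm{Acc}(f_{\rm dlm})=\sum_{y^{\rm T}}p(y^{\rm T})\bigl(1-p(y^{\rm S}=y^{\rm T}\mid y^{\rm T})\bigr)$ and instantiates the $y^{\rm T}=0$ benchmark as $1-p(y^{\rm S}=1\mid y^{\rm T}=0)=p(y^{\rm S}=0\mid y^{\rm T}=0)$ (symmetrically, $1-p(y^{\rm S}=0\mid y^{\rm T}=1)=p(y^{\rm S}=1\mid y^{\rm T}=1)$ for the other column). Against that benchmark, the column-$0$ inequality $\omega_{0,0}\,p(y^{\rm S}=0\mid y^{\rm T}=0)+\omega_{1,0}\,p(y^{\rm S}=1\mid y^{\rm T}=0)\ \geq\ p(y^{\rm S}=0\mid y^{\rm T}=0)$ reduces, via $\omega_{0,0}+\omega_{1,0}=1$, to $\omega_{1,0}\bigl(1-2\,p(y^{\rm S}=0\mid y^{\rm T}=0)\bigr)\geq 0$, which is exactly the first hypothesis; the second hypothesis disposes of the other column. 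Note that this benchmark is $p(y^{\rm S}=0\mid y^{\rm T}=0)$ rather than the $p(y^{\rm S}=1\mid y^{\rm T}=0)$ that a literal expansion of $p(y^{\rm S}\neq y^{\rm T}\mid y^{\rm T}=0)$ would give --- this substitution is precisely where your (accurate) observation that the raw flip accuracy cannot be dominated column-by-column parts company with the argument the paper actually runs. To reproduce the paper's lemma you must adopt its column-wise benchmark, at which point the proof is a two-line echo of Lemma~\ref{lemma:identity}; if you insist on the raw flip accuracy, the statement needs additional assumptions on $p(y^{\rm T})$ and $\omega$ before any proof can exist.
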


Lemma~\ref{lemma:flip} (proof in Appendix~\ref{app:proof}) establishes another sufficient condition under which PLM could achieve an expected accuracy at least as high as DLM defined by flip mapping.
The condition applies to all downstream samples, regardless of their labels (both $y^{\rm T} = 0$ or $y^{\rm T} = 1$), stating that it is more likely that their corresponding pretrained label being $y^{\rm S}=1$ rather than $y^{\rm S}=0$.

\textbf{Bias in Label Correspondences}.
The bias in Label correspondence refers to a phenomenon where a disproportionate number of downstream samples correspond to pretrained samples with a specific label. 
For example, consider a medical diagnosis task where both pretrained and downstream data come from populations with low disease prevalence, the label correspondences may exhibit this bias~\citep{weng2020addressing}.
While this bias may be overlooked by DLM, it could be captured and even exploited by PLM, which flexibly adjusts the weighting schemes, e.g., assigning higher value to $\omega_{1, 0}$ than $\omega_{0, 0}$ for samples where $y^{\rm T} = 0$, and to $\omega_{1, 1}$ over $\omega_{0, 1}$ for samples where $y^{\rm T}=1$.

\begin{corollary}\label{corollary}
    Let $f_{\rm plm}$ and $f_{\rm dlm}$ denote the label mapping functions defined in Definition~\ref{def:plm} and Definition~\ref{def:dlm}, respectively. Given pretrained and downstream label spaces $\mathcal{Y}^{\rm S} = \{0, 1\}$ and $\mathcal{Y}^{\rm T} = \{0, 1\}$, if for any joint distribution over $\mathcal{Y}^{\rm S} \times \mathcal{Y}^{\rm T}$,
    \begin{equation}
            \exists \, a \in \{0, 1\} \; s.t. \; p(y^{\rm S} = a | y^{\rm T} = \bar{a})  \geq p(y^{\rm S} = \bar{a} | y^{\rm T} = \bar{a}),
    \end{equation}
    where $\bar{a}$ is the opposite label of $a$, then we have ${\rm Acc}(f_{\rm plm}) \geq {\rm Acc}(f_{\rm dlm})$.
\end{corollary}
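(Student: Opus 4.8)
The plan is to derive the corollary directly from Lemma~\ref{lemma:identity} and Lemma~\ref{lemma:flip} by a case analysis on the form of $f_{\rm dlm}$, which by Definition~\ref{def:dlm} is either the identity rule $g(y^{\rm S}) = y^{\rm S}$ or the flip rule $g(y^{\rm S}) = 1 - y^{\rm S}$. Throughout I abbreviate the aggregate conditionals as $P_{s \mid t} = p(y^{\rm S} = s \mid y^{\rm T} = t)$ and repeatedly use the binary complementarity $P_{0\mid t} + P_{1\mid t} = 1$, so that an inequality such as $P_{1\mid 0} \ge P_{0\mid 0}$ is equivalent to $P_{1\mid 0} \ge \tfrac12$. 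The crucial structural observation is that the hypotheses of the two lemmas \emph{share} the first inequality $P_{1\mid 0} \ge P_{0\mid 0}$ and differ only in the direction of the second comparison, $P_{0\mid 1} \ge P_{1\mid 1}$ for Lemma~\ref{lemma:identity} versus $P_{0\mid 1} \le P_{1\mid 1}$ for Lemma~\ref{lemma:flip}.

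First I would unfold the existential hypothesis into the disjunction $[P_{1\mid 0} \ge P_{0\mid 0}]$ (the case $a=1$) or $[P_{0\mid 1} \ge P_{1\mid 1}]$ (the case $a=0$). I would then reduce to $a=1$ by symmetry: relabelling both label sets simultaneously, $y^{\rm S}\mapsto 1-y^{\rm S}$ and $y^{\rm T}\mapsto 1-y^{\rm T}$, leaves the expected-accuracy functional of Eq.~\eqref{eq:exp_acc} invariant, preserves the identity/flip classification of $f_{\rm dlm}$ (a simultaneous swap fixes both rules), and converts the $a=0$ inequality $P_{0\mid 1}\ge P_{1\mid 1}$ into the $a=1$ inequality $P_{1\mid 0}\ge P_{0\mid 0}$. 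Hence it suffices to treat the case $P_{1\mid 0}\ge P_{0\mid 0}$.

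With $P_{1\mid 0}\ge P_{0\mid 0}$ in hand — precisely the shared first hypothesis of both lemmas — I would split on the exhaustive second comparison. If $P_{0\mid 1}\ge P_{1\mid 1}$, then both hypotheses of Lemma~\ref{lemma:identity} hold and the lemma gives ${\rm Acc}(f_{\rm plm}) \ge {\rm Acc}(f_{\rm dlm})$ for the identity rule; if instead $P_{0\mid 1}\le P_{1\mid 1}$, then both hypotheses of Lemma~\ref{lemma:flip} hold and the lemma gives the same inequality for the flip rule. Since these sub-cases are complementary and exhaustive, the conclusion ${\rm Acc}(f_{\rm plm}) \ge {\rm Acc}(f_{\rm dlm})$ follows for the deterministic rule singled out in each branch, completing the argument.

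The main obstacle I anticipate is exactly the gap between the single inequality delivered by the existential and the pair of inequalities that each lemma demands. The argument only closes because the two lemmas share one hypothesis and partition the remaining comparison, so that the chosen value of $a$ simultaneously identifies which deterministic rule the data contradicts and, via complementarity, forces the companion inequality into the matching lemma's admissible region. I would therefore take care to state the symmetry reduction rigorously — verifying that the simultaneous relabelling preserves Eq.~\eqref{eq:exp_acc} and maps $\mathcal{F}_{\rm dlm}$ onto itself — and to make explicit that $f_{\rm dlm}$ in the conclusion is the rule selected by the active branch rather than an arbitrary fixed element of $\mathcal{F}_{\rm dlm}$.
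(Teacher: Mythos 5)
Your proposal is correct and follows the route the paper intends: Corollary~\ref{corollary} is presented as an immediate consequence of Lemma~\ref{lemma:identity} and Lemma~\ref{lemma:flip}, and your explicit disjunction on $a$, the simultaneous-relabelling reduction to $a=1$, and the complementary sub-cases on the second conditional comparison (which share the first hypothesis of both lemmas and partition the remaining one) constitute exactly the derivation the paper leaves implicit. Your closing caveat is also the right reading of the statement --- the $f_{\rm dlm}$ beaten in the conclusion must be the rule singled out by the active branch rather than an arbitrary element of $\mathcal{F}_{\rm dlm}$, since the uniform claim over both deterministic rules is false (e.g., if $p(y^{\rm S}=1\mid y^{\rm T}=0)=p(y^{\rm S}=0\mid y^{\rm T}=1)=1$ the flip rule attains accuracy $1$, which a generic column-normalized $\omega$ cannot match).
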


\begin{remark}
    Corollary~\ref{corollary} implies a theoretical foundation for preferring PLM over DLM in scenarios where the label mapping relationship between two domains is uncertain, biased and potentially deviates from a deterministic one-to-one mapping assumption.
    This finding holds importance in label mappings for VR, as the label spaces may encompass multi-class settings.
    Furthermore, in VR settings, the pretrained labels derived from $f_{\rm pre}$ predictions, are subject to increased uncertainties and biases influenced by the quality and distribution of the pretrained model and dataset\footnote{For the analysis purpose, in this section we simplify the setting and operate with ground-truth $\mathcal{Y}^{\rm S}$. In practice, VR does not have access to true $y^{\rm S}$ but must rely on the predicted $y^{\rm S}$ from the well-trained $f_{\rm pre}$ instead.}. 
\end{remark}

\subsection{Completed Proof of Lemma~\ref{lemma:identity} and Lemma~\ref{lemma:flip}}
\label{app:proof}
\begin{lemma}[{\it cf.} Lemma~\ref{lemma:identity}]
    Given a collection of paired labels $\{ (y^{\rm S}, y^{\rm T}) \}_{i=1}^{n}$. If the aggregate conditional probabilities $p(y^{\rm S} = 1 | y^{\rm T} = 0) \geq p(y^{\rm S} = 0 | y^{\rm T} = 0)$ and $p(y^{\rm S} = 0 | y^{\rm T} = 1) \geq p(y^{\rm S} = 1 | y^{\rm T} = 1)$ hold true, and $f_{\rm dlm}$ is defined by identity mapping as outlined in Definition~\ref{def:dlm}, then ${\rm Acc}(f_{\rm plm}) \geq {\rm Acc}(f_{\rm dlm})$.
\end{lemma}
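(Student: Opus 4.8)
The plan is to reduce the inequality ${\rm Acc}(f_{\rm plm}) \geq {\rm Acc}(f_{\rm dlm})$ to a pair of one-variable quadratic inequalities, one per downstream class, and then recognise the two hypotheses as exactly the conditions that make those quadratics nonnegative. First I would expand both accuracies via Eq.~\eqref{eq:exp_acc} and the relevant definitions. On the deterministic side, $f_{\rm dlm}$ is the identity map, so $p(f_{\rm dlm}(y^{\rm S})=y^{\rm T}\mid y^{\rm S})=\delta_{y^{\rm S},y^{\rm T}}$ collapses the inner sum over $y^{\rm S}$ to a single diagonal term. On the probabilistic side I would instantiate the weights of Definition~\ref{def:plm} with the Bayesian conditionals $\omega_{y^{\rm S},y^{\rm T}}=p(y^{\rm S}\mid y^{\rm T})$, which is the canonical choice meeting the column constraint $\sum_{y^{\rm S}}\omega_{y^{\rm S},y^{\rm T}}=1$ and matches how BLM populates $\omega$. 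Conditioning the expected accuracy on $y^{\rm T}$, each $y^{\rm S}$ is then weighted by its conditional mass $p(y^{\rm S}\mid y^{\rm T})$ together with the correctness probability $p(f_{\rm lm}(y^{\rm S})=y^{\rm T}\mid y^{\rm S})$.

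Since both accuracies are $p(y^{\rm T})$-weighted sums over $y^{\rm T}\in\{0,1\}$, it suffices to prove the inequality term-by-term, i.e.\ to show that for each fixed $y^{\rm T}$ the probabilistic inner contribution dominates the deterministic one. Writing $a_t=p(y^{\rm S}=0\mid y^{\rm T}=t)$ and using $p(y^{\rm S}=1\mid y^{\rm T}=t)=1-a_t$, the probabilistic contribution for class $t$ is $a_t^2+(1-a_t)^2$, whereas identity mapping contributes only the diagonal conditional, namely $a_0$ when $t=0$ and $1-a_1$ when $t=1$. For $t=0$ the desired per-class inequality $a_0^2+(1-a_0)^2\geq a_0$ rearranges to $(2a_0-1)(a_0-1)\geq 0$; because $a_0\in[0,1]$ forces $a_0-1\leq 0$, this holds exactly when $a_0\leq\tfrac12$, that is $p(y^{\rm S}=1\mid y^{\rm T}=0)\geq p(y^{\rm S}=0\mid y^{\rm T}=0)$, which is precisely the first hypothesis. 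The $t=1$ case is symmetric: $a_1^2+(1-a_1)^2\geq 1-a_1$ factors as $a_1(2a_1-1)\geq 0$, holding exactly when $a_1\geq\tfrac12$, i.e.\ $p(y^{\rm S}=0\mid y^{\rm T}=1)\geq p(y^{\rm S}=1\mid y^{\rm T}=1)$, the second hypothesis. Multiplying each nonnegative per-class gap by $p(y^{\rm T})\geq 0$ and summing yields ${\rm Acc}(f_{\rm plm})-{\rm Acc}(f_{\rm dlm})\geq 0$, completing the argument; Lemma~\ref{lemma:flip} would follow by the same template with $g(y^{\rm S})=1-y^{\rm S}$ and the reversed diagonal.

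I expect the main obstacle to be the set-up rather than the algebra: one must expand the accuracy functional so that the inner $y^{\rm S}$-weighting is read as the conditional $p(y^{\rm S}\mid y^{\rm T})$ induced by the joint distribution (not a bare marginal independent of $y^{\rm T}$), and one must correctly identify the PLM weights as the column-normalised Bayesian conditionals. It is exactly the column constraint $\sum_{y^{\rm S}}\omega_{y^{\rm S},y^{\rm T}}=1$ together with $\omega_{y^{\rm S},y^{\rm T}}=p(y^{\rm S}\mid y^{\rm T})$ that produces the sum-of-squares $\sum_{y^{\rm S}}p(y^{\rm S}\mid y^{\rm T})^2$, and hence the quadratic whose sign is controlled by whether the relevant conditional lies below or above $\tfrac12$. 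Once this bookkeeping is fixed, the term-wise quadratic comparison is routine, and the two stated hypotheses slot in as the sufficient conditions for each class.
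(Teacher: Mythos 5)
Your decomposition is the same as the paper's: expand Eq.~\eqref{eq:exp_acc}, condition on $y^{\rm T}$, and compare the probabilistic and deterministic inner sums class by class, with the two hypotheses emerging as exactly the per-class sufficient conditions. The one substantive deviation is that you instantiate the PLM weights as $\omega_{y^{\rm S},y^{\rm T}}=p(y^{\rm S}\mid y^{\rm T})$ before comparing, which turns each per-class gap into the quadratic $a_t^2+(1-a_t)^2$ versus the diagonal term; your factorizations $(2a_0-1)(a_0-1)\geq 0$ and $a_1(2a_1-1)\geq 0$ are correct and recover the stated hypotheses. The paper instead keeps $\omega$ generic, using only the column constraint $\omega_{0,t}+\omega_{1,t}=1$, and observes that the PLM contribution $\omega_{0,t}\,a_t+\omega_{1,t}(1-a_t)$ is a convex combination of $a_t$ and $1-a_t$ and hence is at least the diagonal term whenever the diagonal conditional is the smaller of the two. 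This buys a strictly stronger conclusion: under the hypotheses, \emph{every} $f_{\rm plm}\in\mathcal{F}_{\rm plm}$ of Definition~\ref{def:plm} satisfies ${\rm Acc}(f_{\rm plm})\geq{\rm Acc}(f_{\rm dlm})$, whereas your argument certifies only the single member with $\omega_{y^{\rm S},y^{\rm T}}=p(y^{\rm S}\mid y^{\rm T})$. Since Definition~\ref{def:plm} quantifies over all column-stochastic $\omega$, you should either prove the inequality for arbitrary $\omega$ (your algebra generalizes immediately by replacing the squared weights with generic $\omega_{0,t},\omega_{1,t}$) or explicitly restrict the claim to your chosen instantiation. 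A minor further caveat: that instantiation does not literally match BLM, which column-normalizes estimates of $p(y^{\rm T}\mid y^{\rm S})$ and coincides with $p(y^{\rm S}\mid y^{\rm T})$ only when the marginal $p(y^{\rm S})$ is uniform.
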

\begin{proof}
    Expand Eq.~\eqref{eq:exp_acc} by taking all possibilities of $y^{\rm T}$, we have:
    \begin{equation}
        \begin{aligned}
            \mathrm{Acc}(f_{\rm lm}) & = \mathbb{E}_{y^{\rm T} \in \mathcal{Y}^{\rm T}} \left[ \sum_{y^{\rm S} \in \mathcal{Y}^{\rm S}} p(y^{\rm S}) \cdot p \left(f_{\rm lm} (y^{\rm S}) = y^{\rm T} | y^{\rm S} \right) \right] \\
                                     & = \sum_{y^{\rm T} \in \mathcal{Y}^{\rm T}} p(y^{\rm T}) \left[ \sum_{y^{\rm S} \in \mathcal{Y}^{\rm S}} p(y^{\rm S} | y^{\rm T}) \cdot p \left(f_{\rm lm} (y^{\rm S}) = y^{\rm T} | y^{\rm S}, y^{\rm T} \right) \right] \\
                                     & = \sum_{y^{\rm T} \in \mathcal{Y}^{\rm T}} p(y^{\rm T}) \left[ \sum_{y^{\rm S} \in \mathcal{Y}^{\rm S}} p(y^{\rm S} | y^{\rm T}) \cdot p \left(f_{\rm lm} (y^{\rm S}) = y^{\rm T} | y^{\rm S} \right) \right].
        \end{aligned}
    \end{equation}
    Note that the conditional independence holds since the output of $f_{\rm lm}$ relies solely on the input $y^{\rm S}$.
    For DLM defined by identity mapping, $p(f_{\rm dlm}(y^{\rm S}) = y^{\rm T} | y^{\rm S}) = 1$ if $y^{\rm S} = y^{\rm T}$, and $0$ otherwise.
    Taking into account all the samples, the expected accuracy $\mathrm{Acc}(f_{\rm dlm})$ can then be expressed by
    \begin{equation}
        \begin{aligned}
            \mathrm{Acc}(f_{\rm dlm}) & = \sum_{y^{\rm T} \in \mathcal{Y}^{\rm T}} p(y^{\rm T}) \, p(y^{\rm S} = y^{\rm T} | y^{\rm T}) \\
                                      & = p(y^{\rm T} = 0) p(y^{\rm S} = 0 | y^{\rm T} = 0) + p(y^{\rm T} = 1) p(y^{\rm S} = 1 | y^{\rm T} = 1).
        \end{aligned}
    \end{equation}
    As for PLM, the expected accuracy can be rewritten as
    \begin{equation}\label{eq:plm_exp_acc}
        \begin{aligned}
            \mathrm{Acc}(f_{\rm plm}) & = \sum_{y^{\rm T} \in \mathcal{Y}^{\rm T}} p(y^{\rm T}) \sum_{y^{\rm S} \in \mathcal{Y}^{\rm S}} \omega_{y^{\rm S}, y^{\rm T}} \cdot p(y^{\rm S} | y^{\rm T}) \\
                                      & = p(y^{\rm T} = 0) \left[ \omega_{0, 0} \cdot p(y^{\rm S} = 0 | y^{\rm T} = 0) + \omega_{1, 0} \cdot p(y^{\rm S} = 1 | y^{\rm T} = 0) \right] \\
                                      & \quad + p(y^{\rm T} = 1) \left[ \omega_{0, 1} \cdot p(y^{\rm S} = 0 | y^{\rm T} = 1) + \omega_{1, 1} \cdot p(y^{\rm S} = 1 | y^{\rm T} = 1) \right],
        \end{aligned}
    \end{equation}
    where $\omega_{0, 0}$ stands for $\omega_{y^{\rm S} = 0, y^{\rm T} = 0}$, and similarly for the remaining $\omega_{0, 1}, \omega_{1, 0}, \omega_{1, 1}$.

    To evaluate the expected accuracy of $f_{\rm plm}$ and $f_{\rm dlm}$, we look into the comparison separately for each $y^{\rm T}$.
    Specifically, for the samples with $y^{\rm T} = 0$, we aim to show that
    \begin{equation}\label{eq:ie_1_to_1}
        \begin{aligned}
            p(y^{\rm T} = 0) & \left[ \omega_{0, 0} \cdot p(y^{\rm S} = 0 | y^{\rm T} = 0) + \omega_{1, 0} \cdot p(y^{\rm S} = 1 | y^{\rm T} = 0) \right] \\ 
                & \geq p(y^{\rm T} = 0) p(y^{\rm S} = 0 | y^{\rm T} = 0).    
        \end{aligned}
    \end{equation}
    Given the constraints $\omega_{0, 0} + \omega_{1, 0} = 1$ and $p(y^{\rm S} = 0 | y^{\rm T} = 0) + p(y^{\rm S} = 1 | y^{\rm T} = 0) = 1$, the LHS of Eq.~\eqref{eq:ie_1_to_1} becomes
    \begin{equation}
        \begin{aligned}
            & p (y^{\rm T} = 0) \left[ \omega_{0, 0} \cdot p(y^{\rm S} = 0 | y^{\rm T} = 0) + \omega_{1, 0} \cdot p(y^{\rm S} = 1 | y^{\rm T} = 0) \right] \\
            = \; & p(y^{\rm T} = 0) \left[ (\omega_{0, 0} \cdot p(y^{\rm S} = 0 | y^{\rm T} = 0) + \omega_{1, 0} (1 - p(y^{\rm S} = 0 | y^{\rm T} = 0)) \right] \\
            = \; & p(y^{\rm T} = 0) \left[ (\omega_{0, 0} - \omega_{1, 0}) \cdot p(y^{\rm S} = 0 | y^{\rm T} = 0) + \omega_{1, 0} \right].
        \end{aligned}
    \end{equation}
    The inequality we need to show is then simplified to $p(y^{\rm T} = 0) [ (\omega_{0, 0} - \omega_{1, 0}) \cdot p(y^{\rm S} = 0 | y^{\rm T} = 0) + \omega_{1, 0} ] \geq p(y^{\rm T} = 0) p(y^{\rm S} = 0 | y^{\rm T} = 0)$.
    This inequality holds if $p(y^{\rm S} = 0 | y^{\rm T} = 0) \leq p(y^{\rm S} = 1 | y^{\rm T} = 0)$.

    Similarly, for samples with $y^{\rm T} = 1$, the inequality of interest is
    \begin{equation}
        \begin{aligned}
            p(y^{\rm T} = 1) & \left[ \omega_{1, 0} \cdot p(y^{\rm S} = 1 | y^{\rm T} = 0) + \omega_{1, 1} \cdot p(y^{\rm S} = 1 | y^{\rm T} = 1) \right] \\ 
                & \geq p(y^{\rm T} = 1) p(y^{\rm S} = 1 | y^{\rm T} = 1).
        \end{aligned}
    \end{equation}
    This holds if $p(y^{\rm S} = 1 | y^{\rm T} = 1) \leq p(y^{\rm S} = 0 | y^{\rm T} = 1)$.

    Both conditions can be satisfied without conflict. Thus, we can confirm Lemma~\ref{lemma:identity} by evaluating these conditions jointly.
\end{proof}

\begin{lemma}[{\it cf.} Lemma~\ref{lemma:flip}]
    Given a collection of paired labels $\{ (y^{\rm S}, y^{\rm T}) \}_{i=1}^{n}$. If the aggregate conditional probabilities $p(y^{\rm S} = 0 | y^{\rm T} = 0) \leq p(y^{\rm S} = 1 | y^{\rm T} = 0)$ and $p(y^{\rm S} = 0 | y^{\rm T} = 1) \leq p(y^{\rm S} = 1 | y^{\rm T} = 1)$, and $f_{\rm dlm}$ is defined by flip mapping as outlined in Definition~\ref{def:dlm}, then ${\rm Acc}(f_{\rm plm}) \geq {\rm Acc}(f_{\rm dlm})$.
\end{lemma}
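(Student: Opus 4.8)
The plan is to reuse the proof of Lemma~\ref{lemma:identity} almost verbatim, changing only the deterministic rule from $g(y^{\rm S}) = y^{\rm S}$ to $g(y^{\rm S}) = 1 - y^{\rm S}$. First I would invoke the common expansion of the expected accuracy,
\begin{equation*}
    \mathrm{Acc}(f_{\rm lm}) = \sum_{y^{\rm T} \in \mathcal{Y}^{\rm T}} p(y^{\rm T}) \sum_{y^{\rm S} \in \mathcal{Y}^{\rm S}} p(y^{\rm S} \mid y^{\rm T}) \, p \left( f_{\rm lm}(y^{\rm S}) = y^{\rm T} \mid y^{\rm S} \right),
\end{equation*}
which holds for any $f_{\rm lm}$ by the same conditional-independence argument used in the identity case (the output of $f_{\rm lm}$ depends only on $y^{\rm S}$), and the probabilistic expression in Eq.~\eqref{eq:plm_exp_acc} transfers unchanged.

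Next I would specialize the deterministic term. Under flip mapping the Kronecker delta equals $1$ precisely when $y^{\rm S} = 1 - y^{\rm T}$, so
\begin{equation*}
    \mathrm{Acc}(f_{\rm dlm}) = p(y^{\rm T} = 0) \, p(y^{\rm S} = 1 \mid y^{\rm T} = 0) + p(y^{\rm T} = 1) \, p(y^{\rm S} = 0 \mid y^{\rm T} = 1),
\end{equation*}
the ``crossed'' companion of the identity expression, with the two source labels interchanged in each conditional. I would then compare $\mathrm{Acc}(f_{\rm plm})$ against this quantity one downstream label at a time, eliminating one weight per column through $\omega_{0, y^{\rm T}} + \omega_{1, y^{\rm T}} = 1$ and one conditional through $\sum_{y^{\rm S}} p(y^{\rm S} \mid y^{\rm T}) = 1$, exactly as before. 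For the $y^{\rm T} = 1$ column this should collapse to a factored inequality of the form $(\omega_{0,1} - 1)(2 \, p(y^{\rm S} = 0 \mid y^{\rm T} = 1) - 1) \geq 0$, whose sign is settled immediately by the second hypothesis $p(y^{\rm S} = 0 \mid y^{\rm T} = 1) \leq p(y^{\rm S} = 1 \mid y^{\rm T} = 1)$ together with $\omega_{0,1} \leq 1$.

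The delicate step, and the genuine point of departure from the identity argument, is the $y^{\rm T} = 0$ column. Because the first hypothesis $p(y^{\rm S} = 0 \mid y^{\rm T} = 0) \leq p(y^{\rm S} = 1 \mid y^{\rm T} = 0)$ makes $y^{\rm S} = 1$ the \emph{more} likely source label, the flip rule already assigns $y^{\rm T} = 0$ samples their locally optimal label; the same column-wise reduction yields $\omega_{0,0}(2\,p(y^{\rm S} = 0 \mid y^{\rm T} = 0) - 1) \geq 0$, which points the wrong way, so a naive term-by-term domination provably fails on this column. I therefore expect the real work to lie in reading $f_{\rm plm}$ as the specific accuracy-realizing (equivalently, BLM-assigned) weighting rather than an arbitrary one: since the deterministic rules sit at the vertices of the per-column simplex, the probabilistic family subsumes them, and I would argue that the chosen weights concentrate mass on the majority source label in each column, so the surplus guaranteed on the $y^{\rm T} = 1$ column is not cancelled. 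Pinning down this weighting and verifying the aggregate inequality $\mathrm{Acc}(f_{\rm plm}) - \mathrm{Acc}(f_{\rm dlm}) \geq 0$ for it, rather than asserting a spurious per-column bound, is the main obstacle.
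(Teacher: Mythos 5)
Your reduction follows the same route as the paper's: expand $\mathrm{Acc}$ via the conditional-independence step, treat the columns $y^{\rm T}=0$ and $y^{\rm T}=1$ separately, and eliminate one weight per column through $\omega_{0,y^{\rm T}}+\omega_{1,y^{\rm T}}=1$. Your $y^{\rm T}=1$ column is correct and matches the paper. The gap is that your proposal does not finish: you correctly derive that the $y^{\rm T}=0$ column reduces to $\omega_{0,0}\bigl(2\,p(y^{\rm S}=0\,|\,y^{\rm T}=0)-1\bigr)\ge 0$, note that the hypothesis $p(y^{\rm S}=0\,|\,y^{\rm T}=0)\le p(y^{\rm S}=1\,|\,y^{\rm T}=0)$ makes this fail unless $\omega_{0,0}=0$, and then defer the resolution to an unproven claim about the specific BLM-assigned weighting. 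As submitted, this is a diagnosis rather than a proof.

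That said, the diagnosis is right, and it points at a genuine defect rather than a missing trick. For an arbitrary $f_{\rm plm}$ in the sense of Definition~\ref{def:plm} the statement is false: take $\omega_{0,0}=\omega_{1,1}=1$, $p(y^{\rm S}=0\,|\,y^{\rm T}=0)=0$, $p(y^{\rm S}=0\,|\,y^{\rm T}=1)\le 1/2$, and $p(y^{\rm T}=1)$ arbitrarily small; the hypotheses hold, yet $\mathrm{Acc}(f_{\rm plm})$ is near $0$ while the flip rule attains accuracy near $1$. The paper's own treatment of this column does not close the gap either: its expansion of $\mathrm{Acc}(f_{\rm dlm})$ silently exchanges $1-p(y^{\rm S}=1\,|\,y^{\rm T}=0)$ for $1-p(y^{\rm S}=0\,|\,y^{\rm T}=0)$ between Eq.~\eqref{eq:ie_1to1_f} and Eq.~\eqref{eq:ie_1to1_ff}, and the final condition it derives, $\omega_{0,0}\ge (1-\omega_{1,0})\bigl(1-p(y^{\rm S}=0\,|\,y^{\rm T}=0)\bigr)/p(y^{\rm S}=0\,|\,y^{\rm T}=0)$, is, after substituting $\omega_{1,0}=1-\omega_{0,0}$, exactly your wrong-way inequality $\omega_{0,0}\bigl(2\,p(y^{\rm S}=0\,|\,y^{\rm T}=0)-1\bigr)\ge 0$; the stated hypothesis does not imply it. The lemma becomes true only under an additional assumption tying $\omega$ to the conditionals (for instance $\omega_{0,0}=0$, or more generally a BLM-type assignment that concentrates the mass of column $0$ on the majority source label), which is precisely the missing piece you flagged but did not supply.
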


\begin{proof}
    When defined deterministically by flip mapping, DLM can be equivalently expressed as $p(f_{\rm dlm}(y^{\rm S}) = y^{\rm T} | y^{\rm S}) = 1$ if $y^{\rm S} \neq y^{\rm T}$, and $0$ otherwise.
    This allows the expected accuracy of DLM to be expanded as:
    \begin{equation}
        \begin{aligned}
            \mathrm{Acc}(f_{\rm dlm}) & = \sum_{y^{\rm T} \in \mathcal{Y}^{\rm T}} p(y^{\rm T}) \, p(y^{\rm S} \neq y^{\rm T} | y^{\rm T}) \\
                                      & = \sum_{y^{\rm T} \in \mathcal{Y}^{\rm T}} p(y^{\rm T}) \, \left( 1 - p(y^{\rm S} = y^{\rm T} | y^{\rm T}) \right) \\
                                      & = p(y^{\rm T} = 0) \left( 1 - p(y^{\rm S} = 1 | y^{\rm T} = 0) \right) + p(y^{\rm T} = 1) \left( 1 - p(y^{\rm S} = 0 | y^{\rm T} = 1) \right).
        \end{aligned}
    \end{equation}
    Meanwhile, the expected accuracy of PLM remains consistent as in Eq.~\eqref{eq:plm_exp_acc}.
    Again, to show that ${\rm Acc}(f_{\rm plm}) \geq {\rm Acc}(f_{\rm dlm})$ holds, we compare the expected accuracy with respect to different $y^{\rm T}$ samples separately.

    For samples with $y^{\rm T} = 0$, we need to show 
    \begin{equation}\label{eq:ie_1to1_f}
        \begin{aligned}
            p(y^{\rm T} = 0) & \left[ \omega_{0, 0} \cdot p(y^{\rm S} = 0 | y^{\rm T} = 0) + \omega_{1, 0} \cdot p(y^{\rm S} = 1 | y^{\rm T} = 0) \right] \\ 
                & \geq p(y^{\rm T} = 0) p(y^{\rm T} = 0) \left( 1 - p(y^{\rm S} = 1 | y^{\rm T} = 0) \right).
        \end{aligned}
    \end{equation}
    Given the constraints that $\omega_{0, 0} + \omega_{1, 0} = 1$ and $p(y^{\rm S} = 1 | y^{\rm T} = 0) = 1 - p(y^{\rm S} = 0 | y^{\rm T} = 0)$, the LHS of Eq.~\eqref{eq:ie_1to1_f} can be expressed by
    \begin{equation}
        \begin{aligned}
            & p(y^{\rm T} = 0) \left[ \omega_{0, 0} \cdot p(y^{\rm S} = 0 | y^{\rm T} = 0) + \omega_{1, 0} \cdot p(y^{\rm S} = 1 | y^{\rm T} = 0) \right] \\
            = \; & p(y^{\rm T} = 0) \left[ (\omega_{0, 0} - \omega_{1, 0}) \cdot p(y^{\rm S} = 0 | y^{\rm T} = 0) + \omega_{1, 0} \right].
        \end{aligned}
    \end{equation}
    We rearrange the terms:
    \begin{equation}\label{eq:ie_1to1_ff}
        \begin{aligned}
            p(y^{\rm T} = 0) \left[ (\omega_{0, 0} - \omega_{1, 0}) \cdot p(y^{\rm S} = 0 | y^{\rm T} = 0) + \omega_{1, 0} \right] & \geq p(y^{\rm T} = 0) \left( 1 - p(y^{\rm S} = 0 | y^{\rm T} = 0) \right) \\
            (\omega_{0, 0} - \omega_{1, 0}) \cdot p(y^{\rm S} = 0 | y^{\rm T} = 0) + \omega_{1, 0} & \geq 1 - p(y^{\rm S} = 0 | y^{\rm T} = 0) \\
            \frac{1 - p(y^{\rm S} = 0 | y^{\rm T} = 0) - \omega_{1, 0} + \omega_{1, 0} \cdot p(y^{\rm S} = 0 | y^{\rm T} = 0)}{p(y^{\rm S} = 0 | y^{\rm T} = 0)} & \leq \omega_{0, 0} \\
            \frac{1 - p(y^{\rm S} = 0 | y^{\rm T} = 0) - \omega_{1, 0} \cdot (1 - p(y^{\rm S} = 0 | y^{\rm T} = 0))}{p(y^{\rm S} = 0 | y^{\rm T} = 0)} & \leq \omega_{0, 0} \\
            \frac{(1 - \omega_{1, 0}) \cdot (1 - p(y^{\rm S} = 0 | y^{\rm T} = 0))}{p(y^{\rm S} = 0 | y^{\rm T} = 0)} & \leq \omega_{0, 0}.
        \end{aligned}
    \end{equation}
    It is then concluded that Eq.~\eqref{eq:ie_1to1_ff} holds if $p(y^{\rm S} = 0 | y^{\rm T} = 0) \leq p(y^{\rm S} = 1 | y^{\rm T} = 0)$.

    As with $y^{\rm T} = 1$ samples, a similar derivation is performed to satisfy the inequality
    \begin{equation}
        \begin{aligned}
            & p(y^{\rm T} = 1) \left[ \omega_{0, 1} \cdot p(y^{\rm S} = 0 | y^{\rm T} = 1) + \omega_{1, 1} \cdot p(y^{\rm S} = 1 | y^{\rm T} = 1) \right] \\
            & \geq p(y^{\rm T} = 0) p(y^{\rm T} = 0) \left( 1 - p(y^{\rm S} = 1 | y^{\rm T} = 0) \right).
        \end{aligned}
    \end{equation}
    Resembling $y^{\rm T} = 0$ samples, the derivation yields the condition $p(y^{\rm S} = 0 | y^{\rm T} = ) \leq p(y^{\rm S} = 1 | y^{\rm T} = 1)$.

    Notably, the condition $p(y^{\rm S} = 0 | y^{\rm T} = 0) \leq p(y^{\rm S} = 1 | y^{\rm T} = 0)$ does not conflict with $p(y^{\rm S} = 0 | y^{\rm T} = ) \leq p(y^{\rm S} = 1 | y^{\rm T} = 1)$, and both conditions can be jointly satisfied.
\end{proof} 

\section{Training Details}
\label{app:implement}

\subsection{Dataset Information}
\label{datasets}

\begin{table}[ht]
\caption{Detailed dataset information}
\begin{center}
\begin{tabularx}{\textwidth}{c|cccc}
\toprule
\multicolumn{1}{c|}{Dataset} & Original Image Size & Training Set Size & Testing Set Size & Number of Classes \\
\midrule
Flowers102                  & 128 $\times$ 128    & 4,093       & 2,463      & 102          \\
DTD                         & 128 $\times$ 128    & 2,820       & 1,692      & 47           \\
UCF101                      & 128 $\times$ 128    & 7,639       & 3,783      & 101          \\
Food101                     & 128 $\times$ 128    & 50,500      & 30,300     & 101          \\
GTSRB                       & 32 $\times$ 32      & 39,209      & 12,630     & 43           \\
EuroSAT                     & 128 $\times$ 128    & 13,500      & 8,100      & 10           \\
OxfordPets                  & 128 $\times$ 128    & 2,944       & 3,669      & 37      \\
StanfordCars & 128 $\times$ 128    & 6,509 & 8,041 & 196      \\
SUN397                      & 128 $\times$ 128    & 15,888      & 19,850     & 397          \\
CIFAR10                     & 32 $\times$ 32      & 50,000      & 10,000     & 10           \\
CIFAR100                    & 32 $\times$ 32      & 50,000      & 10,000     & 100          \\
SVHN                        & 32 $\times$ 32      & 73,257      & 26,032     & 10           \\
\bottomrule
\end{tabularx}
\end{center}
\label{tab:dataset}
\end{table}
Additional dataset information is presented in Table \ref{tab:dataset}. For a fair comparison, we adhere to the data partitioning scheme employed by ILM \citep{chen2023understanding} through all datasets. The batch size for Oxfordpets and DTD is set to be 64 while 256 for the remaining datasets.

\subsection{Parameter Information}
Consistent training settings are maintained to ensure a fair comparison. For training input VR patterns, we apply the Adam optimizer \citep{kingma2014adam} with an initial learning rate of 0.01. The number of epochs is 200, with the learning rate decay being 0.1, scheduled at epochs 100 and 145. All experiments are conducted on a single A100 GPU and the average accuracy of three distinct random seeds are reported.

\section{Parameter Analysis}
\label{app:param}
\subsection{Choosing Hyper-parameters}

\begin{table}[ht]
\centering
\caption{Tuning ratio $\alpha$ and Laplace $\lambda$ (ResNet-18, Flowers102, average accuracy (\%))}
\begin{tabular}{c|cc>{\columncolor{gray!15}}cccc}
\toprule
   $\alpha | \lambda$     & 0.01              & 0.1               & 1                 & 10                & 100               & 1000              \\
\midrule
0.01    & 40.5±0.8          & 41.7±1.4          & 44.1±0.1          & 45.1±0.6          & 42.9±0.4          & 40.5±0.4          \\
0.05    & 46.2±0.4          & 45.8±0.8          & 48.9±0.2          & 47.2±0.4          & 45.2±0.8          & 43.0±0.1          \\
\rowcolor{gray!15}
0.15    & 48.2±0.4          & 49.4±1.0          & 50.1±0.6          & 48.1±0.6          & 45.4±0.7          & 44.6±0.2          \\
0.1     & 48.6±0.8          & 50.0±1.0          & 48.4±0.4          & 48.4±0.6          & 45.8±0.9          & 45.6±0.5          \\
1       & 49.1±0.8 & 50.2±0.2 & 49.3±0.7 & 49.3±0.6 & 45.3±0.7 & 44.4±0.7 \\
\midrule
Average & 46.5±0.6          & 47.4±0.9          & 48.1±0.4          & 47.6±0.5          & 44.9±0.7          & 43.6±0.4  \\   \bottomrule
\end{tabular}
\label{tab:hyper}
\end{table}

As described in Section \ref{sec:method}, the ratio $\alpha$ is used in calculating $k = \lfloor\alpha \cdot k_{\rm T}\rfloor$. The experimental results to tune hyper-parameters $\alpha$ and $\lambda$ are reported in Table \ref{tab:hyper}. $\alpha$ is chosen among $[0.01, 0.05, 0.15, 0.5, 1]$, while $\lambda$ is chosen among $[0.01, 0.1, 1, 10, 100, 1000]$. The optimized $\lambda$ is determined first to be 1 by the average accuracy of different $\alpha$ values, followed by deriving an optimal $\alpha=0.15$.

While the same hyper-parameters may not necessarily be optimal across different datasets, for the sake of consistency and fairness, this paper employs identical hyper-parameters for all datasets.

\subsection{Analyzing Hyper-parameters}
\begin{figure}[ht]
    \begin{minipage}{0.51\textwidth}
        \centering
        \includegraphics[width=\textwidth]{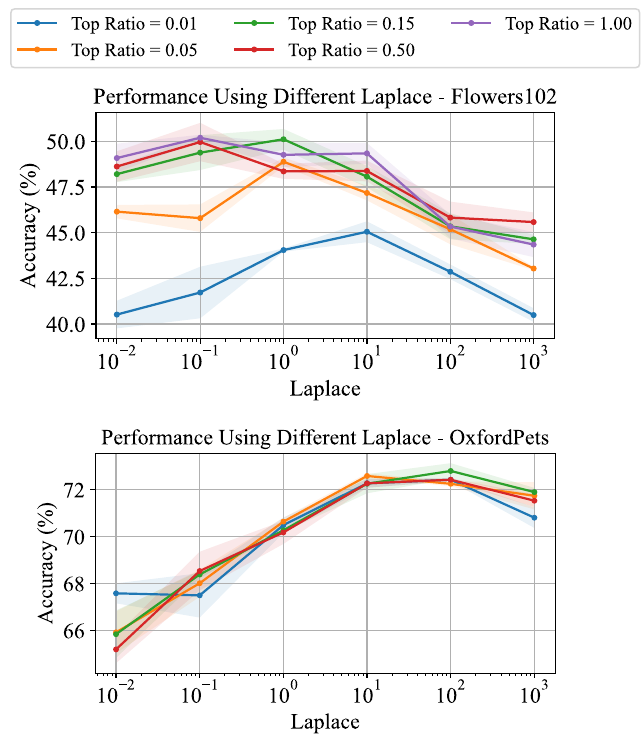}
        \caption{Accuracy with different Laplace $\lambda$.}
        \label{fig:laplace}
    \end{minipage}%
    \begin{minipage}{0.49\textwidth}
        \centering
        \includegraphics[width=\textwidth]{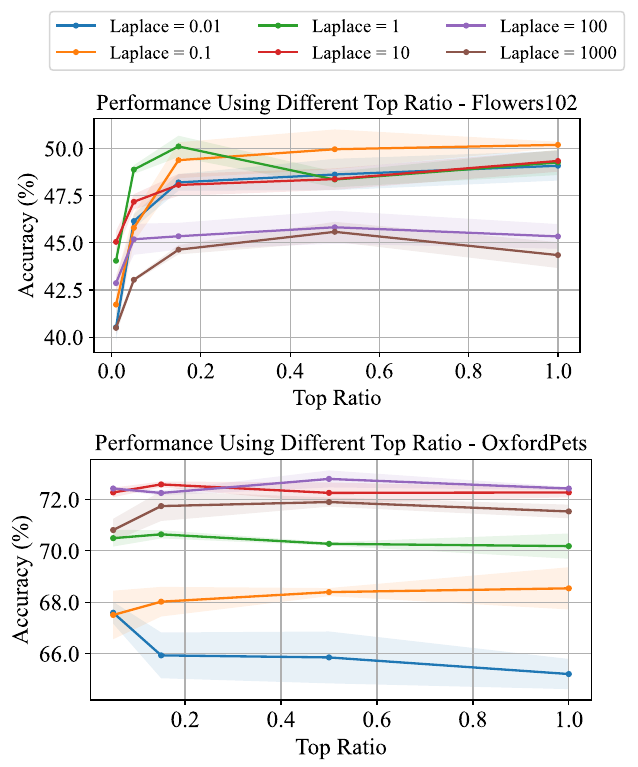}
        \caption{Accuracy with different Ratio $\alpha$.}
        \label{fig:top_ratio}
    \end{minipage}
\end{figure}

Figures \ref{fig:laplace} and Figure \ref{fig:top_ratio} illustrate the impact of $\lambda$ and $\alpha$ on accuracy. It is observed that the optimal hyper-parameters vary across different datasets. 

In general, as $\lambda$ increases, the test accuracy initially rises and then declines. This parameter is used to balance the contributions of individual pretrained labels. An over-small $\lambda$ might overly rely on the distribution of pretrained labels obtained from pretrained models, while a too-large one might overlook differences among pretrained labels. Meanwhile, with an increase in $\alpha$, accuracy first increases, then plateaus or slightly decreases. This is because excessively small or large $\alpha$ values may lead to the neglect of certain crucial labels or the emphasis on redundant ones during the estimation of the probability aggregation matrix. Therefore, choosing moderate values for $\lambda$ and $\alpha$ appears to be more appropriate.

\subsection{Task-specific Hyper-parameters}
\begin{table}[ht]
\caption{Difference between task-specific parameters and shared parameters}
\label{tab:taskparam}
\begin{tabular}{c|ccccc}
\toprule
                                   & Flowers102 & UCF101 &  DTD & OxfordPets & CIFAR10 \\
\midrule
Specific $\alpha$                  & 0.15       & 0.15   & 0.5                        & 0.5                               & 0.5                            \\
Specific $\lambda$                & 1          & 1      & 1                          & 10                                & 10                             \\
Accuracy (Trained on 70\% Samples) & 45.82      & 31.84  & \textbf{43.75}             & \textbf{72.27}                    & \textbf{66.54}                 \\
\midrule
Shared $\alpha$                    & 0.15       & 0.15   & 0.15                       & 0.15                              & 0.15                           \\
Shared $\lambda$                  & 1          & 1      & 1                          & 1                                 & 1                              \\
Accuracy (Trained on 70\% Samples) & 45.82      & 31.84  & 42.31                      & 70.52                             & 66.04           \\
\bottomrule
\end{tabular}
\end{table}
We used universal hyper-parameters to show that BLM and BLM+'s performance gains over baselines are not sensitive to hyper-parameters. However, we assume that the dataset-specific tuning for hyper-parameters could yield more optimized results. 

Additional experiments are conducted using a validation set and training set split of 30\% and 70\% to find optimal hyper-parameters for each dataset. Results are shown in Table~\ref{tab:taskparam}. We observe that optimal hyper-parameters tailored for each dataset achieve better performance compared to using shared hyper-parameters, which matches our assumption.

\section{Limitations of BLM and BLM+}
\label{app:limit}
\textbf{Less effective for tasks with very few classes.} As shown in Table \ref{Tab:padding}, when the number of classes (i.e., size of the label space) in downstream tasks is smaller (10 classes in SVHN and 10 classes in EuroSAT) and the original task is relatively simple, the advantage of BLM and BLM+ is not very pronounced. This is because BLM and BLM+ replace the one-to-one mapping with a pairwise-connected probabilistic LM. While this optimization yields positive results in most tasks, for a small subset of simple tasks, the one-to-one mapping may better reflect the relationship between the pretrained label space and the downstream label space. For such tasks, BLM and BLM+ no longer exhibit significant effects.

\textbf{Not solving the cases where VR is not applicable for downstream tasks.} For example, in the case of the StanfordCars dataset in vision models, as shown in Table \ref{Tab:padding} and Table \ref{tab:watermark}, the accuracy of the downstream task remains consistently low (<10\%) through learning using input VR. While applying BLM and BLM+ in such scenarios yields better results compared to using one-to-one mapping, it still cannot significantly enhance VR performance to the extent of being comparable to finetuning the entire model.

\section{Visualization of Label Mapping Matrices}
\label{app:vis_lm}
\begin{figure}[ht]
    \centering
    \includegraphics[width=0.9\linewidth]{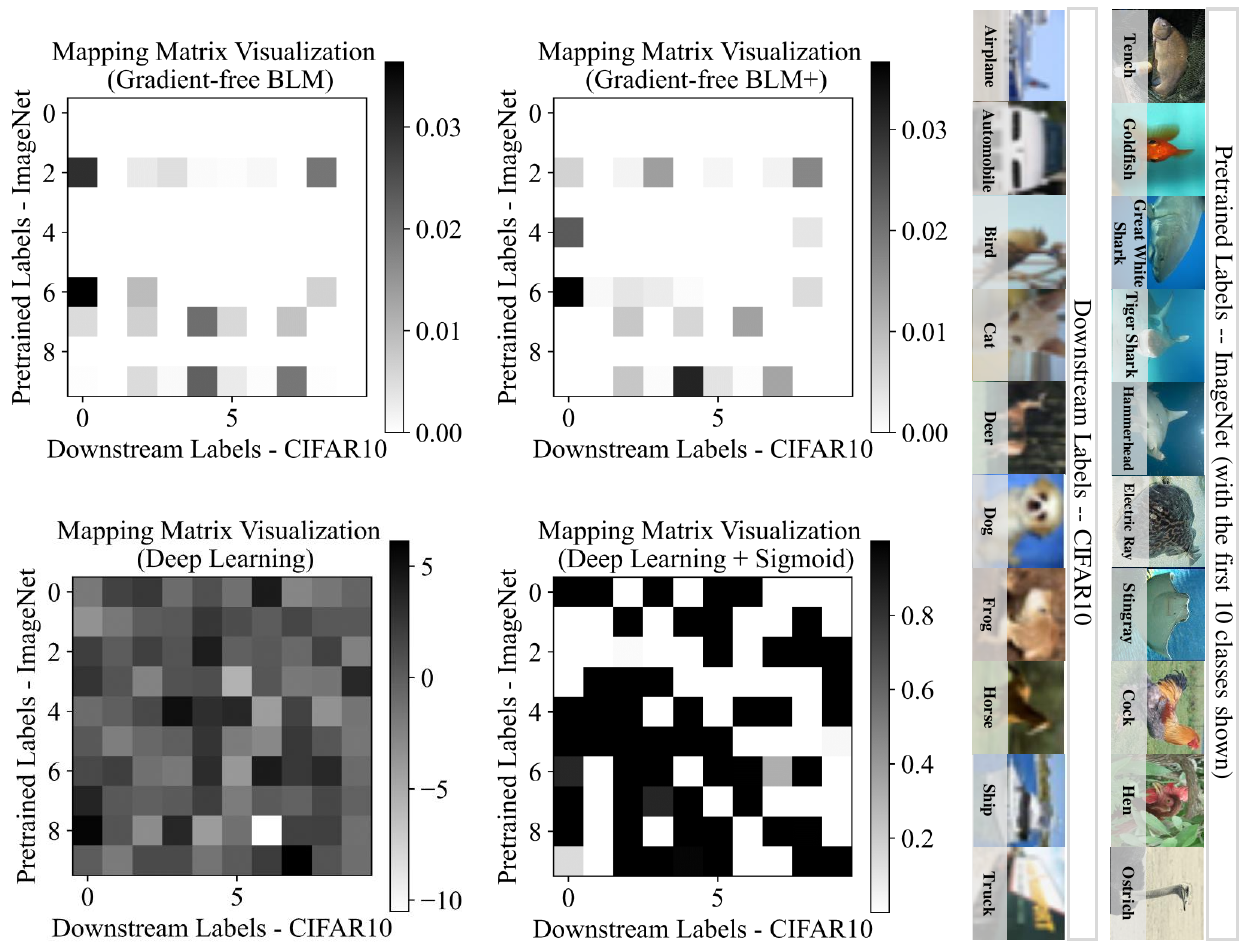}
    \vspace{0.1cm}
    \caption{The visualization results of the LM matrices. Using the example of ResNet-18 pretrained on ImageNet-1K applied to the downstream task CIFAR10, the left figure displays the first 10 rows and 10 columns of the LM matrices (including the result matrix of the first 10 pretrained and downstream labels), while the right figure presents specific labels. Compared to gradient-free LM methods (i.e., BLM and BLM+), deep learning-based methods (i.e., a single-layer unrestricted neural network $\omega \in \mathbb{R}^{k_{\rm S}\times k_{\rm T}}$ and a single-layer neural network with Sigmoid $\omega \in [0,1]^{k_{\rm S}\times k_{\rm T}}$) demonstrate less interpretability in revealing the relationship between labels.}
    \label{fig:explain}
\end{figure}

Based on the example of ResNet-18 pretrained on ImageNet-1K applying to the downstream task CIFAR10, Figure~\ref{fig:explain} depicts the visualization results of LM matrices. The first row in Figure~\ref{fig:explain} shows the results of gradient-free methods BLM and BLM+, while the second row shows deep learning-based methods which learn a linear neural network for $f^{\omega}_{\rm out}$. `Deep Learning' refers to a single-layer neural network without constraints (i.e., $\omega \in \mathbb{R}^{k_{\rm S}\times k_{\rm T}}$), while `Deep Learning + Sigmoid' refers to applying the Sigmoid function to restrict $\omega \in [0,1]^{k_{\rm S}\times k_{\rm T}}$ aligning with the range of $\omega_{\rm BLM}$ and $\omega_{\rm BLM+}$. The right part of Figure~\ref{fig:explain} depicts the specific pretrained and downstream labels corresponding to these matrices.

It is observed that BLM and BLM+ are good at revealing similarities between pretrained and downstream labels. For example, for the downstream label `Airplane', which visually resembles `Great White Shark', `Hammerhead' and `Stingray', the weights in $\omega_{\rm BLM}$ or $\omega_{\rm BLM+}$ tend to be higher. Conversely, for dissimilar labels like `Truck' and `Ostrich', the weights will be approaching 0. However, the weight matrices obtained from deep learning-based methods fail to capture such clear-label relationship. The results demonstrate the advantages of BLM and BLM+ in terms of interpretability.

\section{Training Cost Analysis}
\label{app:cost}
\begin{table}[ht]
\caption{Impact of epoch numbers on different label mapping methods}.
\resizebox{\textwidth}{!}{
\begin{tabular}{c|cccc|cccc|c|c}
\toprule
 & \multicolumn{4}{c|}{BLM}   & \multicolumn{4}{c|}{BLM+}  & ILM  & FLM  \\
 \midrule
Epochs                                                                      & 60   & 100  & 150  & 200  & 60   & 100  & 150  & 200  & 200  & 200  \\
\begin{tabular}[c]{@{}c@{}}Average Accuracy\\ on 12 Tasks (\%)\end{tabular} & 44.5 & 45.2 & 45.5 & 45.3 & 45.8 & 46.4 & 46.9 & 46.7 & 40.6 & 37.2 \\
\bottomrule
\end{tabular}}
\label{tab:epoch}
\end{table}

\begin{table}[ht]
\caption{Training cost analysis of LM \& VR and none-VR finetuning (on Flowers102)}
\resizebox{\textwidth}{!}{
\begin{tabular}{c|cccccc|c|cc}
\toprule
& \multicolumn{6}{c|}{Gradient-free LM}                            & \begin{tabular}[c]{@{}c@{}}Deep \\ Learning-\\ based LM\end{tabular} & \multicolumn{2}{c}{Finetuning} \\
\midrule
& FLM   & ILM   & BLM   & BLM+  & \textbf{BLM*}  & \textbf{BLM+*} & -                                          & Linear        & Fully        \\
\midrule
\begin{tabular}[c]{@{}c@{}}Back-\\ propagation when \\ Learning LM\end{tabular} & No    & No    & No    & No    & No             & No             & Yes                                                                  & -             & -            \\
\midrule
\multicolumn{1}{l}{}                                                            & \multicolumn{9}{c}{ResNet-18} \\
\midrule
\begin{tabular}[c]{@{}c@{}}Parameter \\ Number (M)\end{tabular}                 & 0.10  & 0.10  & 0.10  & 0.10  & 0.10           & 0.10           & 0.20      & 0.51          & 11.7         \\
\begin{tabular}[c]{@{}c@{}}Whole \\ Time (min)\end{tabular}                     & 11.97 & 12.04 & 11.95 & 13.06 & \textbf{6.03}  & \textbf{6.52}  &  12.34       & 14.03         & 15.28        \\
\midrule
\multicolumn{1}{l}{}                                                            & \multicolumn{9}{c}{ResNeXt-101} \\
\midrule
\begin{tabular}[c]{@{}c@{}}Parameter \\ Number (M)\end{tabular}                 & 0.10  & 0.10  & 0.10  & 0.10  & 0.10           & 0.10           & 0.20     & 2.0           & 88.8         \\
\begin{tabular}[c]{@{}c@{}}Whole \\ Time (min)\end{tabular}                     & 24.68 & 24.81 & 24.51 & 24.71 & \textbf{12.33} & \textbf{12.44} &  24.80        & 24.49         & 35.07      \\
\bottomrule
\end{tabular}}
\label{tab:comsumption}
\end{table}
\textbf{The Required Number of Epochs.} Different label mapping methods require varying numbers of epochs to converge. We initially used 200 epochs as with \citep{chen2023understanding} to ensure a fair comparison with the baseline methods. Additional experiments are conducted to assess the impact of different epoch numbers from [60, 100, 150] on our BLM and BLM+ model, using ResNet-18 as the pretrained model. The results are shown in Table~\ref{tab:epoch}.

We found that running 100 epochs yields results comparable to those achieved with 200 epochs. This demonstrates that BLM and BLM+ require less convergence time, highlighting their efficiency.

\textbf{Overall Time Consumption.} Table \ref{tab:comsumption} presents a comparison of different output mapping methods in terms of computational resources, utilizing the Flowers102 dataset as the downstream task. Gradient-free LM refers to estimating output mappings using statistical methods, while deep learning-based LM treats label mapping as a single linear layer neural network attached after the pretrained models. `BLM*' and `BLM+*' refer to training with only 100 epochs as is shown in Table~\ref{tab:epoch}. It should be noted that the running times for ILM, BLM, and BLM+ are measured using the quick version (see Appendix~\ref{app:quick} for details). Apart from VR methods, which fix the pretrained model, the time costs associated with directly finetuning pretrained models are also listed. Here, the term `Linear' refers to finetuning the final layer of the pretrained model, while `Fully' refers to finetuning the entire model. 

Besides, regarding the performance of finetuning methods on downstream tasks compared with VR, please refer to \citep{chen2023understanding} for more discussion. Since we mainly focus on LM methods for VR in this paper, which has a different problem setting with finetuning, the performance comparison of VR and finetuning will not be addressed here.

We therefore analyze the efficiency of BLM and BLM+ from three perspectives:
\begin{itemize}
    \item Extra Consumption of Calculating the Mapping Matrix Compared with One-to-One Mapping: Compared to the baseline method ILM, the additional cost for BLM and BLM+ primarily involves the gradient-free multiplication and division within the mapping matrix (which is sized according to the source and target label spaces, 1000 × 102 in this case). This additional cost is minimal, as shown in Table~\ref{tab:comsumption}.
    \item Time Consumption of Updating the Mapping Matrix per Epoch: Compared with FLM, updating $\omega$ in ILM, BLM, and BLM+ does not require running the model to obtain current predictions for each epoch. Instead, predictions from the most recent epoch can be reused  (see Appendix~\ref{app:quick}). As a result, there is no noticeable time overhead for updating $\omega$ per epoch, as indicated by Table~\ref{tab:comsumption}.
    \item Time Consumption of LM and VR Compared with Deep Learning-based Methods: It is observed that methods based on deep learning introduce a substantial number of extra parameters (which would further increase with larger downstream label space and higher pretrained model complexity) along with the necessity of backpropagation for gradient computation. Conversely, the gradient-free LM methods along with VR emphasized in this study do not encounter these challenges. 
\end{itemize}

\section{More Results on Visual Classification Tasks}
\label{app:vis}

\begin{figure}[ht]
    \centering
    \includegraphics[width=\textwidth]{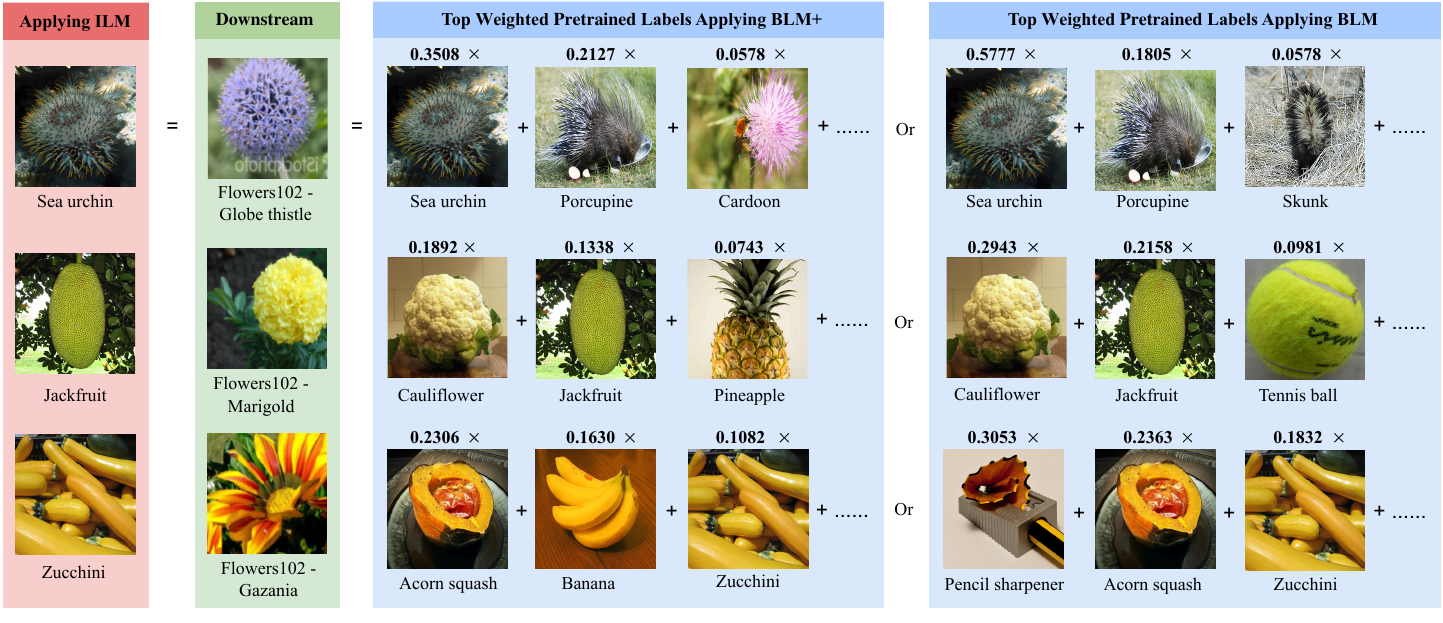}
    \caption{Label mapping results of ILM, BLM, BLM+ for VR on Flowers102 dataset.}
    \label{fig:flowers102_resnet18}
\end{figure}

\begin{figure}[ht]
    \centering
    \includegraphics[width=\textwidth]{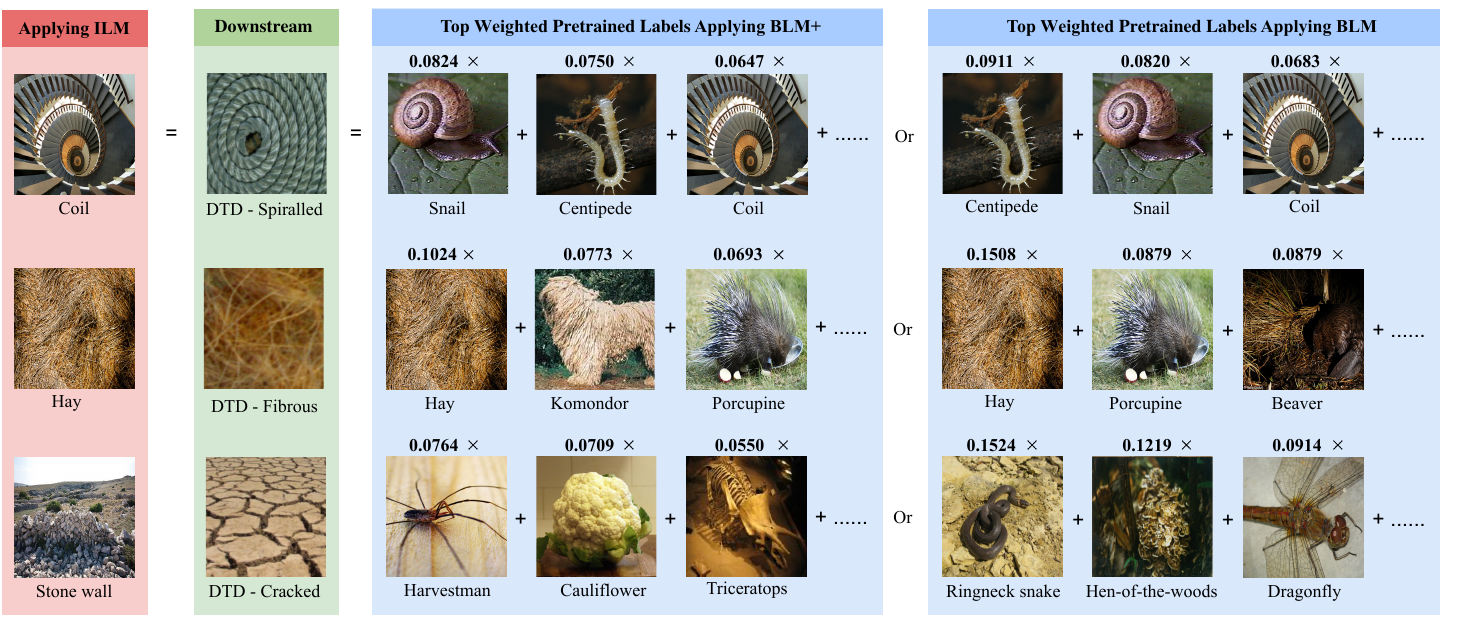}
    \caption{Label mapping results of ILM, BLM, BLM+ for VR on DTD dataset.}
    \label{fig:dtd_resnet18}
\end{figure}

\begin{figure}[ht]
    \centering
    \includegraphics[width=\textwidth]{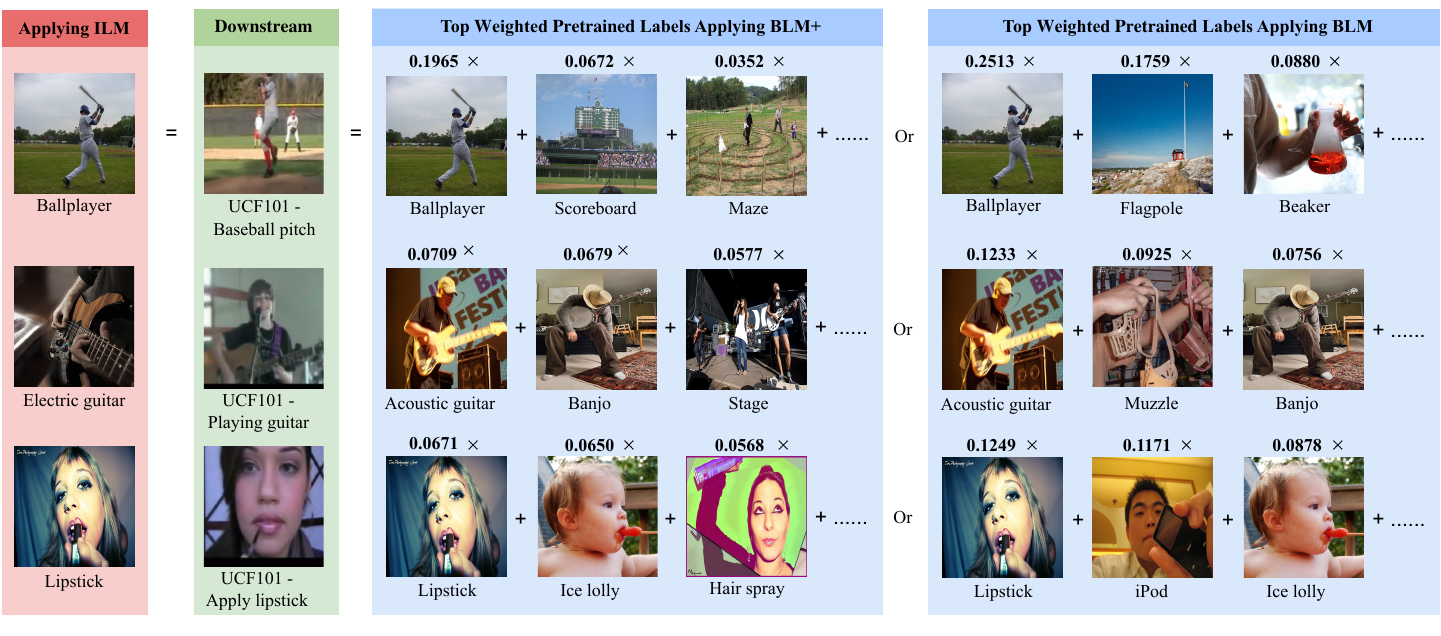}
    \caption{Label mapping results of ILM, BLM, BLM+ for VR on UCF101 dataset.}
    \label{fig:ucf101_resnet18}
\end{figure}

\begin{figure}[ht]
    \centering
    \includegraphics[width=\textwidth]{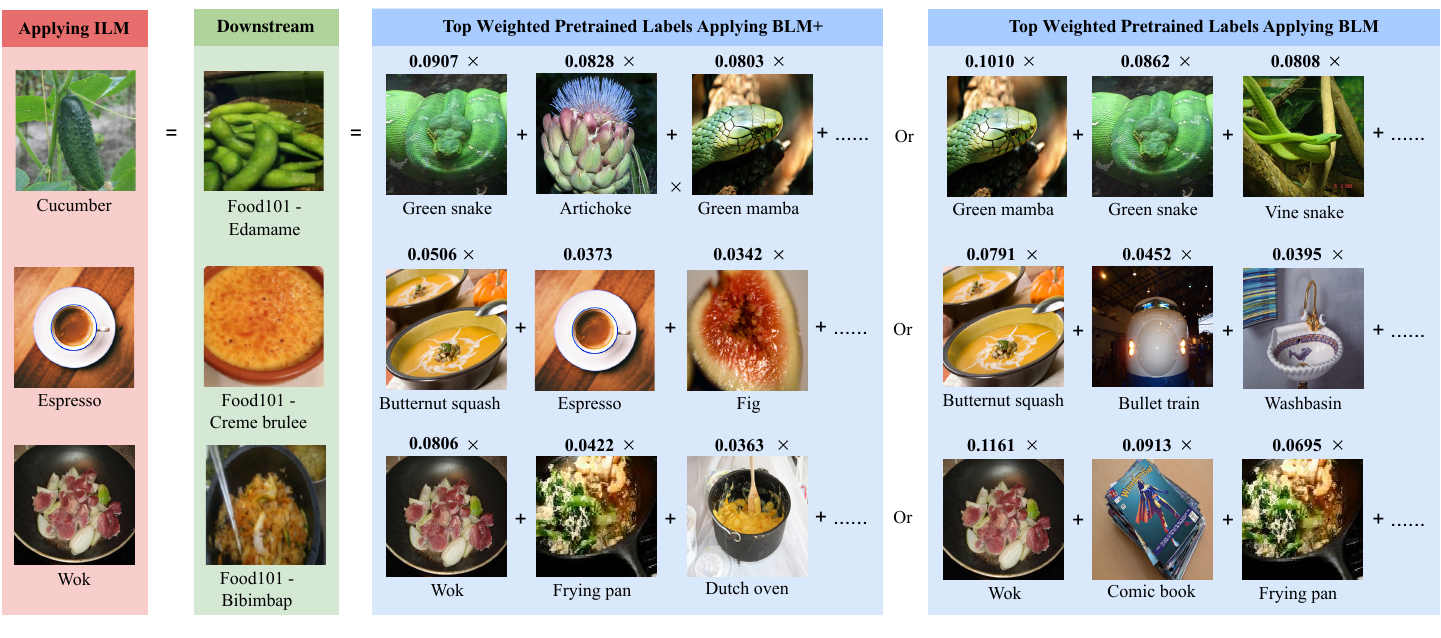}
    \caption{Label mapping results of ILM, BLM, BLM+ for VR on Food101 dataset.}
    \label{fig:food101_resnet18}
\end{figure}

\begin{figure}[ht]
    \centering
    \includegraphics[width=\textwidth]{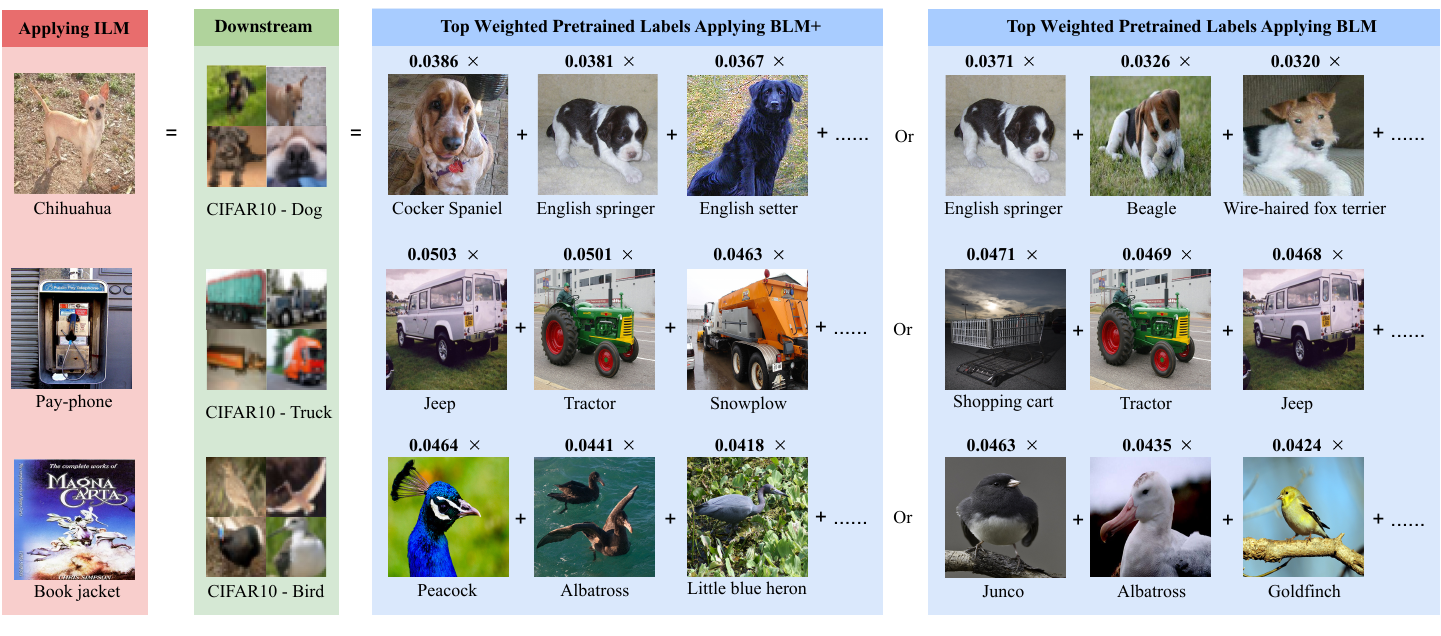}
    \caption{Label mapping results of ILM, BLM, BLM+ for VR on CIFAR10 dataset.}
    \label{fig:cifar10_resnet18}
\end{figure}

Figures \ref{fig:flowers102_resnet18}-\ref{fig:cifar10_resnet18} illustrate the visualization results of label mapping using ILM (one-to-one mapping), BLM, and BLM+ for VR on various datasets with pretrained ResNet-18. For BLM and BLM+, the top three contributing pretrained labels corresponding to the downstream label are presented, along with their respective weights.

\textbf{Results when the pretrained and downstream labels exhibit appearance resemblance.} Figures \ref{fig:flowers102_resnet18} and \ref{fig:food101_resnet18} respectively depict the outcomes on Flowers102 and Food101 datasets, each about classification tasks of various flowers and food. BLM+ is adept at assigning higher weights to pretrained labels with a greater resemblance to the downstream label in terms of \textit{color}, \textit{shape}, and \textit{intricate features}. In terms of \textit{color}, as evidenced in Figure \ref{fig:food101_resnet18}, the top-weighted labels for `Edamame' comprise `Green Snake', `Artichoke', and `Green Mamba', all sharing a green hue. Regarding \textit{shape}, in Figure \ref{fig:flowers102_resnet18}, the `Gazania' with petal stripes corresponds to top weighted labels such as `Banana' and `Zucchini', which exhibit similar striping patterns. As for \textit{intricate features}, in Figure \ref{fig:flowers102_resnet18}, the `Globe Thistle' with needle-like appearance aligns with top weighted labels including `Sea Urchin', `Porcupine', and `Cardoon', which possess akin prickly characteristics.

\textbf{Results when the pretrained and downstream labels exhibit similarities in texture.} Figure \ref{fig:dtd_resnet18} presents the results on the DTD dataset, which pertains to the classification of various textures. Both BLM+ and BLM assign higher weights to labels sharing akin textures. For example, `Spiralled' corresponds to top-weighted labels embodying spiral-shaped entities such as `Snail', `Centipede', and `Coil', while `Fibrous' aligns with entities possessing a rough and fibrous texture, including `Hay', `Komondor', and `Porcupine'.

\textbf{Results when the pretrained and downstream labels exhibit similarities in backgrounds.} Figure~\ref{fig:ucf101_resnet18} illustrates the results on the UCF101 dataset, a dataset for action classification. In this task, both BLM and BLM+ tend to assign higher weights to pretrained labels with backgrounds or environments akin to the downstream labels. For example, the action `Apply Lipstick' often involves the presence of a human face; hence, pretrained labels such as applying `Lipstick', eating `Ice Lolly', and spraying `Hair Spray' contribute significantly. Likewise, labels closely associated with `Baseball pitch' include `Ballplayer' and `Scoreboard', featuring backgrounds of vast grass fields.

\textbf{Results when pretrained and downstream labels exhibit inclusion relationship.} Figure \ref{fig:cifar10_resnet18} illustrates the results on the CIFAR10 dataset, which comprises images broadly categorized into ten main classes, with each category corresponding to several subcategories within the pretrained domain. It is noted that unlike the singular class selection of ILM, both BLM and BLM+ allocate similar weights to multiple subcategories. For example, `Dog' corresponds to different breeds such as `Cocker Spaniel', `English Springer', and `English Setter', while `Bird' encompasses subcategories including `Peacock', `Albatross', and `Little Blue Heron'. Hence, the learning framework of BLM and BLM+ demonstrates effective handling of the inclusion relationship between label spaces.

\section{Applications on Vision-Language Models}
\label{app:vl}

\begin{figure}[ht]
    \centering
    \includegraphics[width=\textwidth]{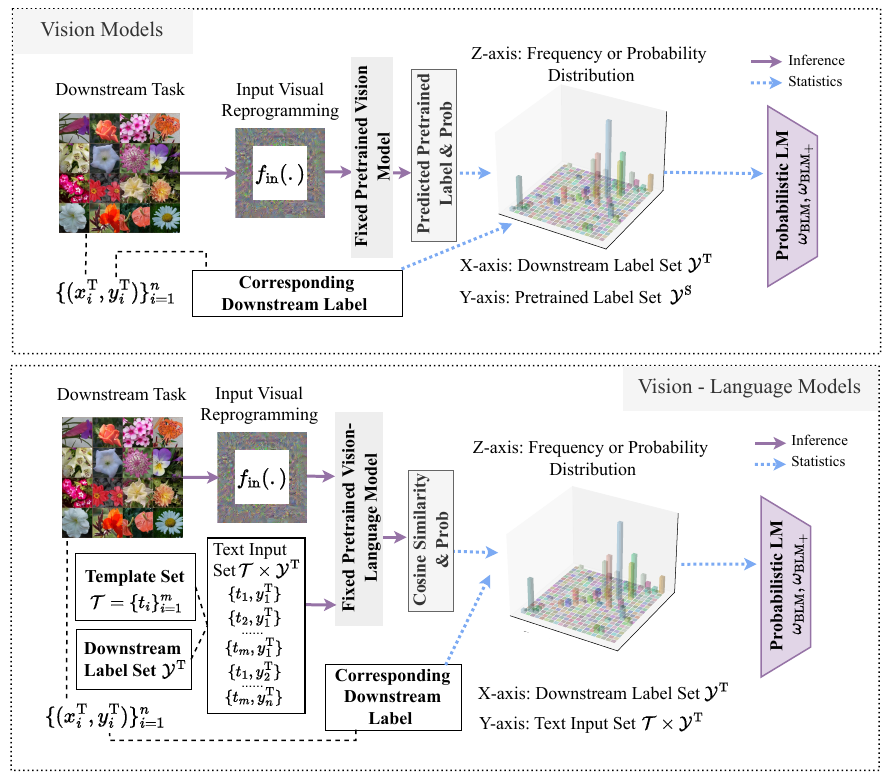}
    \caption{The framework of learning $\omega_{\rm BLM}$ or $\omega_{\rm BLM+}$ for pretrained vision models (upper) or VLMs (lower). As described in Section \ref{sec:method}, for vision models, $\omega_{\rm BLM}$ or $\omega_{\rm BLM+}$ is derived from the frequency distribution (in BLM) or probability aggregation matrix (in BLM+) where pairs of [predicted pretrained label, ground-truth downstream label] are calculated. Nevertheless, for VLMs, the predicted pretrained label is replaced by possible text inputs from the Cartesian product of the downstream label set, and the prompt set. The cosine similarities of images and text embedding are calculated in VLMs to replace the output logits in vision models.}
    \label{fig:VL_framework}
\end{figure}
\subsection{Learning Framework}
The distinction between \textit{Vision-Language Models} (VLM) and vision models lies in (1) vision models take a single image as input, whereas VLMs take a pair of text and images as input; and (2) vision models have fixed pretrained labels, with model outputs being logits, while VLMs lack pretrained labels, with model outputs being the cosine similarity \citep{cosine_simi} between images and text embeddings. As a result, when applying BLM and BLM+ to VLM, it is necessary to design an input text set to replace the original pretrained labels in vision models.

In this paper, we follow a previous work \citep{chen2023understanding} and construct the input texts set by taking the Cartesian product \citep{Cartesian} of the downstream label set and the prompt set. BLM and BLM+ can be applied to compute the joint frequency distribution (for BLM) or aggregated predicted probability (for BLM+) between the input texts and the downstream ground-truth labels. This enables the mapping from candidate input texts to probable classification results. 

The full process of learning $\omega_{\rm BLM}, \omega_{\rm BLM+}$ for vision models or VLMs is illustrated in Figure \ref{fig:VL_framework}. Besides, the pipeline and algorithm details are the same as BLM and BLM+ for vision models shown in Figure \ref{fig:framework}, Algorithm \ref{alg:blm} and \ref{alg:blmpp}.

\subsection{Performance Results}
\begin{table}[ht]
\caption{Performance comparison on VLMs (mean \% ± std \%)}
\begin{center}
\begin{tabularx}{\textwidth}{>{\centering\arraybackslash}X|>{\centering\arraybackslash}X>{\centering\arraybackslash}X|>{\centering\arraybackslash}X>{\centering\arraybackslash}X}
\toprule
CLIP (ViT-B32)         & \multicolumn{2}{c|}{Baseline}  & \multicolumn{2}{c}{Ours}              \\
\midrule
Method       & None              & One-to-one Mapping & BLM               & BLM+             \\
\midrule
Flowers102   & 70.5\fontsize{8pt}{5pt}\selectfont{±0.7}          & 75.5\fontsize{8pt}{5pt}\selectfont{±1.0}  & \textbf{76.9}\fontsize{8pt}{5pt}\selectfont{±1.9} & 76.4\fontsize{8pt}{5pt}\selectfont{±1.5}          \\
DTD          & 61.4\fontsize{8pt}{5pt}\selectfont{±0.6}          & 59.5\fontsize{8pt}{5pt}\selectfont{±1.1}  & 60.9\fontsize{8pt}{5pt}\selectfont{±0.9}          & \textbf{61.5}\fontsize{8pt}{5pt}\selectfont{±0.3} \\
UCF101       & 67.5\fontsize{8pt}{5pt}\selectfont{±0.1}          & 67.9\fontsize{8pt}{5pt}\selectfont{±0.6}  & 72.2\fontsize{8pt}{5pt}\selectfont{±0.2}          & \textbf{72.3}\fontsize{8pt}{5pt}\selectfont{±0.4} \\
Food101      & 79.2\fontsize{8pt}{5pt}\selectfont{±0.2}          & 78.1\fontsize{8pt}{5pt}\selectfont{±0.3}  & 79.3\fontsize{8pt}{5pt}\selectfont{±0.1}          & \textbf{79.4}\fontsize{8pt}{5pt}\selectfont{±0.1} \\
GTSRB        & 91.4\fontsize{8pt}{5pt}\selectfont{±0.4}          & 91.3\fontsize{8pt}{5pt}\selectfont{±0.2}  & \textbf{91.5}\fontsize{8pt}{5pt}\selectfont{±0.2} & 90.9\fontsize{8pt}{5pt}\selectfont{±1.0}          \\
EuroSAT      & \textbf{96.6}\fontsize{8pt}{5pt}\selectfont{±0.1} & 96.5\fontsize{8pt}{5pt}\selectfont{±0.1}  & 96.3\fontsize{8pt}{5pt}\selectfont{±0.1}          & 96.3\fontsize{8pt}{5pt}\selectfont{±0.1}          \\
OxfordPets   & 88.4\fontsize{8pt}{5pt}\selectfont{±0.1}          & 86.8\fontsize{8pt}{5pt}\selectfont{±0.6}  & 88.6\fontsize{8pt}{5pt}\selectfont{±0.5}          & \textbf{89.0}\fontsize{8pt}{5pt}\selectfont{±0.4} \\
StanfordCars & 57.9\fontsize{8pt}{5pt}\selectfont{±0.1}          & 55.8\fontsize{8pt}{5pt}\selectfont{±0.1}  & 59.8\fontsize{8pt}{5pt}\selectfont{±0.7}          & \textbf{60.3}\fontsize{8pt}{5pt}\selectfont{±0.2} \\
SUN397       & 61.4\fontsize{8pt}{5pt}\selectfont{±0.2}          & 60.6\fontsize{8pt}{5pt}\selectfont{±0.1}  & 63.1\fontsize{8pt}{5pt}\selectfont{±0.2}          & \textbf{63.8}\fontsize{8pt}{5pt}\selectfont{±0.2} \\
CIFAR10      & 94.0\fontsize{8pt}{5pt}\selectfont{±0.2}          & 94.1\fontsize{8pt}{5pt}\selectfont{±0.1}  & \textbf{94.2}\fontsize{8pt}{5pt}\selectfont{±0.2} & 94.1\fontsize{8pt}{5pt}\selectfont{±0.3}          \\
CIFAR100     & 75.1\fontsize{8pt}{5pt}\selectfont{±0.2}          & 74.8\fontsize{8pt}{5pt}\selectfont{±0.1}  & 75.4\fontsize{8pt}{5pt}\selectfont{±0.5}          & \textbf{75.5}\fontsize{8pt}{5pt}\selectfont{±0.3} \\
SVHN         & 91.3\fontsize{8pt}{5pt}\selectfont{±0.2}          & 91.3\fontsize{8pt}{5pt}\selectfont{±0.2}  & 91.5\fontsize{8pt}{5pt}\selectfont{±0.1}          & \textbf{91.7}\fontsize{8pt}{5pt}\selectfont{±0.1} \\
\midrule
Average & 77.9 & 77.7 & 79.1 & \textbf{79.3} \\
\bottomrule
\end{tabularx}
\end{center}
\label{Tab:CLIP}
\end{table}

\begin{figure}[ht]
    \centering
    \includegraphics[width=\textwidth]{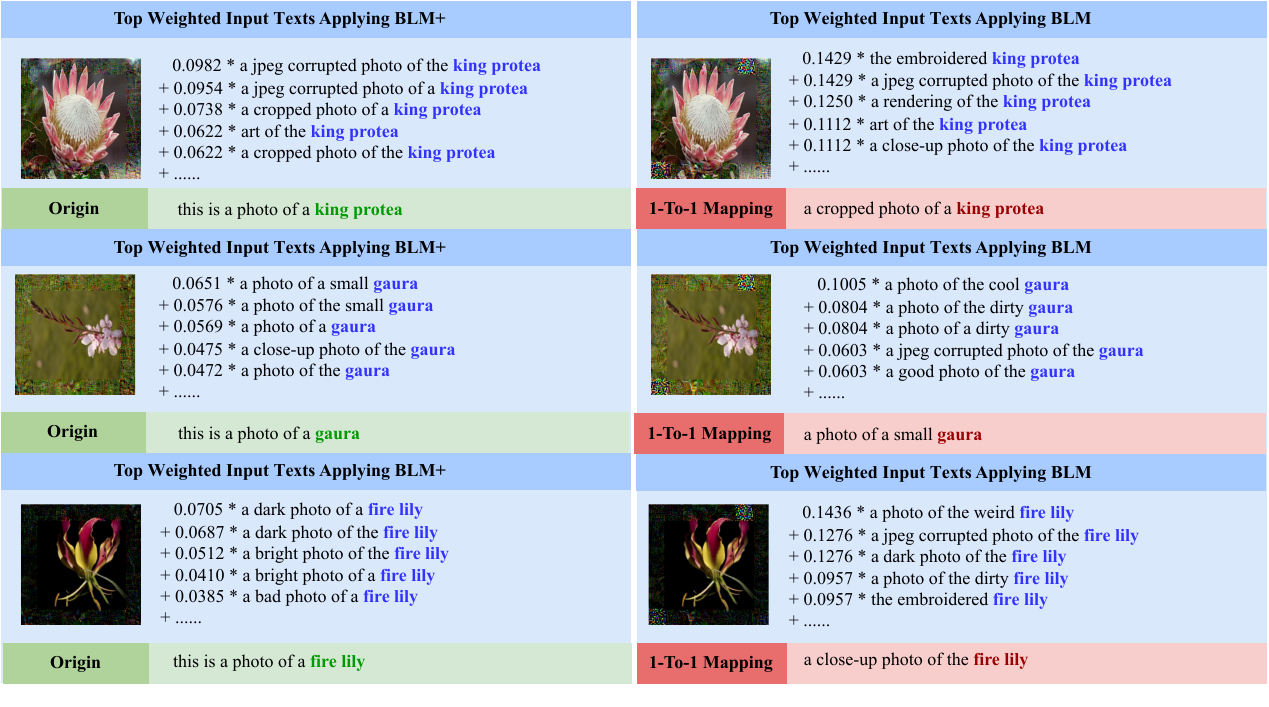}
    \caption{Results of ILM, BLM, BLM+ for VR on Flowers102 dataset.}
    \label{fig:flowers102_clip}
\end{figure}

Table \ref{Tab:CLIP} presents the performance of BLM and BLM+ applied to VLMs across 12 datasets. For a fair comparison, we follow the previous work \citep{bahng2022exploring} to employ CLIP as the pretrained model and a watermarking-based VR with an outer frame size of 30. We utilized an initial learning rate of 40 and a Cosine Annealing learning rate schedule \citep{loshchilov2016sgdr}, with a total of 200 epochs. An SGD optimizer with a momentum of 0.9 was employed for learning the Input VR. Results without label mapping (denoted by `None') and one-to-one mapping served as the baseline, and the average accuracy was computed over three different random seeds.

From the performance results, it can be observed that except for the EuroSAT dataset, which has a small number of classes and simpler tasks (this limitation will be discussed in detail in Appendix \ref{app:limit}), BLM or BLM+ achieves improvements across all other tasks. They achieve the average accuracy of 79.1\% and 79.3\%, respectively, without increasing the number of model parameters to be trained. This empirical evidence demonstrates that BLM and BLM+ can also be effectively applied to VLMs.

\subsection{Visualization Results}

\begin{figure}[ht]
    \centering
    \includegraphics[width=\textwidth]{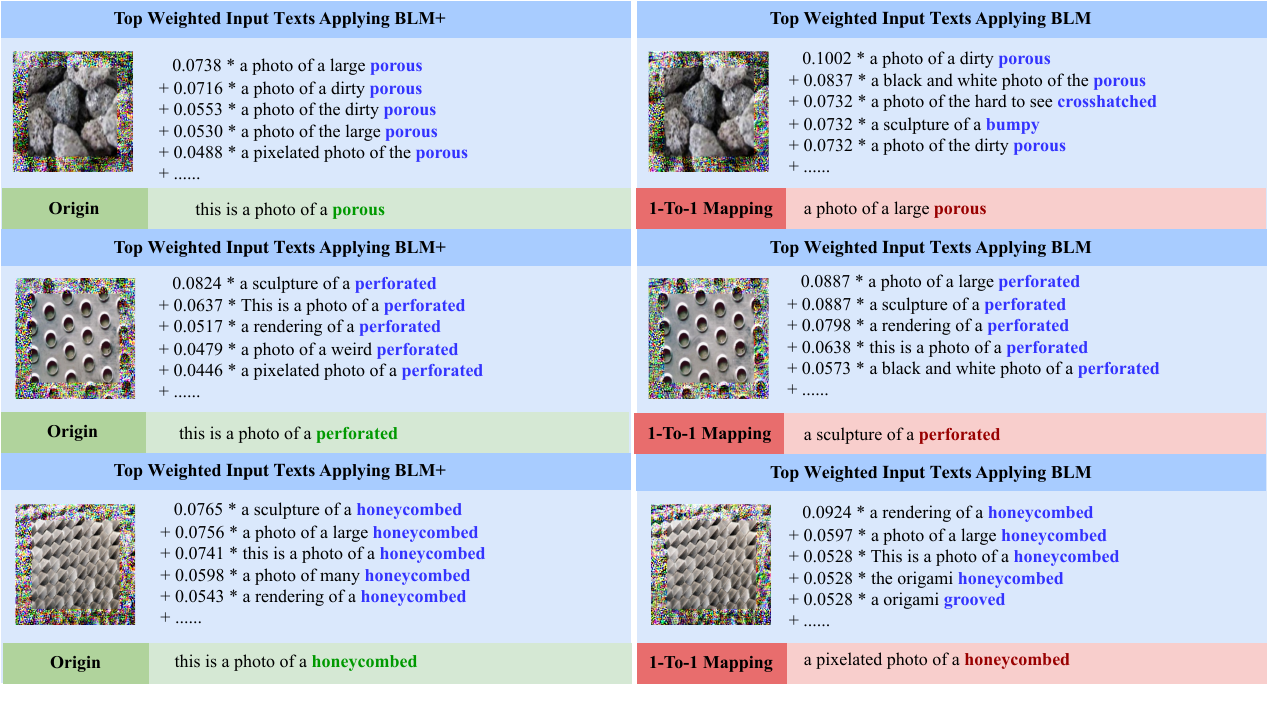}
    \caption{Results of ILM, BLM, BLM+ for VR based on CLIP on DTD Dataset.}
    \label{fig:dtd_clip}
\end{figure}

\begin{figure}[ht]
    \centering
    \includegraphics[width=\textwidth]{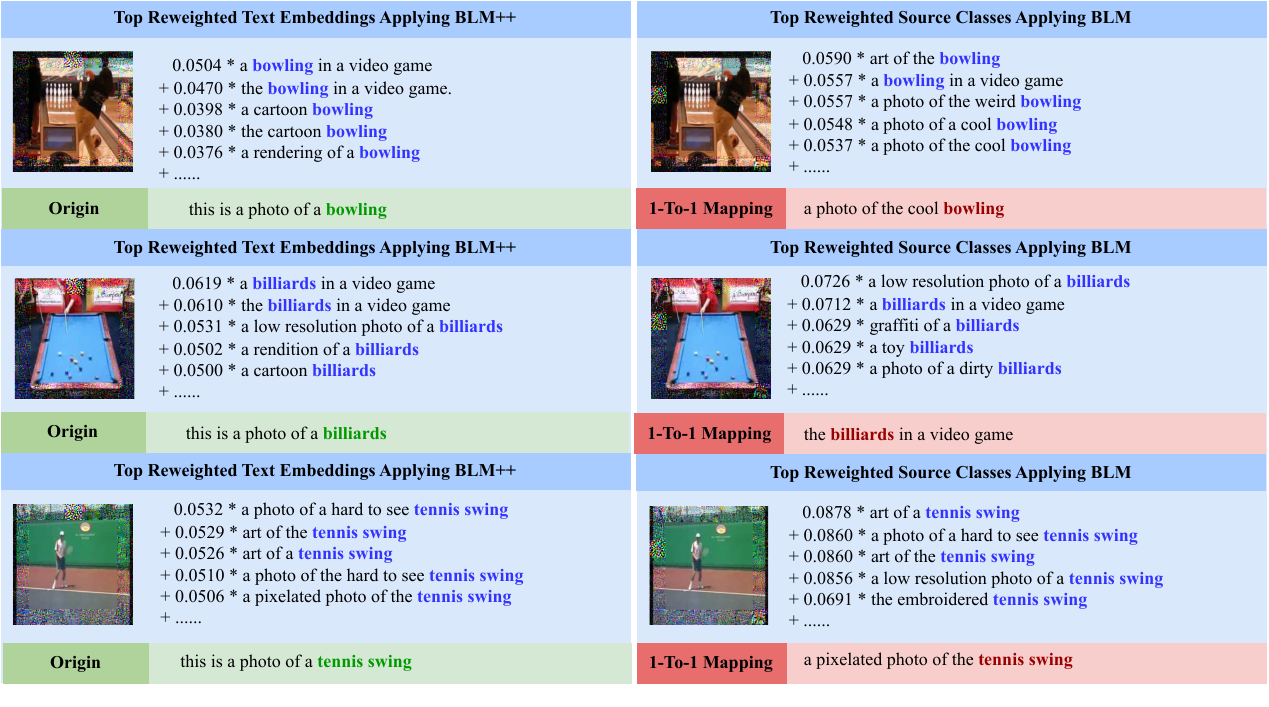}
    \caption{Results of ILM, BLM, BLM+ for VR based on CLIP on UCF101 dataset.}
    \label{fig:ucf101_clip}
\end{figure}

\begin{figure}[ht]
    \centering
    \includegraphics[width=\textwidth]{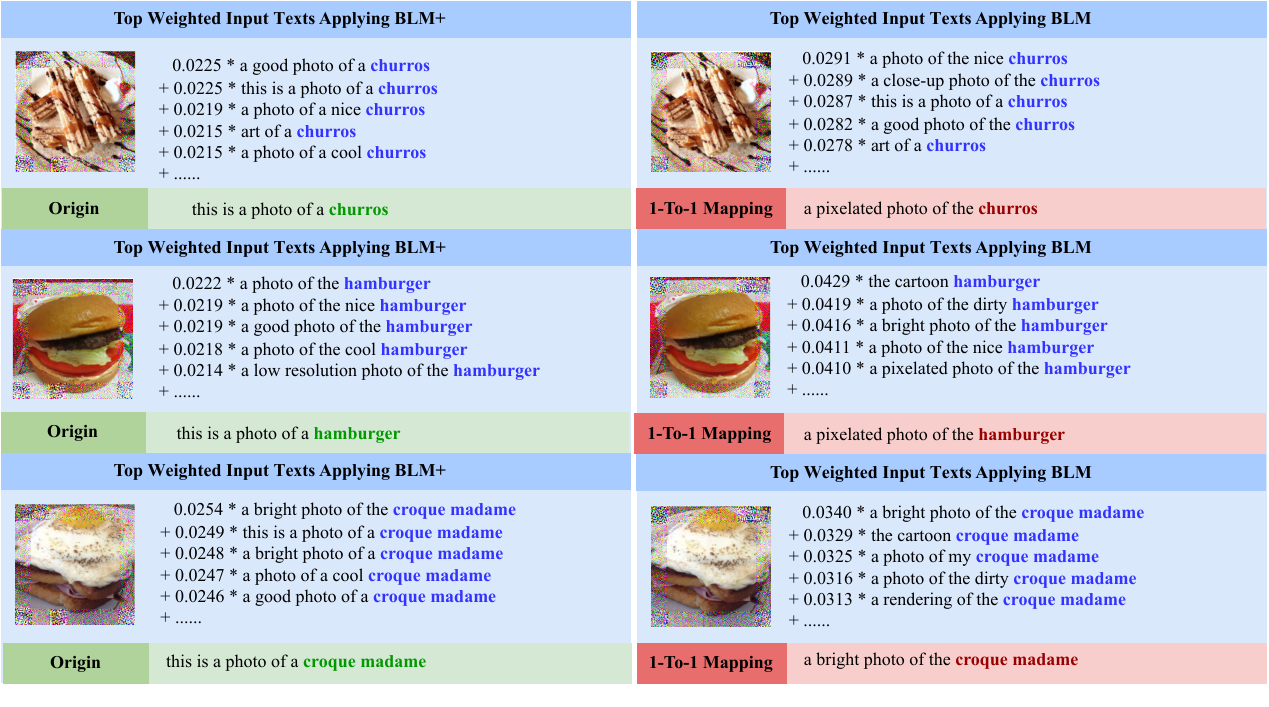}
    \caption{Results of ILM, BLM, BLM+ for VR based on CLIP on Food101 dataset.}
    \label{fig:food101_clip}
\end{figure}

\begin{figure}[ht]
    \centering
    \includegraphics[width=\textwidth]{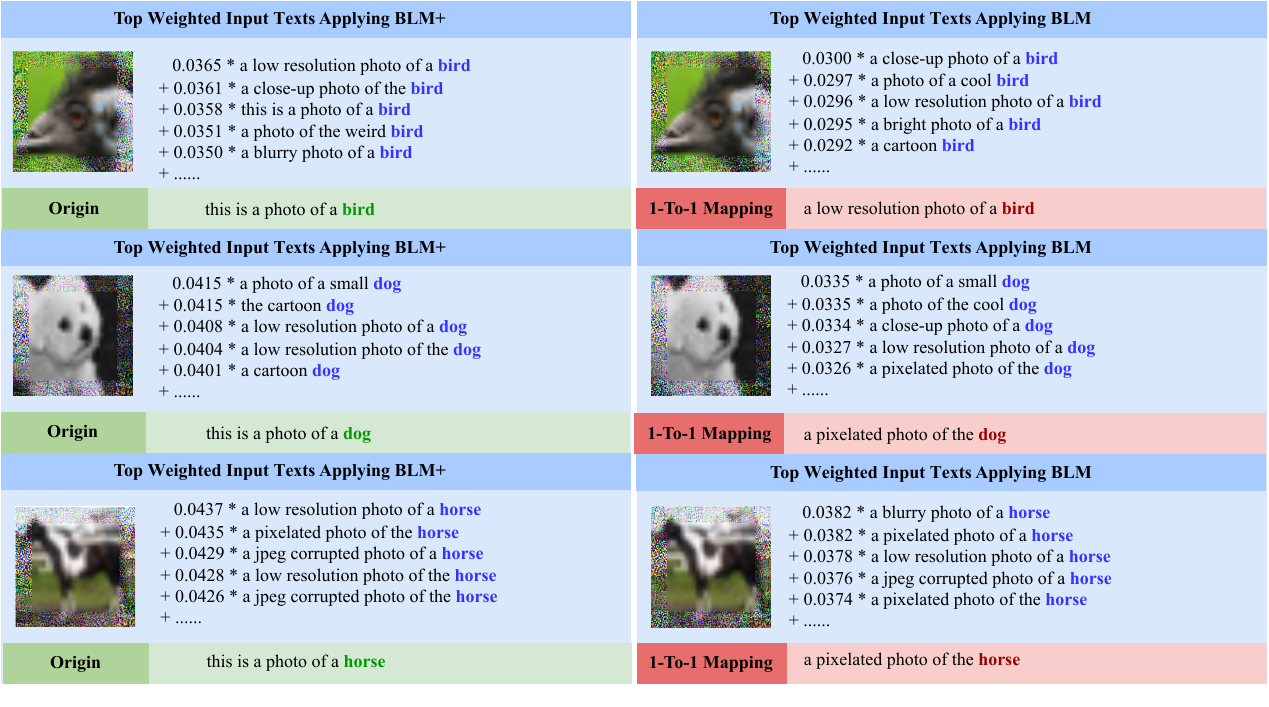}
    \caption{Results of ILM, BLM, BLM+ for VR based on CLIP on CIFAR10 dataset.}
    \label{fig:cifar10_clip}
\end{figure}

Figures \ref{fig:flowers102_clip}-\ref{fig:cifar10_clip} show the visualization results of top-weighted input texts on different datasets applying BLM and BLM+. It is evident that, unlike the single optimal text input in one-to-one mapping, BLM and BLM+ assign different weights to many possible descriptions. For example, in CIFAR10, an image of a bird may be described as `a low-resolution photo of a bird', `a close-up photo of the bird', or `this is a photo of a bird', among others. Such methods affirm different expressions instead of only one description using one-to-one LM. 

These experiments further demonstrate that BLM and BLM+ can be used to enhance the performance of input VR in VLMs while providing reasonable explanations for why input VR in VLMs can effectively work.

\end{document}